\newcommand{\shortparagraph}[1]{\textbf{#1}}
\newif\ifarxiv
\newcommand{\wdim}{{W}}
\newcommand{\ydim}{{Y}}
\newcommand{\vecdimC}{{V_\bc}}
\newcommand{\vecdimY}{{V_\by}}
\newcommand{\masked}{M}
\newcommand{\unmasked}{U}
\newcommand{\methodnames}{neurosymbolic diffusion models\xspace}
\newcommand{\methodname}{neurosymbolic diffusion model\xspace}
\newcommand{\MethodNames}{Neurosymbolic Diffusion Models\xspace}
\newcommand{\Methodnames}{Neurosymbolic diffusion models\xspace}
\newcommand{\methodshorts}{{\small\textsc{NeSyDMs}}\xspace}
\newcommand{\methodshort}{{\small\textsc{NeSyDM}}\xspace}
\newcommand\blfootnote[1]{
    \begingroup
    \renewcommand\thefootnote{}\footnote{#1}
    \addtocounter{footnote}{-1}
    \endgroup
}
\newcommand{\w}{c}
\newcommand{\y}{y}
\newcommand{\m}{\mathrm{m}}
\renewcommand{\bw}{\bc}
\renewcommand{\bW}{\bC}
\renewcommand{\wdim}{{C}}
\newcommand{\varsamples}{K}
\newcommand{\testsamples}{L}
\newcommand{\losssamples}{S}
\newcommand{\testdiscretization}{T}
\title{Neurosymbolic Diffusion Models}
\author{Emile van Krieken$^{1}$ \quad Pasquale Minervini$^{1,2,*}$ \quad Edoardo Ponti$^{1,*}$ \quad Antonio Vergari$^{1,*}$ \\
\footnotetext{*: Equal supervision contribution}
$^{1}$School of Informatics, University of Edinburgh %
\qquad $^{2}$Miniml.AI\\
\href{mailto:e.van.krieken@vu.nl}{e.van.krieken@vu.nl}, \{\href{mailto:p.minervini@ed.ac.uk}{p.minervini}, \href{mailto:eponti@ed.ac.uk}{eponti}, \href{mailto:avergari@ed.ac.uk}{avergari}\}@ed.ac.uk}
\begin{document}

\maketitle

\begin{abstract}
    \blfootnote{*: Shared supervision. \\ Code is available at \url{https://github.com/HEmile/neurosymbolic-diffusion}.}
Neurosymbolic (NeSy) predictors combine neural perception with symbolic reasoning to solve tasks like visual reasoning.
However, standard NeSy predictors assume conditional independence between the symbols they extract, thus limiting their ability to model interactions and uncertainty --- often leading to overconfident predictions and poor out-of-distribution generalisation.
To overcome the limitations of the independence assumption, we introduce \emph{\methodnames} (\methodshorts), a new class of NeSy predictors that use discrete diffusion to model dependencies between symbols.
Our approach reuses the independence assumption from NeSy predictors at each step of the diffusion process, enabling scalable learning while capturing symbol dependencies and uncertainty quantification.
Across both synthetic and real-world benchmarks — including high-dimensional visual path planning and rule-based autonomous driving — \methodshorts achieve state-of-the-art accuracy among NeSy predictors and demonstrate strong calibration.
\end{abstract}

\section{Introduction}
\label{sec:intro}
Neurosymbolic (NeSy) methods aim to develop reliable and interpretable AI systems by augmenting neural networks with symbolic reasoning \citep{feldstein2024mapping,van2019boxology,garcez2023neurosymbolic}. 
In particular, \emph{probabilistic neurosymbolic predictors} \citep{marconato2025symbolgroundingneurosymbolicai,marra2024statistical,manhaeveNeuralProbabilisticLogic2021,van2023nesi} learn neural networks that extract high-level symbols, also called \emph{concepts}, from raw inputs. 
These concepts are latent variables used in interpretable symbolic programs to reason and predict output labels.
However, recent work highlights that the reliability of NeSy predictors is not guaranteed, especially under certain common architectural choices.

More specifically, in many real-world settings, NeSy predictors fail silently:
they can learn the wrong concepts while achieving high accuracy on output labels \citep{giunchiglia2023road,delong2024mars}.
This issue arises when the data and program together admit multiple concept assignments that are indistinguishable \citep{marconatoNotAllNeuroSymbolic2023,marconato2025symbolgroundingneurosymbolicai}. 
How do we design NeSy predictors that handle this ambiguity? 
Marconato et al. \citep{marconatoBEARSMakeNeuroSymbolic2024} argued that NeSy predictors should express uncertainty over the concepts that are consistent with the data.
Then, uncertainty can guide user intervention, inform trust, or trigger data acquisition when the model is uncertain \citep{marconatoBEARSMakeNeuroSymbolic2024}.

However, most existing NeSy predictors cannot properly model this uncertainty, as they rely on neural networks that assume \emph{(conditional) independence} between concepts \citep{van2023nesi,xuSemanticLossFunction2018,badreddineLogicTensorNetworks2022}.
While this assumption enables efficient probabilistic reasoning \citep{ahmedSemanticProbabilisticLayers2022,xuSemanticLossFunction2018,van2023nesi,smet2023differentiable}, it also prevents these NeSy predictors from being aware of concept ambiguity and thus reliably generalising out-of-distribution \citep{van2024independence,pmlr-v284-krieken25a}.
Therefore, designing expressive, scalable and reliable NeSy predictors is an open problem.

To fill this gap, we design \emph{\methodnames} (\methodshorts). 
\methodshorts are the first class of diffusion models that operate over the concepts of a NeSy predictor in conjunction with symbolic programs.
In theory, discrete diffusion models \citep{austinStructuredDenoisingDiffusion2023,sahoosimple} are particularly suited for NeSy predictors, as each step of their denoising process involves predicting a discrete distribution that fully factorises.
We use this \textit{local} independence assumption to profit from the insights and machinery of classical NeSy predictors, while modelling concepts as dependent entities \textit{globally}. 
In practice, designing a diffusion process for NeSy predictors is highly non-trivial, as it requires dealing with a symbolic program and marginalising over all possible concepts, a task that is intractable in general.
We show how to solve both aspects effectively by devising a novel continuous-time loss function for diffusion that incorporates symbolic programs, 
 for which training scales gracefully. 
\begin{figure}
\centering
    \includegraphics[width=.9\textwidth, trim=0 0 20 0,clip]{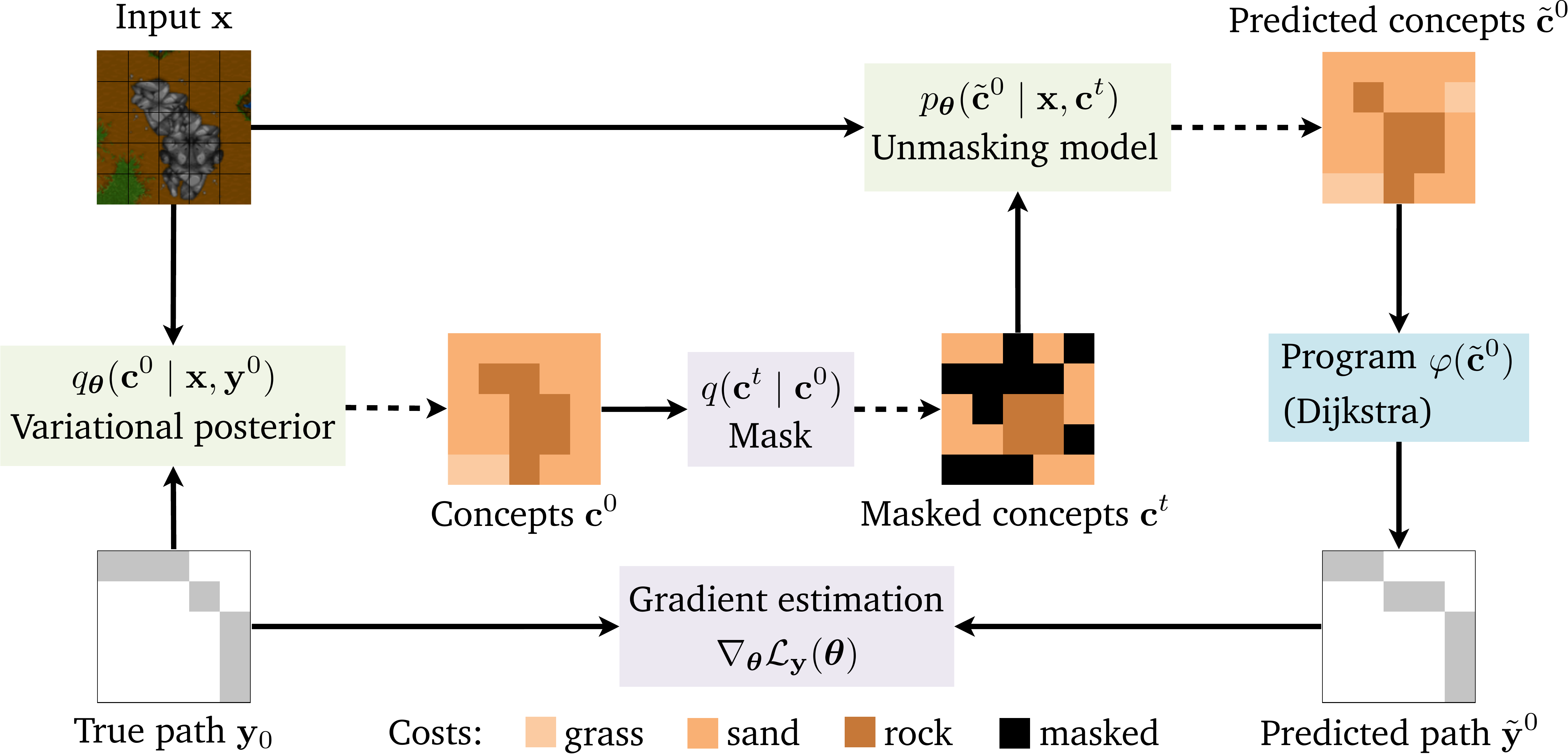}
    \caption{\textbf{\methodshorts integrate masked diffusion models (orange boxes) with symbolic programs (blue box)} to learn to predict the minimum cost path in a visual path-planning task. 
    A variational posterior (\cref{sec:variational_posterior}) first obtains a candidate concept $\bw^0$, that represents the costs of traversing each cell of the grid. 
    Then, we partially mask $\bw^0$ using the masking process $q(\bw^s\mid \bw^0)$ to obtain masked concepts $\bw^{\frac{1}{2}}$. 
    We feed this to the discrete diffusion model's \emph{unmasking model} $p_\btheta(\tilde{\bw}\mid \bx, \bw^{\frac{1}{2}})$ to predict the unmasked concepts $\tilde{\bw}^0$. 
    We use the symbolic program $\varphi$, which we choose as Dijkstra's algorithm, to map the predicted concepts $\tilde{\bw}^0$ to the predicted path $\tilde{\by}^0$. 
    Finally, we use gradient estimation to update the parameters of the unmasking model. 
    Dotted arrows denote samples from a distribution.
    }
    \label{fig:main_fig}
\end{figure}

\shortparagraph{Contributions.}
After discussing the background on NeSy predictors and (masked) diffusion models in \cref{sec:nesy-background}, 
we \textbf{(c1)} introduce \methodshorts in \cref{sec:method}, a class of scalable NeSy predictors that model concept dependencies by formalising a \emph{masked diffusion process} \citep{sahoosimple}.
Then in \cref{sec:elbo}, we \textbf{(c2)} derive a principled loss function for \methodshorts and present an efficient gradient estimator for training it. 
To derive this loss, we prove that the continuous-time losses of masked diffusion models extend to non-factorised distributions.
Finally, in \cref{sec:experiments}, we \textbf{(c3)} empirically show that \methodshorts are (i) both calibrated and performant on tasks from the RSBench suite of visual reasoning problems \citep{bortolottineuro}  while (ii) scaling beyond the state-of-the-art on the complex visual path-planning task \citep{poganvcic2019differentiation}.

\section{Background}
\label{sec:nesy-background}
\subsection{Neurosymbolic predictors} 
We aim to learn a parametrised predictive model $p_\btheta(\by\mid\bx)$ that maps high-dimensional inputs $\bx$ to $\ydim$-dimensional discrete labels $\by\in [\vecdimY ]^{\ydim}$, where each label can take a value in $[\vecdimY ]=\{1, 2, \dots \vecdimY \}$.
A typical \emph{(probabilistic) NeSy predictor} implements $p_\btheta(\by\mid\bx)$ by first (i) using a \textit{concept extractor}, i.e., a neural network $p_\btheta(\bw\mid\bx)$ that maps the input $\bx$ to a $\wdim$-dimensional vector of symbolic \emph{concepts} $\bw\in [\vecdimC]^{\wdim}$,
i.e., 
discrete variables encoding high-level information that can take $V$ values.\footnote{For simplicity of presentation, we assume that the number of possible values $V$ is the same for both concepts and labels, but this is not necessary for the paper.}
Then, (ii) the NeSy predictor maps concepts $\bw$ through a program $\varphi:[\vecdimC]^{\wdim}\to [\vecdimY ]^{\ydim}$ to obtain output predictions $\hat{\by}$.
As usual in NeSy \citep{manhaeveNeuralProbabilisticLogic2021,van2023nesi,li2023scallop,badreddineLogicTensorNetworks2022}, we only assume access to training data for input-output pairs $(\bx, \by)$ but no labelled data for concepts $\bw$, i.e., concepts $\bw$ are \textit{latent variables}.
Formally, we define the predictor $p_\btheta(\by\mid\bx)$ by marginalising over all concepts $\bw\in \vecdimC^{\wdim}$ that are consistent with the output $\by$, summing their probability masses:
\begin{align}
	\label{eq:neurosymbolic_predictor}
	p_\btheta(\by\mid\bx) := \sum_{\bw} p_\btheta(\bw\mid\bx) \mathbbone[\varphi(\bw) = \by].
\end{align}
The equation above is also known as computing a conditional \emph{weighted model count} (WMC), and it is central to several probabilistic neurosymbolic methods \citep{manhaeveNeuralProbabilisticLogic2021,ahmedSemanticProbabilisticLayers2022,xuSemanticLossFunction2018,van2023nesi,li2023scallop}. 

\begin{example}[\cite{pogancicDifferentiationBlackboxCombinatorial2020}]
    \label{example:path-planning}
	Consider the visual path-planning task in \cref{fig:main_fig} where the task is to predict a minimum cost path $\by$ from the top-left corner to the bottom-right corner of the visual map $\bx$. 
    $\by$ is encoded as a binary matrix, where cells traversed form a path. 
	A neural network extracts concepts that represent discrete costs $\bw$ for each cell on the grid, then a search algorithm $\varphi(\bw)$, like Dijkstra, is used to find the shortest path $\by$ according to costs $\bw$. 
\end{example}

\shortparagraph{Reasoning shortcuts.} Recent work proved NeSy predictors are susceptible to \emph{reasoning shortcuts}~\citep[RSs;][]{marconatoNotAllNeuroSymbolic2023}, which is when a model $p_\btheta(\by\mid\bx)$ learns to predict the output labels $\by$ correctly given the input $\bx$, but incorrectly maps inputs to concepts $\bc$.
Since we cannot catch RSs on the training data, it can dramatically harm model performance on unseen data \citep{marconato2023neuro}. 
Mitigating RSs is challenging and potentially costly \citep{marconatoNotAllNeuroSymbolic2023,marconato2025symbolgroundingneurosymbolicai}.
However, models can be made \textit{aware of their RS} by properly expressing uncertainty over all concepts that are consistent with the input-output mapping, improving reliability and generalisation \citep{marconatoBEARSMakeNeuroSymbolic2024,pmlr-v284-krieken25a,marconato2025symbolgroundingneurosymbolicai}. 
Then we can, for example, deploy NeSy predictors in an active learning setting where uncertain concepts are queried for extra labelling.

\begin{example}
\label{example:xor}
Consider an input $\bx$ containing two MNIST digits that are either 0 or 1.
The unseen concepts $\bw$ are the digits, and $\varphi(\bw)$ returns 1 if the two digits are different, otherwise 0.
A neural concept extractor $p_\btheta(\bw\mid\bx)$ that maps MNIST digits of 0 to 1s and MNIST digits of 1s to 0s will perfectly fit the input-output mapping. 
\end{example}

The configuration in \cref{example:xor} maximises \cref{eq:neurosymbolic_predictor} without learning the ground-truth concepts.
Given only input-output pairs, it is not possible to distinguish this RS from the correct input-concept mapping. 
Instead, given ground-truth concepts $\bw^*=(0, 1)$, an RS-aware model would assign some belief to both options $(0, 1)$ and $(1, 0)$.

\shortparagraph{Independence assumption and its limitations.}  
Unfortunately, in practice, the vast majority of NeSy predictors make an architectural assumption that prevents RS awareness: the \textit{conditional independence} of concepts $\bw$ given inputs $\bx$ \citep{xuSemanticLossFunction2018,van2023nesi,li2023scallop}.
Formally, this assumption implies that $p_\btheta(\bw\mid\bx)$ in \cref{eq:neurosymbolic_predictor} factorises as $\prod_{i=1}^{\wdim} p_\btheta(\w_i\mid\bx)$.
NeSy predictors use this assumption to perform efficient probabilistic reasoning via WMC solvers and \textit{knowledge compilation} techniques \citep{darwiche2024knowledge,oztok2015top,chen2025neural}, or by developing efficient approximation algorithms \citep{van2023nesi,smet2023differentiable}.

Recent work proved that such models cannot simultaneously represent the relevant uncertainty over different concepts while maximising \cref{eq:neurosymbolic_predictor} \citep{pmlr-v284-krieken25a}. 
To see why, consider \cref{example:xor}, with true concepts $\bw^*=(0, 1)$. 
The only maximisers of \cref{eq:neurosymbolic_predictor} for the independent model are to either deterministically return $(0, 1)$ or $(1, 0)$ \citep{van2024independence,pmlr-v284-krieken25a}. 
However, there is no maximiser that can simultaneously assign probability mass to \emph{both cases}, meaning independent models cannot be RS-aware. 
To overcome this limitation, we should design a NeSy predictor that can express dependencies between concepts, which we address next.

\subsection{Which expressive model class for NeSy?} 
\label{sec:masked_diffusion}
Previous work on NeSy predictors without the independence assumption explored mixture models and their generalisation as probabilistic circuits \citep{choi2020probabilistic,ahmedSemanticProbabilisticLayers2022}.
An example is BEARS \citep{marconatoBEARSMakeNeuroSymbolic2024}, which is specifically designed for RS-awareness.  
A related approach is to add extra variables and constraints to the WMC. 
This can, for instance, be done using a probabilistic programming language \citep{manhaeveDeepProbLogNeuralProbabilistic2018,li2023scallop}. 
However, these methods require (i) compiling the program into a logic circuit via knowledge compilation and (ii) ensuring the probabilistic circuit is compatible with this logic circuit \citep{vergari2021compositional}.  
The first step can require exponential time in the worst case, and as such scaling to high-dimensional spaces can be challenging \citep{ahmedSemanticStrengtheningNeurosymbolic2023,van2023nesi}.
Furthermore, these methods require the neural concept extractor to predict many more additional parameters for the different mixture components. 

Alternatively, autoregressive models are a common type of expressive model, but using these in NeSy predictors based on \cref{eq:neurosymbolic_predictor} is computationally hard, as the marginalisation over concepts does not commute with autoregressive conditioning
\citep{ahmedPseudosemanticLossAutoregressive2023,ahmed2024controllable}. 
While this limitation also holds for diffusion models, they \textit{do} use a conditional independence assumption \emph{locally} at every denoising step. 
This local assumption is sufficient to encode \emph{global} dependencies. 
Furthermore, the locality allows us to design neural models that predict only $\wdim$ parameters, just like NeSy predictors with the independence assumption. 
Thus, we use \emph{masked diffusion models} \cite{sahoosimple} that achieve expressiveness by iteratively unmasking a discrete sample. 
We discuss in \cref{sec:method} how to extend their local independence assumption to realise NeSy predictors. 

\shortparagraph{Masked diffusion models.}
Diffusion models encode an expressive joint distribution over concepts $\bw$ by defining a \emph{forward process} that a neural network modelling a \emph{reverse process} will learn to invert.
As our concepts are symbolic, we need a diffusion process for discrete data \citep{austinStructuredDenoisingDiffusion2023,yu2025discretediffusionlargelanguage}.
We choose masked diffusion models (MDMs) \citep{sahoosimple,shi2024simplified}, a type of discrete diffusion model with promising results on language modelling \citep{nie2025large,dream2025} and reasoning \citep{ye2025beyond}.  
MDMs allow us to derive a principled loss using the program $\varphi$ (\cref{sec:elbo}) and to develop scalable approximations (\cref{sec:loss_optimisation}).
We first review MDMs in their vanilla form, i.e., to model an unconditional distribution over concepts, $p_{\btheta}(\bw)$.

MDMs consider a continuous time diffusion process   \citep{austinStructuredDenoisingDiffusion2023,campbell2022continuous}, where the forward process gradually masks dimensions of a data point $\bw^{0}$ into a partially masked data point $\bw^t\in [\vecdimC + 1] ^{\wdim}$ at time steps $t\in [0, 1]$.  
We extend the vocabulary size to include a placeholder $\m=\vecdimC + 1$ for masked dimensions. 
The data point becomes fully masked as $\bw^1=\bm=[\m, \dots, \m]^\top$ at time step 1.
More formally, for $0\leq s < t\leq 1$, the forward process $q$ masks
a partially masked concept $\bw^s$ into $\bw^t$ with 
\begin{align}
	\label{eq:forward_process}
	q(\bw^t\mid\bw^s) = \prod_{i=1}^{\wdim} \frac{\alpha_t}{\alpha_s}\mathbbone[\w^t_i=\w^s_i] + \Big(1-\frac{\alpha_t}{\alpha_s}\Big)\mathbbone[\w^t_i=\m],
\end{align}
where $\alpha: [0, 1] \to [0, 1]$ is a strictly decreasing noising schedule with $\alpha_0 = 1$ and $\alpha_1 = 0$.
$q(\bw^t\mid\bw^s)$ masks each dimension with probability $1-\frac{\alpha_t}{\alpha_s}$, leaving it unchanged otherwise. 
Importantly, once masked, a dimension remains masked. 
MDMs learn to \emph{invert} the forward process $q(\bw^t\mid\bw^s)$ using a trained reverse process $p_\btheta(\bw^{s}\mid\bw^{t})$. 
The reverse process starts at a fully masked input $\bw^1=\bm$ at time step 1, and gradually unmasks dimensions by assigning values in $\{1, ..., \vecdimC\}$. 

The reverse process $p_\btheta(\bw^{s}\mid\bw^{t})$ is usually parameterised with conditionally independent \emph{unmasking models} $p_\btheta(\tilde{\bw}^{0}\mid\bw^{t}) = \prod_{i=1}^{\wdim} p_\btheta(\tilde{\w}^{0}_i\mid\bw^{t})$ that predict completely unmasked data $\tilde{\bw}^{0}$ given (partially) masked versions $\bw^{t}$. 
Then, MDMs remask some dimensions using the so-called \emph{reverse posterior} $q(\bw^{s}\mid\bw^{t}, \bw^0=\tilde{\bw}^{0})$ (see more details in \cref{eq:reverse_posterior} in \cref{appendix:background}):
\begin{align}
	\label{eq:reverse_process}
	p_\btheta(\bw^{s}\mid\bw^{t}) := \sum_{\tilde{\bw}^{0}} p_\btheta(\tilde{\bw}^{0}\mid\bw^{t})\ q(\bw^{s}\mid\bw^{t}, \bw^0=\tilde{\bw}^{0}),
\end{align}
The standard loss function masks $\bw^0$ partially to obtain $\bw^t$, and then uses the conditionally independent unmasking model $p_\btheta(\tilde{\bw}^0\mid \bw^t)$ to attempt to reconstruct $\bw^0$. 
This loss function requires that $p_\btheta(\tilde{\bw}^0\mid \bw^t)$ implements the \emph{carry-over unmasking assumption}, meaning it should assign a probability of 1 to the values of previously unmasked dimensions.  
We provide additional background on MDMs in \cref{appendix:background}. 
Next, we discuss how to design novel MDMs tailored for NeSy prediction.

\section{\MethodNames}
\label{sec:method}
To overcome the limitations of the independence assumption haunting NeSy predictors, our \methodnames (\methodshorts) use MDMs to learn an expressive distribution over concepts and labels while retaining this assumption locally, enabling scaling. 
To develop \methodshorts, we extend MDMs by (i) conditioning on the input $\bx$, (ii) acting on both concepts $\bw$ and outputs $\by$, treating concepts as latent variables and (iii) providing differentiable feedback through the program $\varphi$. 
We first define this model in \cref{sec:base-setup} and then derive a principled loss in \cref{sec:elbo}. 
We discuss how to optimise this loss in \cref{sec:variational_posterior,sec:loss_optimisation}, and finish by discussing inference in \cref{sec:inference}. 
Finally, \cref{fig:main_fig} provides an overview of the loss computation of \methodshorts. 

\subsection{Model setup}
\label{sec:base-setup}
We define \methodshorts using a conditionally independent unmasking model $p_\btheta(\tilde{\bw}^0 \mid \bw^t, \bx)$ and a program $\varphi$ that maps concepts to outputs. 
We use forward processes for both the concepts $q(\bw^t \mid \bw^s)$ and the outputs $q(\by^t \mid \by^s)$, each defined as in \cref{eq:forward_process}.
The \emph{concept reverse process} $p_\btheta(\bw^s \mid \bw^t, \bx)$ is parameterised as in \cref{eq:reverse_process} with a conditional \emph{concept unmasking model} $p_\btheta(\tilde{\bw}^0\mid\bw^s, \bx)$, and the \emph{output reverse process} $p_\btheta(\by^s\mid\bw^s, \by^t, \bx)$ is parameterised by reusing the concept unmasking model: 
\begin{equation} \label{eq:output_reverse_process}
    p_\btheta(\by^s\mid\bw^s, \by^t, \bx) := \sum_{\tilde{\bw}^0} p_\btheta(\tilde{\bw}^0\mid\bw^s, \bx) q(\by^s\mid\by^t, \tilde{\by}^0=\varphi_{\by^t}(\tilde{\bw}^0)).
\end{equation}
$p_\btheta(\by^s\mid\bw^s, \by^t, \bx)$ takes the concept unmasking model and marginalises over all concepts $\tilde{\bw}^0$ that are consistent with the partially masked output $\by^s$.
To implement the carry-over unmasking assumption, we use $\varphi_{\by^t}$ to refer to a variation of the program $\varphi$ that always returns $y^t_i$ if dimension $i$ is unmasked in $\by^t$. 
We refer to \cref{appendix:formal-setup} for details. 
The neural network for the concept unmasking model $p_\btheta(\tilde{\bw}^0\mid\bw^t, \bx)$ can be readily adapted from NeSy predictors as defined in \cref{eq:neurosymbolic_predictor} by additionally conditioning the neural network $p_\btheta(\bw\mid\bx)$ on the currently unmasked concepts $\bw^t$. 

Since we do not have direct access to ground-truth concepts $\bw^0$, we will use a variational setup
and derive a lower-bound for the intractable data log-likelihood $p_\btheta(\by^0\mid\bx)$ (fully defined in \cref{eq:nesydm-data-log-likelihood}).  
In particular, we use a variational distribution $q_\btheta(\bw^0\mid\by^0, \bx)$ that shares parameters $\btheta$ with the MDM 
to approximate the posterior $p_\btheta(\bw^0\mid\by^0, \bx)$. 
To implement this, we repurpose our concept unmasking model $p_\btheta(\bw^s\mid\bw^t, \bx)$ with the controlled generation method from \cite{guo2024plugandplaycontrollablegenerationdiscrete}, which we describe in \cref{sec:variational_posterior}.
We provide more details and a full derivation of the log-likelihood in \cref{appendix:formal-setup}.

\subsection{Loss function} \label{sec:elbo}
We next derive a NELBO for \methodshorts. 
Intuitively, we define the \methodshort reverse process over $T$ discrete steps, and then consider the data log-likelihood as $T$ goes to infinity, giving a NELBO for a continuous-time process. 
This NELBO will be the base for the loss function used to train \methodshorts. 
\begin{theorem}
	\label{thm:nelbo}
	Let $p_\btheta(\tilde{\bw}^{0}\mid\bw^{t}, \bx)$ be a concept unmasking model, 
    $\varphi: [\vecdimC]^{\wdim} \to [\vecdimY ]^{\ydim}$ a given program, 
    $q_\btheta(\bw^0\mid\by^0, \bx)$ a variational distribution, and $\alpha_t$ a noising schedule. 
    Then, we have that the data log-likelihood as $T\rightarrow \infty$ is bounded as $\lim_{T\rightarrow \infty} -\log p^{\methodshort}_\btheta(\by^0\mid\bx) \leq \mathcal{L}_{\methodshort}$, where 
	\begin{equation}
		\begin{aligned}
		\mathcal{L}_{\methodshort}=& \mathbb{E}_{t\sim [0, 1], q_\btheta(\bw^0\mid\bx, \by^0), q(\bw^t\mid \bw^0)}\Bigg[ \underbrace{\frac{\alpha_t'}{1-\alpha_t} \sum_{i=1}^\wdim\log p_\btheta(\tilde{\w}^0_i=\w^0_i\mid\bw^t, \bx)}_{\textnormal{$\mathcal{L}_{\bw}$: {\bfseries\color{orange}concept unmasking loss}}}  \\
		&+\underbrace{\alpha'_t\sum_{i=1}^\ydim \log \sum_{\tilde{\bw}^{0}} p_\btheta(\tilde{\bw}^{0}\mid\bw^{t}, \bx) \mathbbone[ \varphi(\tilde{\bw}^{0})_i= \y^{0}_i]}_{\textnormal{$\mathcal{L}_{\by}$: {\bfseries\color{teal}output unmasking loss}}} \Bigg] - \underbrace{\textnormal{H}[q_\btheta(\bw^{0}\mid\by^{0}, \bx)]}_{\textnormal{$\mathcal{L}_{\textnormal{H}[q]}$: {\bfseries\color{purple}variational entropy}}}
		\end{aligned}
	\end{equation}
\end{theorem}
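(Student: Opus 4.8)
The plan is to derive a discrete-time negative ELBO over $T$ diffusion steps and then pass to the limit $T\to\infty$. Because the concepts $\bw^0$ are latent, I would lower-bound the log-likelihood with the variational proposal that first draws $\bw^0\sim q_\btheta(\bw^0\mid\by^0,\bx)$ and then runs the two factorised forward maskings of \cref{eq:forward_process} on the concept chain $\bw^{t_0},\dots,\bw^{t_T}$ and the output chain $\by^{t_0},\dots,\by^{t_T}$ (with $t_k=k/T$). The joint reverse model factorises step-wise as $p_\btheta(\bw^s\mid\bw^t,\bx)\,p_\btheta(\by^s\mid\bw^s,\by^t,\bx)$ as in \cref{eq:reverse_process,eq:output_reverse_process}. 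Applying Jensen's inequality to $-\log p_\btheta(\by^0\mid\bx)=-\log\mathbb{E}[\,\cdot\,]$ yields a bound in which the proposal density $q_\btheta(\bw^0\mid\by^0,\bx)$ sits alone in the denominator; taking the outer expectation over it produces exactly the variational entropy term $-\textnormal{H}[q_\btheta(\bw^0\mid\by^0,\bx)]$.

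Next I would telescope the trajectory log-ratio into a sum of per-step KL divergences together with boundary (reconstruction and prior) terms. This splits cleanly into a \emph{concept chain} contributing $D_{\mathrm{KL}}\!\left(q(\bw^s\mid\bw^t,\bw^0)\,\|\,p_\btheta(\bw^s\mid\bw^t,\bx)\right)$ at each step and an \emph{output chain} contributing $D_{\mathrm{KL}}\!\left(q(\by^s\mid\by^t,\by^0)\,\|\,p_\btheta(\by^s\mid\bw^s,\by^t,\bx)\right)$, where both forward posteriors factorise over their dimensions since the maskings are independent across coordinates. For the concept chain the unmasking model itself factorises, so this is precisely the vanilla MDM decomposition, and I would invoke the standard continuous-time MDM result \citep{sahoosimple} to obtain $\mathcal{L}_{\bw}$ with the weight $\tfrac{\alpha_t'}{1-\alpha_t}$, using the carry-over assumption to zero out already-unmasked coordinates so that the sum runs over all $i$.

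The output chain is the crux and is where I expect the main obstacle. Its reverse process (\cref{eq:output_reverse_process}) is \emph{not} factorised over output coordinates, because marginalising over $\tilde\bw^0$ couples them through $\varphi$. The observation that rescues the derivation is that in continuous time each infinitesimal step unmasks at most one coordinate, with probability $O(t-s)$, so the step-wise KL only ever probes the \emph{marginal} single-coordinate unmasking probability $\sum_{\tilde\bw^0}p_\btheta(\tilde\bw^0\mid\bw^t,\bx)\,\mathbbone[\varphi(\tilde\bw^0)_i=\y^0_i]$; events in which two or more coordinates unmask together contribute at order $O((t-s)^2)$ and vanish in the limit. Making this rigorous --- expanding the output KL to first order in $t-s$, bounding the cross terms quadratically, and justifying the replacement of $\bw^s$ by its limit $\bw^t$ together with the interchange of limit, integral and expectation --- is the technical heart of the argument and is exactly the sense in which the continuous-time MDM loss extends to non-factorised distributions.

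Finally I would account for the different weight on $\mathcal{L}_{\by}$. Since the marginal weighted model count above depends only on $\bw^t$ and not on $\by^t$, the output-mask expectation factors out: the per-coordinate step contribution carries the usual $\tfrac{\alpha_t'}{1-\alpha_t}$ weight multiplied by the probability $\mathbb{P}[\y^t_i=\m]=1-\alpha_t$ that coordinate $i$ is still masked and hence eligible to unmask, and these cancel to leave the weight $\alpha_t'$ with an expectation taken over $\bw^t$ alone. Collecting the three pieces --- the concept unmasking loss $\mathcal{L}_{\bw}$, the output unmasking loss $\mathcal{L}_{\by}$, and the variational entropy $-\textnormal{H}[q_\btheta(\bw^0\mid\by^0,\bx)]$ --- and verifying that the boundary terms vanish as $T\to\infty$ then yields $\mathcal{L}_{\methodshort}$.
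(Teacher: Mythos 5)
Your proposal is correct and follows essentially the same route as the paper: a variational Jensen bound yielding the entropy term, per-step KL telescoping, the standard MDM treatment of the factorised concept chain, a continuous-time argument for the non-factorised output chain in which multi-coordinate unmaskings are suppressed at order $O((t-s)^2)$, and the cancellation of the $(1-\alpha_t)$ masking probability against the $\tfrac{\alpha_t'}{1-\alpha_t}$ weight to leave $\alpha_t'$ on $\mathcal{L}_{\by}$. The paper packages your ``technical heart'' as a standalone result (\cref{thm:joint_unmasking}, proved via Taylor expansion of $\alpha_s$ around $t$) and handles the replacement of $\varphi_{\by^t}$ by $\varphi$ through an explicit induction over $\ydim$, but these are formalisations of exactly the steps you outline.
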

We provide a derivation of this NELBO in \cref{appendix:nelbo}. This NELBO has three components:
\begin{itemize}[wide=0pt]
	\item The \textbf{\emph{\color{orange}concept unmasking loss}} $\mathcal{L}_{\bw}$ is like the unmasking loss used in MDMs (\cref{eq:unmasking_loss}). 
    Since we do not have access to the ground-truth concept $\bw^0$, we sample $\bw^0$ from the variational distribution $q_\btheta(\bw^0\mid\by^0, \bx)$ and ask the model to reconstruct $\bw^0$ from a partially masked version $\bw^t\sim q(\bw^t\mid \bw^0)$. 
	\item The \textbf{\emph{\color{teal}output unmasking loss}} $\mathcal{L}_{\by}$ is a sum of $\ydim$ weighted model counts (WMC) like in \cref{eq:neurosymbolic_predictor}, one for each dimension $i$ of the output $\by^0$. 
	Unlike \cref{eq:neurosymbolic_predictor}, $\mathcal{L}_{\by}$ weights concepts using the concept unmasking model $p_\btheta(\tilde{\bw}^{0}\mid\bw^{t}, \bx)$ that is conditioned on partially masked concepts $\bw^t$. 
	Importantly, we use conditionally independent concept unmasking models, meaning we can use standard techniques in the NeSy literature to compute this loss efficiently. 
	\cref{appendix:analysis-output-denoising-loss} provides additional analysis.
	\item The \textbf{\emph{\color{purple}variational entropy}} $\mathcal{L}_{\textnormal{H}[q]}$ is maximised to encourage the variational distribution to cover all concepts $\bw^0$ that are consistent with the input $\bx$ and output $\by^0$. 
\end{itemize}

To derive the NELBO, we had to prove a new theorem that extends the standard MDM NELBO to \emph{non}-factorised unmasking models $p_\btheta(\tilde{\bw}^{0}\mid\bw^{t})$ (\cref{appendix:joint_unmasking}), which can be an interesting result for future MDM architectures even outside NeSy predictors. 
We need this result because, unlike the concept reverse process, the output reverse process $p_\btheta(\by^s\mid\bw^s, \by^t, \bx)$ in \cref{eq:output_unmasking_model} does not factorise, and we cannot naively apply the standard MDM NELBO given in \cref{eq:unmasking_loss}. 

\subsection{Variational posterior}
\label{sec:variational_posterior}
To compute the \methodshort NELBO, we require a variational distribution $q_\btheta(\bw^0 \mid \by^0, \bx)$ to sample likely concepts $\bw^0$ that are consistent with the ground-truth output $\by^0$. 
We achieve this by adapting the sampling algorithm described in \cref{sec:inference} using a concept unmasking model $p_\btheta(\tilde{\bw}^0 \mid \bw^t, \bx)$ that depends on the output $\by^0$ and the program $\varphi$:
\begin{align}
	\label{eq:variational_conditional_distribution}
	q_\btheta(\tilde{\bw}^0 \mid \bw^t, \by^0, \bx) &:= \frac{p_\btheta(\tilde{\bw}^0\mid\bw^t, \bx) \mathbbone[\varphi(\tilde{\bw}^0)=\by^0]}{\mathcal{Z}(\bw^t, \bx, \by^0)},
\end{align}
\noindent where $\mathcal{Z}(\bw^t, \bx, \by^0)$ is a normalising constant. This redefines the standard unmasking process from \cref{eq:reverse_process} by only considering valid $\tilde{\bw}^0$. 
Unfortunately, sampling from $p_\btheta(\tilde{\bw}^0\mid\bw^t, \bx, \by^0)$ is NP-hard \citep{maenehardness,karp1989monte}. 
However, if we have a tractable representation of the program $\varphi$, e.g., a polysize circuit as the output of a knowledge compilation step \citep{oztok2015top}, then we can represent $q_\btheta(\tilde{\bw}^0 \mid \bw^t, \by^0, \bx)$ compactly and exactly sample from it \citep{ahmedSemanticProbabilisticLayers2022}.
Without access to such a circuit, we can instead use a relaxation of the constraint similar to \cite{guo2024plugandplaycontrollablegenerationdiscrete}. Let $r_\beta(\tilde{\bw}^0\mid\by^0)=\exp(-\beta\sum_{i=1}^\ydim \mathbbone[\varphi(\tilde{\bw}^0)_i\neq \y^0_i])$, where $\beta > 0$ and $\beta\rightarrow \infty$ approaches the hard constraint. At each step in the reverse process, we resample to approximately obtain samples from $q^\beta_\btheta(\tilde{\bw}^0\mid\bw^t, \bx, \by^0)\propto p_\btheta(\tilde{\bw}^0\mid\bw^t, \bx)r_\beta(\tilde{\bw}^0\mid\by^0)$ \cite{guo2024plugandplaycontrollablegenerationdiscrete}. 
This procedure may sample concepts $\tilde{\bw}^0$ that are inconsistent with $\by^0$, but prefers samples that reconstruct more dimensions of $\by^0$. 
We find that reasonably large $\beta>10$ works in our experiments. 
In practice, this effectively samples $\varsamples$ times from $p_\btheta(\tilde{\bw}^0\mid\bw^t, \bx)$ and chooses the sample that violates the fewest constraints. See \cref{appendix:sampling_variational} for details.

\subsection{Loss optimisation and scalability}
\label{sec:loss_optimisation}
Next, we describe how we optimise the \methodshort NELBO $\mathcal{L}_{\methodshort}$ using gradient descent. 
We design a gradient estimation algorithm that scales to large reasoning problems by approximating intractable computation. 
Note that, given samples $\bw^0, \bw^t\sim q_\btheta(\bw^0\mid\bx, \by^0)\ q(\bw^t\mid \bw^0)$, the {empirical} 
\textbf{{\color{orange}concept unmasking loss}} $\mathcal{L}_{\bw}$ is tractable, so we only discuss how to backpropagate through the \textbf{{\color{teal}output unmasking loss}} $\mathcal{L}_{\by}$ and the \textbf{{\color{purple}variational entropy}} $\mathcal{L}_{\textnormal{H}[q]}$. 

Computing the \textbf{{\color{teal}output unmasking loss}} $\mathcal{L}_\by$ involves computing multiple WMCs, which are \#P-hard. 
One option is to compute each WMC exactly using circuits obtained via knowledge compilation \citep{xuSemanticLossFunction2018,manhaeveNeuralProbabilisticLogic2021,kisa2014probabilistic}. 
However, to ensure scalability, we develop a sampling-based approach that approximates the WMC gradients \citep{smet2023differentiable}. 
In particular, we use a REINFORCE-based gradient estimator \citep{mohamedMonteCarloGradient2020}, the REINFORCE Leave-One-Out (RLOO) estimator  \citep{kool2019buy,DBLP:journals/corr/abs-2402-14740}. 
RLOO is similar to the popular GRPO algorithm \citep{shao2024deepseekmath} while being unbiased. 
Furthermore, RLOO allows for flexible tradeoffs between variance and computation constraints by choosing the number of samples. 

However, methods like RLOO can fail for problems where the probability of getting a sample $\tilde{\bw}^0$ consistent with $\by^0$ is very low: when we only sample inconsistent concepts $\tilde{\bw}^0$, RLOO does not provide any gradient signal. 
However, the output unmasking loss is subtly different, 
as $\mathcal{L}_\by$ gives a signal for each of the dimensions of $\by^0$ independently. 
This helps structure the search for consistent concepts $\tilde{\bw}^0$ by decomposing the problem into $\ydim$ independent subproblems \cite{van2023nesi,aspis2022embed2sym}.
More precisely, given a time step $t\in[0, 1]$, samples $\bw^0, \bw^t\sim q_\btheta(\bw^0, \bw^t\mid\by^0, \bx)$ and samples $\tilde{\bw}^0_1, \dots, \tilde{\bw}^0_\losssamples\sim p_\btheta(\tilde{\bw}^0\mid\bw^t, \bx)$, we use:
\begin{equation}
	\label{eq:rloo_estimator}
	\nabla_\btheta \mathcal{L}_\by \approx  \alpha_t'\sum_{i=1}^\ydim \frac{1}{\mu_i(\losssamples-1)}\sum_{j=1}^\losssamples\left(\mathbbone[\varphi(\tilde{\bw}^0_j)_i=\y^0_i]-\mu_{i}\right) \nabla_\btheta \log p_\btheta(\tilde{\bw}^0_j\mid\bw^t, \bx)
\end{equation}
where $\mu_{i} = \frac{1}{\losssamples} \sum_{j=1}^\losssamples \mathbbone[\varphi(\tilde{\bw}^0_j)_i=\y^0_i]$. 
We provide further details in \cref{appendix:rloo}.

Maximising the \textbf{{\color{purple}variational entropy}} $\mathcal{L}_{\textnormal{H}[q]}$ is challenging: the variational distribution in \cref{sec:variational_posterior} samples from a conditioned version of the unmasking model where computing likelihoods, and by extension, maximising the entropy of $q_\btheta$, is highly untractable. 
We therefore experimented with two biased approximations of this loss which sufficed for our experiments, and leave more sophisticated approximations for future work: 
\begin{itemize}[wide=0pt]
	\item \textbf{conditional 1-step entropy:} If we have access to a tractable constraint circuit of $\varphi$, we can use it to compute the entropy of an independent distribution over $\bw^0$ conditioned on $\by^0$ and $\bx$ \citep{ahmedNeuroSymbolicEntropyRegularization2022,vergari2021compositional}. 
	Then, we maximise the entropy over the variational distribution when performing time discretisation with a single step ($T=1$): $\text{H}[q_\btheta(\tilde{\bw}^0\mid\bw^1=\bm, \by^0, \bx)]$ using the distribution defined in \cref{eq:variational_conditional_distribution}.
	\item \textbf{unconditional 1-step entropy:} Without access to a tractable constraint circuit, we instead maximise the unconditional 1-step entropy $\text{H}[q_\btheta(\tilde{\bw}^0\mid\bw^1=\bm, \bx)]$.
\end{itemize}
Furthermore, as is common in variational setups \citep{higgins2017beta}, we add hyperparameters that weight the contribution of each loss component $\mathcal{L}_{\bw}$, $\mathcal{L}_{\by}$, and $\mathcal{L}_{\textnormal{H}[q]}$. We found these hyperparameters critical to the performance of the model (see \cref{appendix:loss_weighting} for an ablation study). 
Finally, unbiased optimisation of $\mathcal{L}_{\bw}$ and $\mathcal{L}_\by$ also requires calculating the gradient through sampling a $\bw^0$ from the variational distribution \citep{mohamedMonteCarloGradient2020,schulman2015gradient}. 
Like with the variational entropy, 
we found that sidestepping this part of the gradient, which would be intractable and have high variance otherwise, simplifies optimisation and yields good performance in practice. 
See pseudocode for the learning algorithm in \cref{alg:training} and additional discussion and definitions of the gradient estimation algorithm in \cref{appendix:rloo}. 

\begin{algorithm}
    \caption{Algorithm for estimating the gradients of the NELBO for training \methodshort}
    \label{alg:training}
    \begin{algorithmic}[1]
        \State {\bfseries Given} datapoints $(\bx, \by^0)$ and unmasking model $p_\btheta(\tilde{\bw}^0\mid\bx, \bw^t)$ with current parameters $\btheta$
        \State $\bw^0 \sim q_\btheta(\bw^0\mid\bx, \by^0)$ \Comment{Sample from variational distribution (\cref{sec:variational_posterior}).}
        \State $t\sim \mathcal{U}(0, 1)$ \Comment{Sample a random time step.}
        \State $\bw^t \sim q(\bw^t\mid\bw^0)$ \Comment{Mask the concept $\bw^0$ to $\bw^t$ (\cref{eq:jump-forward}).}
        \State $\tilde{\bw}^0_1, \dots, \tilde{\bw}^0_\losssamples \sim q_\btheta(\tilde{\bw}^0\mid\bx, \bw^t)$ \Comment{Sample $\losssamples$ samples from unmasking model.}\label{line:sample_unmasking}
        \State $\bg_{\by} \gets g_{\by^0}(\tilde{\bw}^0_1, \dots \tilde{\bw}^0_\losssamples)$ \Comment{Estimate gradient of $\mathcal{L}_\by$ using \cref{eq:rloo-y-estimator}.} 
        \State $\bg_{\bw} \gets \frac{\alpha_t'}{1-\alpha_t} \sum_{i\in \masked_{\bw^t}} \nabla_\btheta \log p_\btheta(\tilde{w}^0_i=\w^0_i\mid\bx, \bw^t)$  \Comment{Compute gradient of $\mathcal{L}_\bw$}
        \State $\bg_{\text{H}} \gets \nabla_\btheta \mathcal{L}_{\text{H}}$ \Comment{Compute gradient of $\mathcal{L}_{\text{H}}$.}
        \State \Return $\frac{\gamma_\bw}{\wdim} \bg_{\bw} + \frac{\gamma_{\by}}{\ydim} \bg_{\by} + \frac{\gamma_{\text{H}}}{\wdim} \bg_{\text{H}}$ \Comment{Return the weighted sum of the gradients.}
    \end{algorithmic}
\end{algorithm}

\subsection{Sampling and Inference}
\label{sec:inference}
Next, we describe how we sample from trained \methodshorts to make predictions of $\by$ given $\bx$. 
Exactly computing the mode $\argmax_{\by^0} p_\btheta^{\methodshort}(\by^0\mid\bx)$ is intractable even for representations supporting tractable marginals \citep{vergari2019tractable,ahmed2025semantic}, therefore we need to approximate it. 
We use a majority voting strategy, where we sample $\testsamples$ concepts $\bw^0_l$ from the trained MDM, compute the output with the program $\varphi$, and take the most frequent output:
\begin{equation}
	\label{eq:true-majority-voting}
	\hat{\by} = \argmax_{\by} \sum_{l=1}^\testsamples \mathbbone[\varphi(\bw^0_l)=\by], \quad \bw^0_1, \dots, \bw^0_\testsamples\sim p_\btheta(\bw^0\mid\bx, \bw^1=\bm).
\end{equation}
If the concept dimension $\wdim$ is not too large, we use the first-hitting sampler from \cite{zheng2024masked} to sample from $p_\btheta(\bw^0\mid\bx, \bw^1=\bm)$ exactly in $\wdim$ steps. 
Otherwise, we use a $\testdiscretization$-step time-discretisation of the reverse process \citep{sahoosimple}, for pseudocode see \cref{alg:standard_tdsampling}. 
For implementation details, we refer to \cref{appendix:sampling}. 
Additionally, we experimented with different majority voting strategies, which we discuss in \cref{appendix:majority_voting}. 
These mainly study whether to do majority voting before or after running the program. 

\begin{algorithm}[h]
    \caption{Standard time-discretised output prediction for \methodshort}
    \label{alg:standard_tdsampling}
    \begin{algorithmic}[1]
        \State {\bfseries Given} datapoint $\bx$ and unmasking model $p_\btheta(\tilde{\bw}^0\mid\bx, \bw^t)$ with parameters $\btheta$
        \For{$l \gets 1$ \textbf{ to } $\testsamples$}
            \State $\bw^{1} = \bm$
            \For{$k\gets \testdiscretization$ \textbf{ to } $1$}
                \State $\tilde{\bw}^0 \sim p_\btheta(\tilde{\bw}^0\mid\bx, \bw^t)$ \Comment{Sample from unmasking model (\cref{sec:variational_posterior}).}
                \State $\bw^{s} \sim q(\bw^{s}\mid\bw^t, \bw^0=\tilde{\bw}^0)$ \Comment{Sample from remasking process (\cref{eq:reverse_posterior}).}
            \EndFor
            \State $\bw^0_l \gets \bw^0$ \Comment{Store the sampled concept}
            \State $\by_l \gets \varphi(\bw^0_l)$ \Comment{Compute program output for this sample}
        \EndFor
        \State $\hat{\by} \gets \argmax_{\by} \sum_{l=1}^{\testsamples} \mathbbone[\by_l = \by]$ \Comment{Majority vote}
        \State \textbf{Return} $\hat{\by}$ \Comment{Return the most frequent output}
    \end{algorithmic}
\end{algorithm}

\section{Experiments}
\label{sec:experiments}
We aim to answer the following research questions: (\textbf{RQ1:}) \say{Can \methodshorts scale to high-dimensional reasoning problems?} and (\textbf{RQ2:}) \say{Does the expressiveness of \methodshorts improve reasoning shortcut awareness compared to independent models?} 
Since there are currently no scalable RS-aware NeSy methods, the baselines we use are separated for the two research questions. 
We match experimental setups of the baselines, using the same datasets and neural network architectures for a fair comparison. 
To approximate the variational entropy (\cref{sec:loss_optimisation}), we use the unconditional entropy for the experiments, as the conditional entropy is intractable. For the RSBench experiments, we tried both. 
We use the linear noising schedule $\alpha_t=1-t$ for all experiments. 

For all experiments, we repeat runs with 10 different random seeds. 
In all tables, we find the best-performing methods with bold font.
In particular, we bold all methods that are not statistically different from the highest-scoring method according to an unpaired one-sided Mann-Whitney U test at a significance level of $0.05$. 
We provide additional experimental details in \cref{appendix:experiments}. 
Code is available at \url{https://github.com/HEmile/neurosymbolic-diffusion}.

\subsection{RQ1: Scalability of \methodshort}
\label{sec:scalability}
\begin{table}[t]
    \centering
    \begin{minipage}{0.48\textwidth}
        \centering
                \caption{Accuracy of predicting the correct sum on MNIST Addition with $N=4$ and $N=15$ digits. Methods above the horizontal line are exact, and below are approximate. 
				We bold the best-scoring methods in the exact and approximate categories separately. 
                }
        \scshape
        \scalebox{.85}{
        \begin{tabular}{lrr}

            \toprule
            \textbf{Method} & $N=4$ & $N=15$ \\
            \midrule
			DeepSoftLog \citep{maene2023soft} & \bfseries 93.5\phantom{0}\small{$\pm$ 0.6\phantom{0}} & 77.1\phantom{0}\small{$\pm$ 1.6\phantom{0}} \\
			PLIA \citep{desmet2024plia} & 91.84\small{$\pm$ 0.73} & \bfseries 79.00\small{$\pm$ 0.73} \\
			\midrule
            Scallop \citep{li2023scallop,desmet2024plia} & 90.88\small{$\pm$ 0.48} & t/o\\
            EXAL \citep{verreet2024explain} & 91.65\small{$\pm$ 0.57} & 73.27\small{$\pm$ 2.05} \\
            A-NeSI \citep{van2023nesi} & \bfseries{92.56\small{$\pm$ 0.79}} & \bfseries {76.84\small{$\pm$ 2.82}} \\
            \methodshort (\textit{ours}) & \bfseries{92.49\small{$\pm$ 0.98}} & \bfseries{77.29\small{$\pm$ 1.40}} \\
        \bottomrule
        \end{tabular}
        }
        \label{tab:mnist_addition}
    \end{minipage}%
    \hfill
    \begin{minipage}{0.48\textwidth}
        \centering
		\caption{\textbf{\methodshort significantly scales beyond current NeSy predictors.} Accuracy of predicting a shortest path on visual path planning with different grid sizes. Above the horizontal line are methods predicting continuous costs, while below are approximate NeSy methods that predict discrete, binned costs. 
        }
        \label{tab:warcraft}
        \scshape
        \scalebox{.85}{
        \begin{tabular}{lrr}
            \toprule
            \textbf{Method} & $12\times 12$ & $30\times 30$ \\
            \midrule
            I-MLE \citep{niepert2021implicit} & 97.2\phantom{0}\small{$\pm$ 0.5\phantom{0}} & 93.7\phantom{0}\small{$\pm$ \phantom{0}0.6\phantom{0}}  \\
			\midrule
            EXAL  \citep{verreet2024explain} & 94.19\small{$\pm$ 1.74} & 80.85\small{$\pm$ \phantom{0}3.83} \\
            A-NeSI  \citep{van2023nesi}& 94.57\small{$\pm$ 2.27} & 17.13\small{$\pm$ 16.32} \\
			A-NeSI+RL \citep{van2023nesi} & \bfseries 98.96\small{$\pm$ 1.33} &  67.57\small{$\pm$ 36.76} \\
            \methodshort (\textit{ours}) & \bfseries 99.41\small{$\pm$ 0.06} & \bfseries 97.40\small{$\pm$ \phantom{0}1.23} \\
        \bottomrule
        \end{tabular}
        }
    \end{minipage}
    \vspace{-0.2cm}
\end{table}

To evaluate the scalability of \methodshort, we consider two NeSy benchmark tasks with high combinatorial complexity: multidigit MNIST Addition and visual path planning. 
We compare to current approximate NeSy methods that use the independence assumption and are not RS-aware, namely A-NeSI \citep{NEURIPS2023_4d9944ab}, Scallop \citep{li2023scallop}, and EXAL \citep{verreet2024explain}. 

\shortparagraph{Multidigit MNIST Addition.} The input $\bx$ is a sequence of $2$ numbers of $N$ digits, and the output $\by$ is the sum of the two numbers, split up into $N+1$ digits. 
The goal is to train a neural network that recognises the individual digits $\bc\in \{0, 1, \dots, 9\}^{2N}$ in the input from input-output examples. 
There are no dependencies between the digits and the problem is not affected by reasoning shortcuts, so we do not expect \methodshort to improve significantly over NeSy methods that use the independence assumption. 
Still, we find in \cref{tab:mnist_addition} that \methodshort, which uses a much more expressive model than the baselines, performs similar to the state-of-the-art approximate method A-NeSI, and is competitive with exact methods \citep{maene2023soft,defast}. Therefore, the expressivity does not come at a cost of performance and scalability in traditional NeSy benchmarks. 

\shortparagraph{Visual path planning.} 
We study the problem described in \cref{example:path-planning}. 
Specifically, we train a neural network to predict the correct cost $\w_{i, j}$ at each of the $N\times N$ grid cells. Then, we use Dijkstra's algorithm to find the shortest path $\by\in \{0, 1\}^{N\times N}$, where $\y_{i,j}=1$ if the shortest path passes through cell $i,j$ and 0 otherwise. 
Like other NeSy methods, we predict costs with a 5-dimensional categorical variable $\bw\in \{1,\dots, 5\}^{N\times N}$. We also compare to I-MLE, the state-of-the-art method that predicts costs as a single continuous variable \citep{niepert2021implicit}. 
We find in \cref{tab:warcraft} that \methodshort significantly  outperforms all baselines on the challenging $30\times 30$ problem, including I-MLE. 
This problem has a combinatorial space of $5^{900}$ and is considered very challenging for NeSy and neural models \citep{pogancicDifferentiationBlackboxCombinatorial2020}. 
On the $12\times 12$ problem, we cannot reject the null hypothesis that \methodshort outperforms A-NeSI + RLOO, but it does have much lower variance, highlighting the reliability of our method. 

\subsection{RQ2: RS-awareness of \methodshort}
\begin{table}
	\centering
	\caption{\textbf{\methodshort is a performant and RS-aware NeSy predictor} as shown on several tasks from the RSBench dataset. We report relevant performance metrics for each task, and concept calibration using ECE to evaluate RS-awareness (see \cref{appendix:rsbench_why_ece} for a motivation for this metric). We underline the second-best-scoring method if there is only a single statistically significant best-scoring method. The first two methods use the independence assumption. Note that SL does not support BDD-OIA. 
    }
	\label{tab:rsbench}
    \scshape
    \scalebox{.85}{
	\begin{tabular}{clrrrrrr}
		\toprule
		&\textbf{Method} & $\textbf{PNP}^{\condind}$ & $\textbf{SL}^{\condind}$ & \multicolumn{2}{c}{\textbf{BEARS} \citep{marconatoBEARSMakeNeuroSymbolic2024}}  & \multicolumn{2}{c}{\textbf{\methodshort} (\textit{ours})} \\ 
		\cmidrule(lr){5-6} \cmidrule(lr){7-8}
		& & & & $\text{PNP}$ & $\text{SL}$ & Uncond H & Cond H \\
		\midrule
		\multirow{6}{*}{\rotatebox{90}{MNIST Half}} &

		$\text{Acc}_\by\uparrow$ & 98.24\small{$\pm$ 0.12} & \bfseries 99.62\small{$\pm$ 0.12} & 99.19\small{$\pm$ 0.12} & \bfseries 99.76\small{$\pm$ 0.00} & 99.12\small{$\pm$ 0.10} & 99.12\small{$\pm$ 0.10}  \\
		&$\text{Acc}_\bw\uparrow$ & 42.76\small{$\pm$ 0.14} & 42.88\small{$\pm$ 0.09} & 43.26\small{$\pm$ 0.75} & 42.86\small{$\pm$ 0.00} &   \bfseries 79.41\small{$\pm$ 6.58} & \underline{71.16\small{$\pm$ 1.77}}  \\
		&$\text{Acc}_{\by, \text{OOD}}\uparrow$ & 5.81\small{$\pm$ 0.07}  & 0.48\small{$\pm$ 0.21} & 6.31\small{$\pm$ 1.10} & 0.11\small{$\pm$ 0.09} & \underline{10.9\small{$\pm$ 0.05}}  &  \bfseries 28.44\small{$\pm$ 0.90} \\
		&$\text{Acc}_{\bw, \text{OOD}}\uparrow$ & 38.97\small{$\pm$ 0.08} & 38.92\small{$\pm$ 0.11} & 39.49\small{$\pm$ 1.07} & 38.88\small{$\pm$ 0.03} & \underline{57.22\small{$\pm$ 0.49}} & \bfseries 62.76\small{$\pm$ 0.89} \\
		&$\text{ECE}_{\bw, \text{ID}}\downarrow$ & 69.40\small{$\pm$ 0.35} & 70.61\small{$\pm$ 0.18} & \underline{36.81\small{$\pm$ 0.17}} & 37.61\small{$\pm$ 1.22} & 39.52\small{$\pm$ 5.01} & \bfseries 4.18\small{$\pm$ 2.56} \\
		&$\text{ECE}_{\bw, \text{OOD}}\downarrow$ & 86.67\small{$\pm$ 0.18} & 87.95\small{$\pm$ 0.14} & 37.89\small{$\pm$ 2.18} & \underline{35.99\small{$\pm$ 2.88}} & \underline{35.07\small{$\pm$ 2.67}} & \bfseries 11.74\small{$\pm$ 1.18} \\
		\midrule
		\multirow{6}{*}{\rotatebox{90}{MNIST E-O}} &
		$\text{Acc}_\by\uparrow$ & 70.77\small{$\pm$ 0.45} & 97.38\small{$\pm$ 0.31} & 92.02\small{$\pm$ 3.14} & \bfseries 98.67\small{$\pm$ 0.27} &  97.52\small{$\pm$ 0.37} &\underline{98.27\small{$\pm$ 0.44}} \\
		&$\text{Acc}_\bw\uparrow$ & 0.40\small{$\pm$ 0.04} & 0.33\small{$\pm$ 0.05} & \underline{0.48\small{$\pm$ 0.10}} & 0.19\small{$\pm$ 0.08} & 0.36\small{$\pm$ 0.27} & \bfseries 20.33\small{$\pm$ 1.33} \\
		&$\text{Acc}_{\by, \text{OOD}}\uparrow$ & \bfseries 7.29\small{$\pm$ 0.49}  & 0.05\small{$\pm$ 0.06} & \underline{1.60\small{$\pm$ 2.04}} & 0.00\small{$\pm$ 0.00} & 0.00\small{$\pm$ 0.00} & 0.02\small{$\pm$ 0.04} \\
		&$\text{Acc}_{\bw, \text{OOD}}\uparrow$ & 7.50\small{$\pm$ 0.32} & 7.07\small{$\pm$ 0.09} & \underline{9.36\small{$\pm$ 2.13}} & 6.25\small{$\pm$ 1.46} & 4.65\small{$\pm$ 0.49} & \bfseries 14.25\small{$\pm$ 0.76} \\
		&$\text{ECE}_{\bw, \text{ID}}\downarrow$ & 81.04\small{$\pm$ 1.15} & 82.18\small{$\pm$ 1.57} & 28.82\small{$\pm$ 2.19} & 34.51\small{$\pm$ 1.65} & \underline{20.93\small{$\pm$ 0.49}} & \bfseries 2.70\small{$\pm$ 1.21} \\
		&$\text{ECE}_{\bw, \text{OOD}}\downarrow$ & 85.44\small{$\pm$ 0.72} & 86.96\small{$\pm$ 1.15} & 26.83\small{$\pm$ 1.56} & 32.61\small{$\pm$ 3.32} & \underline{19.13\small{$\pm$ 0.50}} & \bfseries 5.77\small{$\pm$ 0.98} \\
		\midrule
		\multirow{3}{*}{\rotatebox{90}{\small BDD}} &
		$\text{mF1}_\by\uparrow$ & \bfseries 63.71\small{$\pm$ 1.50} & --  & 60.80\small{$\pm$ 0.11} & -- & 61.67\small{$\pm$ 0.32} & \bfseries 62.63\small{$\pm$ 0.53} \\
		&$\text{mF1}_\bw\uparrow$ & 10.41\small{$\pm$ 1.90} & -- & \bfseries 19.25\small{$\pm$ 0.16} & -- & \underline{18.50\small{$\pm$ 0.21}} & 13.77\small{$\pm$ 0.51} \\
		&$\text{ECE}_{\bw}\downarrow$ & 38.89\small{$\pm$ 1.34} & -- & \bfseries 16.00\small{$\pm$ 0.20} & -- & \underline{18.86\small{$\pm$ 1.75}} & 21.72\small{$\pm$ 1.83} \\
		\bottomrule
	\end{tabular}
	}
	
\end{table}

To evaluate the RS awareness of \methodshort, we use the RSBench dataset \citep{marconatoNotAllNeuroSymbolic2023} of reasoning problems that cannot be disambiguated from data alone. 
We consider two synthetic problems and a real-world task. 
MNIST Half and MNIST Even-Odd (MNIST E-O) are variations of MNIST Addition constructed to ensure disambiguation of concepts is impossible. 
They have OOD test-sets to diagnose overconfident classifiers. 
BDD-OIA (BDD) is a self-driving task \citep{xu2020explainable} where a model predicts what actions a car can take given a dashcam image. NeSy predictors extract high-level concepts from the image and use rules to predict the allowed actions.  
We compare to NeSy predictors using the independence assumption, namely Semantic Loss ($\text{SL}^{\condind}$) \citep{xuSemanticLossFunction2018} and a standard probabilistic NeSy predictor ($\text{PNP}^{\condind}$). We also compare to BEARS, an RS-aware ensemble of NeSy predictors with the independence assumption \citep{marconatoBEARSMakeNeuroSymbolic2024}. 

In \cref{tab:rsbench}, we find that \methodshort strikes a good balance between accuracy and RS-awareness throughout the datasets. 
On the MNIST tasks, it attains significantly better concept accuracy than competitors, both in- and out-of-distribution.
Furthermore, \methodshort, especially using the conditional entropy, has much better concept calibration than both baselines using the independence assumption and RS-aware baselines. 
We report additional results on these datasets in \cref{appendix:majority_voting} and find that different majority voting strategies may improve OOD performance. 
On BDD-OIA, we find that \methodshort has better predictive performance on outputs than BEARS while significantly improving calibration and concept performance compared to $\text{PNP}^{\condind}$ using the independence assumption. 
Furthermore, we note that, unlike the baselines, \methodshort is much more scalable as highlighted in \cref{sec:scalability}.

\section{Further related work}
\shortparagraph{NeSy predictors.} 
The field of NeSy predictors is primarily divided into methods using fuzzy logics \citep{vankriekenAnalyzingDifferentiableFuzzy2022,badreddineLogicTensorNetworks2022,giunchiglia2024ccn+,daniele2023refining} and those using probabilistic logics \citep{manhaeveNeuralProbabilisticLogic2021,li2023scallop,xuSemanticLossFunction2018,ahmedSemanticProbabilisticLayers2022,van2023nesi}.
Fuzzy methods implicitly assume a form of independence between concepts, while probabilistic methods can model dependencies. 
Previous methods that went beyond the independence assumption mixed multiple independent distributions, like in SPL \citep{ahmedSemanticProbabilisticLayers2022} and BEARS \citep{marconatoBEARSMakeNeuroSymbolic2024} which is specifically designed for RS-awareness. 
Neurosymbolic probabilistic logic programming frameworks like DeepProbLog and Scallop \citep{li2023scallop,manhaeveNeuralProbabilisticLogic2021} allow modifying the program to increase expressivity compared to the naive independence over concepts. 
However, these methods are built on exact or top-$k$ inference, which is difficult to scale to high-dimensional reasoning problems like visual path planning when the number of dependencies grows. 
Relatedly, DeepGraphLog~\citep{kikaj2025deepgraphlog} extends DeepProbLog by using graph neural networks to model dependencies between concepts, also relying on exact inference. 
Conversely, all current methods focussed on approximate inference to scale neurosymbolic predictors assume independence between concepts \citep{van2023nesi,verreet2024explain,smet2023differentiable}, hence lacking RS-awareness. 

\shortparagraph{NeSy generative models.} 
A closely related topic is generating from expressive models like large language models (LLMs) and diffusion models while involving programs and constraints. 
For LLMs, this was studied with NeSy loss functions encoding the constraints \citep{calanzone2025logically,ahmedPseudosemanticLossAutoregressive2023,ahmed2025semantic} and with constrained decoding, for example using sequential Monte Carlo methods \citep{lew2023sequential,loula2025syntactic,zhao2024probabilistic} and by combining the LLM with approximations using probabilistic circuits \citep{zhangTractableControlAutoregressive2023a,zhang2024adaptable,ahmed2024controllable}. 
However, these methods adopt heuristics to steer the LLM 
towards a constraint,
for instance, by using a pseudo-likelihood formulation \citep{ahmedPseudosemanticLossAutoregressive2023,ahmed2025semantic} or training an HMM surrogate that approximates the LLM \citep{zhangTractableControlAutoregressive2023a,zhang2024adaptable}. 
Instead, for \methodshort we formulate a principled NELBO, and we do so by exploiting the local structure that diffusion models offer. 
Furthermore, some methods tackle constrained generation from GANs \citep{dilielloEfficientGenerationStructured2020,stoianbeyond,stoianrealistic}, VAEs \citep{misino2022vael}, deep HMMs \citep{DBLP:conf/aaai/SmetVRM25}, and continuous diffusion models \citep{scassola2023conditioning,huang2024symbolic}. 
We leave extensions of \methodshort to this generative setting to future work.

\section{Conclusion}
\label{sec:conclusion}
In this paper, we introduced \methodshorts, the first method to integrate masked diffusion models as the neural network extractor in neurosymbolic predictors.
We show how to scale \methodshorts by using efficient probabilistic reasoning techniques on \emph{local} unmasking distributions while minimising a \emph{global} NELBO that lower-bounds the data log-likelihood.
Empirically, we show that \methodshorts position themselves as one of the best NeSy predictors available that can scale to high-dimensional reasoning problems while being RS-aware.
This is a crucial property for NeSy predictors deployed in real-world safety-critical applications, as they need to be well calibrated and generalise robustly.  

\shortparagraph{Limitations and future work.} 
The \methodshort NELBO can be extended to incorporate additional exact inference routines if we can obtain an efficient circuit, e.g., as the tractable representation for a symbolic program \citep{oztok2015top}. 
Otherwise, as argued in \cref{sec:loss_optimisation}, our sampling-based approach relies on the ability to decompose the output $\by$ into separate dimensions to ensure the search in RLOO is decomposed into independent subproblems. 
Together, this limits the scalability of \methodshort to tasks with either efficient circuit representations or decomposable output spaces.
Understanding how to combine these two aspects, or how to automatically (and approximately) reduce a different setting into one of them, is an interesting and challenging future venue.
Two other areas of improvement are our approach to maximising the variational entropy and the influence of the indirect gradient coming from sampling from the variational distribution. 
Finally, we believe studying how \methodshorts extend to other discrete diffusion models than masked diffusion \citep{austinStructuredDenoisingDiffusion2023} models is an interesting direction. 
\methodshort could even be extended to hybrid diffusion models that involve both symbolic, discrete concepts and continuous latent variables by using recent work on generating under continuous constraints \citep{kurscheidt2025probabilistic,stoianbeyond,de2023neural}.

\section*{Acknowledgements}
Emile van Krieken was funded by ELIAI (The Edinburgh Laboratory for Integrated Artificial Intelligence), EPSRC (grant no. EP/W002876/1).
Pasquale Minervini was partially funded by ELIAI, EPSRC (grant no.\ EP/W002876/1), an industry grant from Cisco, and a donation from Accenture LLP.
Edoardo M. Ponti is supported by the ERC Starting Grant AToM-FM (101222956).
Antonio Vergari was supported by the \say{UNREAL: Unified Reasoning Layer for Trustworthy ML} project (EP/Y023838/1) selected by the ERC and funded by UKRI EPSRC.
We would like to express our gratitude to Samuele Bortolotti, Emanuele Marconato, Lennert de Smet, Adrián Javaloy, and Jaron Maene for fruitful discussions during the writing of this paper. 

\bibliographystyle{plain}
\bibliography{references}

\appendix
\ifarxiv
\else
\clearpage
\section*{NeurIPS Paper Checklist}
\begin{enumerate}

\item {\bf Claims}
    \item[] Question: Do the main claims made in the abstract and introduction accurately reflect the paper's contributions and scope?
    \item[] Answer: \answerYes{} %
    \item[] Justification: 
    \item[] Guidelines:
    \begin{itemize}
        \item The answer NA means that the abstract and introduction do not include the claims made in the paper.
        \item The abstract and/or introduction should clearly state the claims made, including the contributions made in the paper and important assumptions and limitations. A No or NA answer to this question will not be perceived well by the reviewers. 
        \item The claims made should match theoretical and experimental results, and reflect how much the results can be expected to generalize to other settings. 
        \item It is fine to include aspirational goals as motivation as long as it is clear that these goals are not attained by the paper. 
    \end{itemize}

\item {\bf Limitations}
    \item[] Question: Does the paper discuss the limitations of the work performed by the authors?
    \item[] Answer: \answerYes{} %
    \item[] Justification: We have a limitations section in the conclusion. 
    \item[] Guidelines:
    \begin{itemize}
        \item The answer NA means that the paper has no limitation while the answer No means that the paper has limitations, but those are not discussed in the paper. 
        \item The authors are encouraged to create a separate "Limitations" section in their paper.
        \item The paper should point out any strong assumptions and how robust the results are to violations of these assumptions (e.g., independence assumptions, noiseless settings, model well-specification, asymptotic approximations only holding locally). The authors should reflect on how these assumptions might be violated in practice and what the implications would be.
        \item The authors should reflect on the scope of the claims made, e.g., if the approach was only tested on a few datasets or with a few runs. In general, empirical results often depend on implicit assumptions, which should be articulated.
        \item The authors should reflect on the factors that influence the performance of the approach. For example, a facial recognition algorithm may perform poorly when image resolution is low or images are taken in low lighting. Or a speech-to-text system might not be used reliably to provide closed captions for online lectures because it fails to handle technical jargon.
        \item The authors should discuss the computational efficiency of the proposed algorithms and how they scale with dataset size.
        \item If applicable, the authors should discuss possible limitations of their approach to address problems of privacy and fairness.
        \item While the authors might fear that complete honesty about limitations might be used by reviewers as grounds for rejection, a worse outcome might be that reviewers discover limitations that aren't acknowledged in the paper. The authors should use their best judgment and recognize that individual actions in favor of transparency play an important role in developing norms that preserve the integrity of the community. Reviewers will be specifically instructed to not penalize honesty concerning limitations.
    \end{itemize}

\item {\bf Theory assumptions and proofs}
    \item[] Question: For each theoretical result, does the paper provide the full set of assumptions and a complete (and correct) proof?
    \item[] Answer: \answerYes{} %
    \item[] Justification: All theoretical results are marked, and assumptions stated alongside them. 
    \item[] Guidelines:
    \begin{itemize}
        \item The answer NA means that the paper does not include theoretical results. 
        \item All the theorems, formulas, and proofs in the paper should be numbered and cross-referenced.
        \item All assumptions should be clearly stated or referenced in the statement of any theorems.
        \item The proofs can either appear in the main paper or the supplemental material, but if they appear in the supplemental material, the authors are encouraged to provide a short proof sketch to provide intuition. 
        \item Inversely, any informal proof provided in the core of the paper should be complemented by formal proofs provided in appendix or supplemental material.
        \item Theorems and Lemmas that the proof relies upon should be properly referenced. 
    \end{itemize}

    \item {\bf Experimental result reproducibility}
    \item[] Question: Does the paper fully disclose all the information needed to reproduce the main experimental results of the paper to the extent that it affects the main claims and/or conclusions of the paper (regardless of whether the code and data are provided or not)?
    \item[] Answer: \answerYes{} %
    \item[] Justification: We provide all experimental details in \cref{appendix:experiments}, and additionally provide code in the supplementary materials. We also give pseudocode for all implemented algorithms. 
    \item[] Guidelines:
    \begin{itemize}
        \item The answer NA means that the paper does not include experiments.
        \item If the paper includes experiments, a No answer to this question will not be perceived well by the reviewers: Making the paper reproducible is important, regardless of whether the code and data are provided or not.
        \item If the contribution is a dataset and/or model, the authors should describe the steps taken to make their results reproducible or verifiable. 
        \item Depending on the contribution, reproducibility can be accomplished in various ways. For example, if the contribution is a novel architecture, describing the architecture fully might suffice, or if the contribution is a specific model and empirical evaluation, it may be necessary to either make it possible for others to replicate the model with the same dataset, or provide access to the model. In general. releasing code and data is often one good way to accomplish this, but reproducibility can also be provided via detailed instructions for how to replicate the results, access to a hosted model (e.g., in the case of a large language model), releasing of a model checkpoint, or other means that are appropriate to the research performed.
        \item While NeurIPS does not require releasing code, the conference does require all submissions to provide some reasonable avenue for reproducibility, which may depend on the nature of the contribution. For example
        \begin{enumerate}
            \item If the contribution is primarily a new algorithm, the paper should make it clear how to reproduce that algorithm.
            \item If the contribution is primarily a new model architecture, the paper should describe the architecture clearly and fully.
            \item If the contribution is a new model (e.g., a large language model), then there should either be a way to access this model for reproducing the results or a way to reproduce the model (e.g., with an open-source dataset or instructions for how to construct the dataset).
            \item We recognize that reproducibility may be tricky in some cases, in which case authors are welcome to describe the particular way they provide for reproducibility. In the case of closed-source models, it may be that access to the model is limited in some way (e.g., to registered users), but it should be possible for other researchers to have some path to reproducing or verifying the results.
        \end{enumerate}
    \end{itemize}

\item {\bf Open access to data and code}
    \item[] Question: Does the paper provide open access to the data and code, with sufficient instructions to faithfully reproduce the main experimental results, as described in supplemental material?
    \item[] Answer: \answerYes{} %
    \item[] Justification: All data used is open access. We provide links to the code in the paper. 
    \item[] Guidelines:
    \begin{itemize}
        \item The answer NA means that paper does not include experiments requiring code.
        \item Please see the NeurIPS code and data submission guidelines (\url{https://nips.cc/public/guides/CodeSubmissionPolicy}) for more details.
        \item While we encourage the release of code and data, we understand that this might not be possible, so “No” is an acceptable answer. Papers cannot be rejected simply for not including code, unless this is central to the contribution (e.g., for a new open-source benchmark).
        \item The instructions should contain the exact command and environment needed to run to reproduce the results. See the NeurIPS code and data submission guidelines (\url{https://nips.cc/public/guides/CodeSubmissionPolicy}) for more details.
        \item The authors should provide instructions on data access and preparation, including how to access the raw data, preprocessed data, intermediate data, and generated data, etc.
        \item The authors should provide scripts to reproduce all experimental results for the new proposed method and baselines. If only a subset of experiments are reproducible, they should state which ones are omitted from the script and why.
        \item At submission time, to preserve anonymity, the authors should release anonymized versions (if applicable).
        \item Providing as much information as possible in supplemental material (appended to the paper) is recommended, but including URLs to data and code is permitted.
    \end{itemize}

\item {\bf Experimental setting/details}
    \item[] Question: Does the paper specify all the training and test details (e.g., data splits, hyperparameters, how they were chosen, type of optimizer, etc.) necessary to understand the results?
    \item[] Answer: \answerYes{} %
    \item[] Justification: All these details are provided in the supplementary material (\cref{appendix:experiments}). 
    \item[] Guidelines:
    \begin{itemize}
        \item The answer NA means that the paper does not include experiments.
        \item The experimental setting should be presented in the core of the paper to a level of detail that is necessary to appreciate the results and make sense of them.
        \item The full details can be provided either with the code, in appendix, or as supplemental material.
    \end{itemize}

\item {\bf Experiment statistical significance}
    \item[] Question: Does the paper report error bars suitably and correctly defined or other appropriate information about the statistical significance of the experiments?
    \item[] Answer: \answerYes{} %
    \item[] Justification: We report standard deviations, and used Mann-Whitney U-tests to compute p-values for comparing whether the top-performing methods are statistically different from other methods. 
    \item[] Guidelines:
    \begin{itemize}
        \item The answer NA means that the paper does not include experiments.
        \item The authors should answer "Yes" if the results are accompanied by error bars, confidence intervals, or statistical significance tests, at least for the experiments that support the main claims of the paper.
        \item The factors of variability that the error bars are capturing should be clearly stated (for example, train/test split, initialization, random drawing of some parameter, or overall run with given experimental conditions).
        \item The method for calculating the error bars should be explained (closed form formula, call to a library function, bootstrap, etc.)
        \item The assumptions made should be given (e.g., Normally distributed errors).
        \item It should be clear whether the error bar is the standard deviation or the standard error of the mean.
        \item It is OK to report 1-sigma error bars, but one should state it. The authors should preferably report a 2-sigma error bar than state that they have a 96\% CI, if the hypothesis of Normality of errors is not verified.
        \item For asymmetric distributions, the authors should be careful not to show in tables or figures symmetric error bars that would yield results that are out of range (e.g. negative error rates).
        \item If error bars are reported in tables or plots, The authors should explain in the text how they were calculated and reference the corresponding figures or tables in the text.
    \end{itemize}

\item {\bf Experiments compute resources}
    \item[] Question: For each experiment, does the paper provide sufficient information on the computer resources (type of compute workers, memory, time of execution) needed to reproduce the experiments?
    \item[] Answer: \answerYes{} %
    \item[] Justification: We discuss compute used in \cref{appendix:experiments}.
    \item[] Guidelines:
    \begin{itemize}
        \item The answer NA means that the paper does not include experiments.
        \item The paper should indicate the type of compute workers CPU or GPU, internal cluster, or cloud provider, including relevant memory and storage.
        \item The paper should provide the amount of compute required for each of the individual experimental runs as well as estimate the total compute. 
        \item The paper should disclose whether the full research project required more compute than the experiments reported in the paper (e.g., preliminary or failed experiments that didn't make it into the paper). 
    \end{itemize}
    
\item {\bf Code of ethics}
    \item[] Question: Does the research conducted in the paper conform, in every respect, with the NeurIPS Code of Ethics \url{https://neurips.cc/public/EthicsGuidelines}?
    \item[] Answer: \answerYes{} %
    \item[] Justification: 
    \item[] Guidelines:
    \begin{itemize}
        \item The answer NA means that the authors have not reviewed the NeurIPS Code of Ethics.
        \item If the authors answer No, they should explain the special circumstances that require a deviation from the Code of Ethics.
        \item The authors should make sure to preserve anonymity (e.g., if there is a special consideration due to laws or regulations in their jurisdiction).
    \end{itemize}

\item {\bf Broader impacts}
    \item[] Question: Does the paper discuss both potential positive societal impacts and negative societal impacts of the work performed?
    \item[] Answer: \answerNA{} %
    \item[] Justification: Our paper tackles general improvements of NeSy predictors and improving their reliability. This could be downstream to societal impact, in particular to more reliable models. However, there are no direct impacts downstream from our research. 
    \item[] Guidelines:
    \begin{itemize}
        \item The answer NA means that there is no societal impact of the work performed.
        \item If the authors answer NA or No, they should explain why their work has no societal impact or why the paper does not address societal impact.
        \item Examples of negative societal impacts include potential malicious or unintended uses (e.g., disinformation, generating fake profiles, surveillance), fairness considerations (e.g., deployment of technologies that could make decisions that unfairly impact specific groups), privacy considerations, and security considerations.
        \item The conference expects that many papers will be foundational research and not tied to particular applications, let alone deployments. However, if there is a direct path to any negative applications, the authors should point it out. For example, it is legitimate to point out that an improvement in the quality of generative models could be used to generate deepfakes for disinformation. On the other hand, it is not needed to point out that a generic algorithm for optimizing neural networks could enable people to train models that generate Deepfakes faster.
        \item The authors should consider possible harms that could arise when the technology is being used as intended and functioning correctly, harms that could arise when the technology is being used as intended but gives incorrect results, and harms following from (intentional or unintentional) misuse of the technology.
        \item If there are negative societal impacts, the authors could also discuss possible mitigation strategies (e.g., gated release of models, providing defenses in addition to attacks, mechanisms for monitoring misuse, mechanisms to monitor how a system learns from feedback over time, improving the efficiency and accessibility of ML).
    \end{itemize}
    
\item {\bf Safeguards}
    \item[] Question: Does the paper describe safeguards that have been put in place for responsible release of data or models that have a high risk for misuse (e.g., pretrained language models, image generators, or scraped datasets)?
    \item[] Answer: \answerNA{} %
    \item[] Justification: 
    \item[] Guidelines:
    \begin{itemize}
        \item The answer NA means that the paper poses no such risks.
        \item Released models that have a high risk for misuse or dual-use should be released with necessary safeguards to allow for controlled use of the model, for example by requiring that users adhere to usage guidelines or restrictions to access the model or implementing safety filters. 
        \item Datasets that have been scraped from the Internet could pose safety risks. The authors should describe how they avoided releasing unsafe images.
        \item We recognize that providing effective safeguards is challenging, and many papers do not require this, but we encourage authors to take this into account and make a best faith effort.
    \end{itemize}

\item {\bf Licenses for existing assets}
    \item[] Question: Are the creators or original owners of assets (e.g., code, data, models), used in the paper, properly credited and are the license and terms of use explicitly mentioned and properly respected?
    \item[] Answer: \answerYes{} %
    \item[] Justification: We cite all datasets used, and add licenses to datasets wherever applicable. 
    \item[] Guidelines:
    \begin{itemize}
        \item The answer NA means that the paper does not use existing assets.
        \item The authors should cite the original paper that produced the code package or dataset.
        \item The authors should state which version of the asset is used and, if possible, include a URL.
        \item The name of the license (e.g., CC-BY 4.0) should be included for each asset.
        \item For scraped data from a particular source (e.g., website), the copyright and terms of service of that source should be provided.
        \item If assets are released, the license, copyright information, and terms of use in the package should be provided. For popular datasets, \url{paperswithcode.com/datasets} has curated licenses for some datasets. Their licensing guide can help determine the license of a dataset.
        \item For existing datasets that are re-packaged, both the original license and the license of the derived asset (if it has changed) should be provided.
        \item If this information is not available online, the authors are encouraged to reach out to the asset's creators.
    \end{itemize}

\item {\bf New assets}
    \item[] Question: Are new assets introduced in the paper well documented and is the documentation provided alongside the assets?
    \item[] Answer: \answerNA{} %
    \item[] Justification: There are no assets related to this paper. However, we do include code in the supplementary materials.
    \item[] Guidelines:
    \begin{itemize}
        \item The answer NA means that the paper does not release new assets.
        \item Researchers should communicate the details of the dataset/code/model as part of their submissions via structured templates. This includes details about training, license, limitations, etc. 
        \item The paper should discuss whether and how consent was obtained from people whose asset is used.
        \item At submission time, remember to anonymize your assets (if applicable). You can either create an anonymized URL or include an anonymized zip file.
    \end{itemize}

\item {\bf Crowdsourcing and research with human subjects}
    \item[] Question: For crowdsourcing experiments and research with human subjects, does the paper include the full text of instructions given to participants and screenshots, if applicable, as well as details about compensation (if any)? 
    \item[] Answer: \answerNA{} %
    \item[] Justification: 
    \item[] Guidelines:
    \begin{itemize}
        \item The answer NA means that the paper does not involve crowdsourcing nor research with human subjects.
        \item Including this information in the supplemental material is fine, but if the main contribution of the paper involves human subjects, then as much detail as possible should be included in the main paper. 
        \item According to the NeurIPS Code of Ethics, workers involved in data collection, curation, or other labor should be paid at least the minimum wage in the country of the data collector. 
    \end{itemize}

\item {\bf Institutional review board (IRB) approvals or equivalent for research with human subjects}
    \item[] Question: Does the paper describe potential risks incurred by study participants, whether such risks were disclosed to the subjects, and whether Institutional Review Board (IRB) approvals (or an equivalent approval/review based on the requirements of your country or institution) were obtained?
    \item[] Answer: \answerNA{} %
    \item[] Justification: 
    \item[] Guidelines:
    \begin{itemize}
        \item The answer NA means that the paper does not involve crowdsourcing nor research with human subjects.
        \item Depending on the country in which research is conducted, IRB approval (or equivalent) may be required for any human subjects research. If you obtained IRB approval, you should clearly state this in the paper. 
        \item We recognize that the procedures for this may vary significantly between institutions and locations, and we expect authors to adhere to the NeurIPS Code of Ethics and the guidelines for their institution. 
        \item For initial submissions, do not include any information that would break anonymity (if applicable), such as the institution conducting the review.
    \end{itemize}

\item {\bf Declaration of LLM usage}
    \item[] Question: Does the paper describe the usage of LLMs if it is an important, original, or non-standard component of the core methods in this research? Note that if the LLM is used only for writing, editing, or formatting purposes and does not impact the core methodology, scientific rigorousness, or originality of the research, declaration is not required.
    \item[] Answer: \answerNA{} %
    \item[] Justification: No LLMs were used in this research except for writing, editing, and formatting.
    \item[] Guidelines:
    \begin{itemize}
        \item The answer NA means that the core method development in this research does not involve LLMs as any important, original, or non-standard components.
        \item Please refer to our LLM policy (\url{https://neurips.cc/Conferences/2025/LLM}) for what should or should not be described.
    \end{itemize}

\end{enumerate}

\clearpage
\fi

\newpage
\section{Additional background on masked diffusion models}
\label{appendix:background}
Here, we will discuss additional background and formalisation of masked diffusion models (MDMs). This background is used to derive the NELBO of the masked diffusion model in \cref{appendix:nelbo} and the loss with arbitrary joints in \cref{appendix:joint_unmasking}.

\shortparagraph{Forward process details.} We first define the continuous-time forward process $q(\bw^t\mid\bw^0)$, which masks the data up to timestep $t\in [0, 1]$ using the forward process defined in \cref{eq:forward_process}. 
\begin{align}
	\label{eq:jump-forward}
	q(\bw^t\mid\bw^0) = \prod_{i=1}^{\wdim} \alpha_t\mathbbone[\w_i^t= \w_i^0]+(1-\alpha_t) \mathbbone[\w_i^t= \m]
\end{align}
Secondly, we need the \emph{reverse posterior} $q(\bw^s\mid\bw^t, \bw^0)$, which is the distribution of the initial state $\bw^0$ given the state at timestep $t$ and the final state. Here we assume $\w^t_i$ is either equal to the mask value $\m$ or to the value of $\w^0_i$, as otherwise the probability is not well-defined. 
The form for each case is (see \citep{sahoosimple}, A.2.1)
\begin{align}
	\label{eq:reverse_posterior}
	q(\bw^{s}\mid\bw^{t}, \bw^{0}) &= \prod_{i=1}^{\wdim} q(\w^s_i\mid\w^t_i, \w^0_i) \\
    \label{eq:reverse_posterior_condition_u}
    q(\w^s_i\mid\w^t_i=\w^0_i, \w^0_i) &= \mathbbone[\w^s_i=\w^0_i] \\ 
    \label{eq:reverse_posterior_condition_m}
    q(\w^s_i\mid\w^t_i=\m, \w^0_i) &= \frac{1-\alpha_{s}}{1-\alpha_t}\mathbbone[\w^s_i=\m] + \frac{\alpha_{s}-\alpha_t}{1-\alpha_t}\mathbbone[\w^s_i=\w^0_i]
\end{align}
We note that $q(\w^s_i\mid\w^t_i=\w^0_i, \w^0_i)$ refers to the probability of $\w^s_i$ conditioned on some value for the variable $\w^0_i$ and where the value of variable $\w^t_i$ equals this value. 
If $\w_i^t$ indeed is equal to the value of $\w^0_i$, the distribution deterministically returns that value. If it is masked instead, it either stays masked or turns into the value of $\w^0_i$ with a probability depending on $\alpha_t$. 

\shortparagraph{Additional notation.} 
We let $\masked_{\bw^{t}}= \{i: \w_i^{t} = \m\}$ refer to the dimensions that are masked in $\bw^{t}$. 
Similarly, $\unmasked_{\bw^{t}}= \{i: \w_i^{t} \neq \m\}$ is the set of unmasked dimensions of $\bw^t$. 
Furthermore, we will use $\bw^{s} \succeq \bw^{t}$ to denote that $\bw^s$ is a \emph{(partial) extension} of $\bw^t$. This means $\bw^{s}$ agrees on all unmasked dimensions of $\bw^t$ with $\bw^{t}$, that is, $w^{s}_i = w^{t}_i$ for all $i\in \unmasked_{\bw^{t}}$. 
We will also use $\bw^{0} \succeq^C \bw^{t}$ to denote that $\bw^0$ is a \emph{complete} extension that does not have any masked dimensions. 
Finally, we use notation such as $\bw^s_{\unmasked_{\bw^t}}$ to index $\bw^s$ using the set of indices $\unmasked_{\bw^t}$, the unmasked dimensions of $\bw^t$. 

\shortparagraph{Reverse process definition.} 
Using $p_\btheta(\bw^s\mid \bw^t)$ (\cref{eq:reverse_process}), we can express the intractable generative model $p^{\text{MDM}}_\btheta(\bw^0)$, for time discretisation $T$, as
\begin{align}
p^{\text{MDM}}_\btheta(\bw^0)&:=\sum_{\bW_{\setminus \{0\}}}\prod_{k=1}^{T} p_\btheta(\bw^{s(k)}\mid\bw^{t(k)}),
\end{align}
where the sum over $\bW_{\setminus \{0\}}$ iterates over all trajectories $\bw^{1}, \dots, \bw^{\frac{T-1}{T}}$ from fully masked $\bw^1=\bm$ to unmasked $\bw^0$, and $s(k)= \frac{k-1}{T}$ and $t(k) = \frac{k}{T}$ index the timesteps.

Several recent papers \citep{sahoosimple,shi2024simplified} proved that this model has a simple negative variational lower bound (NELBO) under a continuous-time process, that is, when $T\to\infty$. 
Given a dataset of samples $\bw^{0}$, this NELBO resembles a weighted cross-entropy loss: 
\begin{align}
	\label{eq:unmasking_loss}
	-\log p^{\text{MDM}}_\btheta(\bw^{0}) \leq\mathcal{L}^{\text{MDM}}=\mathbb{E}_{t\sim [0, 1], \bw^{t} \sim q(\bw^t\mid \bw^0)}\left[ \frac{\alpha'_t}{1-\alpha_t}  \sum_{i\in \masked_{\bw^{t}}} \log p_\btheta(\tilde{\w}^{0}_i = \w^{0}_i\mid\bw^{t})\right].
\end{align}
Here $\alpha'_t = \frac{\partial\alpha_t}{\partial t}$, $q(\bw^t\mid\bw^0)$ is computed with \cref{eq:jump-forward}, and the cross-entropy term computes the loss on the factors of the unmasking model $p_\btheta(\tilde{\w}^{0}_i \mid\bw^{t})$. 
When using the common linear noising schedule, then $\alpha_t = 1 - t$, $\frac{\alpha'_t}{1-\alpha_t}=-\frac{1}{t}$. 
This bound holds when the unmasking model $p_\btheta(\tilde{\bw}^{0}\mid\bw^{t})$ assigns 0 probability to the mask value (\emph{zero masking probabilities}), and assigns a probability of 1 to unmasked dimensions (\emph{carry-over unmasking}), i.e., for all $i\not\in \masked_{\bw^{t}}$, $p_\btheta(\tilde{\w}^{0}_i = \w^{0}_i\mid\bw^{t})=1$ \citep{sahoosimple}. 

\section{Analysis of the {\color{teal}output unmasking loss}}
\label{appendix:analysis-output-denoising-loss}
Here, we will discuss the {\bfseries\color{teal}output unmasking loss} $\mathcal{L}_\by$ in more detail, and relate it to other common loss functions in the NeSy literature. 
In our problem setup, we assume a program $\varphi: [V]^{\wdim} \rightarrow [V]^{\ydim}$ that maps concepts $\bw^0$ to outputs $\by^0$. 
Then, we defined the WMC in \cref{eq:neurosymbolic_predictor} as the probability that some $\bw^0$ maps to $\by^0$. This constraint can be understood as
\begin{equation}
    \mathbbone[\varphi(\bw^0)=\by^0]=\mathbbone\left[\bigwedge_{i=1}^{\ydim} \varphi(\bw^0)_i=\y^0_i\right].
\end{equation}
That is, we can see this setup as actually having $\ydim$ different programs, and we want each program to return the right output. 
Now, disregarding the weighting and sampling, $\mathcal{L}_\by$ is
\begin{align}
    \mathcal{L}_\by &= \sum_{i=1}^{\ydim} \log \sum_{\tilde{\bw}^0} p_\btheta(\tilde{\bw}^0\mid\bw^t, \bx) \mathbbone[\varphi(\tilde{\bw}^0)_i=\y^0_i] \\
    \label{eq:l-by-probabilities}
    &= \sum_{i=1}^{\ydim} \log p_\btheta(\tilde{\y}^0_i=\y^0_i\mid\bw^t, \bx)
\end{align}
This loss is a sum of $\ydim$ different WMC terms, one for each of the $\ydim$ different programs. 
$\mathcal{L}_\by$ assumes, in a vacuum, that these programs are independent, meaning we can sum the losses for each program independently. 
How could that be possible? 

This is actually a common property of continuous-time losses of discrete diffusion models. 
For instance, one can observe the same in the NELBO of MDMs in \cref{eq:unmasking_loss}. 
There, the goal is to reconstruct the (masked) dimensions of $\bw^0$ independently. 
In fact, to perfectly fit an MDM, the goal is merely to perfectly fit each of the $\wdim$ different conditional data marginals $p(\tilde{\w}^{0}_i\mid\bw^{t})$ perfectly, without regard for any dependencies between dimensions \citep{liu2024discrete}. 
The dependencies for the full MDM are handled by the iterative unmasking process, which changes the condition at each step. 
The same property holds for $\mathcal{L}_\by$: the dependencies between the different programs are (ideally) handled by different conditions $\bw^0$ at each step. 

We highlight that this loss is related to existing loss functions in the NeSy literature. 
In particular, for programs that implement conjunctive normal forms (CNFs), this loss is equivalent to the logarithm of the product t-norm, which is a common loss function in the NeSy literature \citep{vankriekenAnalyzingDifferentiableFuzzy2022,badreddineLogicTensorNetworks2022}. 
More precisely, if $\bw\in \{0, 1\}^{\wdim}$ models the $\wdim$ variables of the CNF and  $\by\in \{0, 1\}^{\ydim}$ the $\ydim$ clauses consisting of disjunctions of literals $l_{i1}\vee ...\vee l_{i,k_i}$, then $\varphi(\bw)_i=\bigvee_{j=1}^{k_i} l_{ij}$ computes the truth value of the $i$th clause of the CNF. 
Under the independence assumption, the probability that disjunction $i$ holds (that is, whether $\varphi(\bw)_i=1$) is 
\begin{align}
    p_\btheta(\y_i=1\mid\bx) = 1-\prod_{j=1}^{k_i} (1-p_\btheta(l_{ij}\mid\bx)) 
\end{align} 
which is equal to the product t-conorm of the probabilities of the literals. 
Finally, the logarithm product t-norm takes the logarithm over the product of these probabilities, implicitly assuming these clauses are independent: 
\begin{align}
    \mathcal{L}^{\text{Log-product}}=-\sum_{i=1}^\ydim \log  p_\btheta(\y_i=1\mid\bx). 
\end{align}
Note that, outside the reweighting with $\alpha'_t$, this is precisely what $\mathcal{L}_\by$ would compute for this problem (\cref{eq:l-by-probabilities}). 

This equality between $\mathcal{L}_\by$ and $\mathcal{L}^{\text{Log-product}}$ holds only for CNFs: for general programs, the product t-norm is not equal to the probability on the output of a program, unlike the disjunction case. 
For example, the different subprograms used in our experiments are not expressed as CNFs.  
Furthermore, our setup gives more flexibility even in the CNF case by allowing us to redefine what the dimensions of $\by$ represent. 
For instance, we can remove the independence assumption between a set of clauses by defining $\y_i$ as the conjunction of these clauses. 
In that sense, it is highly related to Semantic Strengthening \citep{ahmedSemanticStrengtheningNeurosymbolic2023}, which starts from $\mathcal{L}^{\text{Log-product}}$, and then dynamically joins clauses by building a probabilistic circuit to relax the independence assumption. 
This idea can be directly applied to our setup, which we leave as future work.

\section{Masked Diffusion with Arbitrary Joint Distributions}
\label{appendix:joint_unmasking}
In this section, we will prove \cref{thm:joint_unmasking} which states that the NELBO in \cref{eq:unmasking_loss} also holds for non-factorised unmasking models $p_\btheta(\tilde{\bw}^0\mid\bw^t)$. 
We use the notation introduced in \cref{appendix:background} and \cref{sec:masked_diffusion}.
During this proof, we will derive both discrete- and continuous-time versions of the NELBO.  
In this appendix, we will use $\bW_{\setminus 0}$ to refer to $\bw^{1/T}, ..., \bw^{1}$, $t=\frac{k}{T}$ and $s=\frac{k-1}{T}$. 
This result is related to the tractability result of \cite{campbell2022continuous}, namely that in a continuous-time process, the probability that two dimensions are unmasked at exactly the same time step in $[0, 1]$ is 0.

\begin{theorem}
    \label{thm:joint_unmasking}
    Let $p_\btheta(\tilde{\bw}^{0}\mid\bw^{t})$ be any conditional joint distribution over $\tilde{\bw}^{0}$ with conditional marginals $p_\btheta(\tilde{\w}_i\mid\bw^t)$ that satisfy the following assumptions for all $i\in \{1, \dots, \wdim\}$:
    \begin{enumerate} 
        \item \emph{Zero masking probabilities:} $p_\btheta(\tilde{\w}_i=\m\mid\bw^t)=0$. 
        \item \emph{Carry-over unmasking:} Given some $\bw^t\in (\vecdimC+1)^\wdim$, $p_\btheta(\tilde{\w}_i=\w^t_i\mid\bw^t)=1$. 
        \item \emph{Proper prior:} $p_\btheta(\bw^1) = \mathbbone[\bw^1=\bm]$. 
    \end{enumerate}    
    Let $p_\btheta(\bw^s\mid\bw^t)$ be the reverse process defined in \cref{eq:reverse_process} using $p_\btheta(\tilde{\bw}^{0}\mid\bw^{t})$ instead of a fully factorised model. Then as $T\to \infty$, 
    \begin{equation}
        -\log p^{\textnormal{MDM}}_\btheta(\bw^{0}) \leq\mathcal{L}^{\textnormal{MDM}}=\mathbb{E}_{t\sim [0, 1], \bw^{t} \sim q}\left[ \frac{\alpha'_t}{1-\alpha_t}  \sum_{i\in \masked_{\bw^{t}}} \log p_\btheta(\tilde{\w}^{0}_i = \w^{0}_i\mid\bw^{t})\right]. 
    \end{equation}
\end{theorem}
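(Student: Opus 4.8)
The plan is to follow the standard discrete-to-continuous derivation of the MDM NELBO (as in \citep{sahoosimple,shi2024simplified}), but to re-examine the single place where factorisation of $p_\btheta(\tilde{\bw}^0\mid\bw^t)$ is normally invoked and show that only its \emph{marginals} survive as $T\to\infty$. First I would write the usual evidence lower bound for the hierarchical model $p^{\textnormal{MDM}}_\btheta(\bw^0)=\sum_{\bW_{\setminus 0}}\prod_{k=1}^T p_\btheta(\bw^s\mid\bw^t)$ against the forward process, obtaining a reconstruction term, a prior-matching term, and a sum of denoising KL terms $\mathbb{E}_{q(\bw^t\mid\bw^0)}[D_{\mathrm{KL}}(q(\bw^s\mid\bw^t,\bw^0)\,\|\,p_\btheta(\bw^s\mid\bw^t))]$. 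The proper-prior assumption makes the prior-matching term vanish, since $q(\bw^1\mid\bw^0)=\mathbbone[\bw^1=\bm]=p_\btheta(\bw^1)$, and the reconstruction term is treated exactly as in the factorised case.

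The crux is each denoising KL. Fixing $\bw^t$, I would partition the support of $q(\bw^s\mid\bw^t,\bw^0)$ by the number of coordinates newly unmasked relative to $\bw^t$, using that the reverse posterior of \cref{eq:reverse_posterior} unmasks each masked coordinate independently to its $\bw^0$ value with probability $\frac{\alpha_s-\alpha_t}{1-\alpha_t}=O(1/T)$. For the zero-unmasking state $\bw^s=\bw^t$, both $q$ and $p_\btheta$ assign the same mass $\prod_{j\in\masked_{\bw^t}}\frac{1-\alpha_s}{1-\alpha_t}$ — because $q(\bw^s=\bw^t\mid\bw^t,\tilde{\bw}^0)$ is independent of $\tilde{\bw}^0$ — so the log-ratio is $0$. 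The key lemma is the single-unmasking case: for $\bw^s$ that unmasks exactly coordinate $i$ to value $v$, plugging the joint into \cref{eq:reverse_process} and summing over $\tilde{\bw}^0$ collapses to the marginal, giving $p_\btheta(\bw^s\mid\bw^t)=p_\btheta(\tilde{\w}^0_i=v\mid\bw^t)\,\frac{\alpha_s-\alpha_t}{1-\alpha_t}\prod_{j\neq i}\frac{1-\alpha_s}{1-\alpha_t}$, since the factorised $q(\bw^s\mid\bw^t,\tilde{\bw}^0)$ activates only coordinate $i$ and marginalises out all others. The $\frac{\alpha_s-\alpha_t}{1-\alpha_t}$ factors then cancel against $q(\bw^s\mid\bw^t,\bw^0)$, leaving log-ratio $-\log p_\btheta(\tilde{\w}^0_i=\w^0_i\mid\bw^t)$ weighted by $\frac{\alpha_s-\alpha_t}{1-\alpha_t}$, which is exactly the factorised contribution.

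I expect the main obstacle to be controlling the multi-unmasking states, which is precisely where a non-factorised $p_\btheta$ could introduce dependencies absent from the classical proof. Here I would bound their contribution: any $\bw^s$ unmasking $\ell\geq 2$ coordinates has $q$-mass $O(1/T^\ell)$, and the same marginalisation argument shows its log-ratio equals $-\log p_\btheta(\tilde{\w}^0_{i_1}=v_1,\dots\mid\bw^t)+o(1)$, which stays bounded; summed over the $O(|\masked_{\bw^t}|^\ell)$ such states the per-step contribution is $O(1/T^2)$, hence $O(1/T)$ across all $T$ steps and vanishing in the limit. This is the step that leans on the continuous-time fact (cf. \citep{campbell2022continuous}) that two coordinates almost surely never unmask simultaneously. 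Finally, using $\alpha_s-\alpha_t\approx-\alpha_t'/T$ and $\frac{1-\alpha_s}{1-\alpha_t}\to 1$, the surviving single-unmasking terms form a Riemann sum $\frac{1}{T}\sum_k\to\int_0^1\mathrm{d}t$, yielding $\mathbb{E}_{t,\bw^t}\big[\frac{\alpha_t'}{1-\alpha_t}\sum_{i\in\masked_{\bw^t}}\log p_\btheta(\tilde{\w}^0_i=\w^0_i\mid\bw^t)\big]$, which is the claimed bound.
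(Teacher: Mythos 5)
Your proposal follows essentially the same route as the paper's proof: the standard ELBO decomposition with the proper prior killing the prior-matching term, exact cancellation of the reverse-posterior $\alpha$-factors inside each denoising KL (so that, with carry-over unmasking restricting the support, only $p_\btheta$-marginals of the newly unmasked coordinates survive in the log-ratio), the observation that states unmasking $\ell\geq 2$ coordinates carry $q$-mass $O(1/T^{\ell})$ and hence vanish after the $T$-weighting, and a Riemann-sum passage to the continuous-time integral. The only organisational difference is that the paper first proves an exact finite-$T$ identity for the KL (yielding the joint marginal over the full set of newly unmasked dimensions) and then takes the limit, whereas you partition by the number of newly unmasked coordinates directly; even your unexamined assertion that the multi-unmasking log-ratios stay bounded mirrors the same implicit step in the paper's own limit argument.
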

\begin{proof}
    We start with a standard variational diffusion models derivation that closely follows those presented in \cite{kingmaVariationalDiffusionModels2021,luo2022understanding}.
\begin{align*}
    -\log p^{\textnormal{MDM}}_\btheta(\bw^{0})&= -\log\sum_{\bW_{\setminus 0}} p_\btheta(\bW) \leq -\mathbb{E}_{q(\bW_{\setminus 0}\mid\bw^{0})}\left[\log \frac{p_\btheta(\bW)}{q(\bW_{\setminus 0}\mid\bw^{0})}\right] 
\end{align*}

Now we reduce the nominator with Bayes theorem and by conditioning on $\bw^0$, which is conditionally independent given $\bw^{s}$:
\begin{equation}
    \label{eq:q-transformation-elbo}
\begin{aligned}
    q(\bW_{\setminus 0}\mid\bw^{0}) &=  q(\bw^{1/T}\mid\bw^{0}) \prod_{k=2}^T q(\bw^{t}\mid\bw^{s}) = q(\bw^{1/T}\mid\bw^{0}) \prod_{k=2}^T q(\bw^{t}\mid\bw^{s}, \bw^{0}) \\
    &= q(\bw^{1/T}\mid\bw^{0}) \prod_{k=2}^T \frac{q(\bw^{s}\mid\bw^{t}, \bw^{0})q(\bw^{t}\mid\bw^{0})}{q(\bw^{s}\mid\bw^{0})} =q(\bw^{1}\mid\bw^{0})\prod_{k=2}^T q(\bw^{s}\mid\bw^{t}, \bw^{0}),
\end{aligned}
\end{equation}
where in the last step we use that the \( q(\bw^{t}\mid\bw^{0}) \) and \( q(\bw^{s}\mid\bw^{0}) \) cancel out in the product over \( t \), leaving only \( q(\bw^{t}\mid\bw^{s}, \bw^{0}) \). Filling in \cref{eq:reverse_process},
\begin{align}
    &= -\mathbb{E}_{q(\bW_{\setminus 0}\mid\bw^{0})}\Bigg[\log \frac{p_\btheta(\bw^{0}\mid\bw^{1/T})p(\bw^{1})}{q(\bw^{1}\mid\bw^{0}) } \frac{\prod_{k=2}^T p_\btheta(\bw^{s}\mid\bw^{t}) }{\prod_{k=2}^T q(\bw^{s}\mid\bw^{t}, \bw^0) } \Bigg]\\ 
    &= -\mathbb{E}_{q(\bW_{\setminus 0}\mid\bw^{0})}\Bigg[\log p_\btheta(\bw^{0}\mid\bw^{1/T})+\log \frac{p(\bw^{1})}{q(\bw^{1}\mid\bw^{0}) } + \log \frac{\prod_{k=2}^T p_\btheta(\bw^{s}\mid\bw^{t}) }{\prod_{k=2}^T q(\bw^{s}\mid\bw^{t}, \bw^0) } \Bigg] \\
    \label{eq:joint-initial-deriv}
    &= \underbrace{\mathbb{E}_{q(\bw^{1/T}\mid\bw^{0})}\Bigg[-\log p_\btheta(\bw^{0}\mid\bw^{1/T})\Bigg]}_{\text{$\mathcal{L}_{\text{rec}, T}$: reconstruction loss}} + \sum_{k=2}^T\underbrace{\mathbb{E}_{q(\bw^{t}\mid\bw^0)} \text{KL}[q(\bw^{s}\mid\bw^{t}, \bw^{0}) \| p_\btheta(\bw^{s}\mid\bw^{t})]}_{\text{$\mathcal{L}_{\text{unm}, T, k}$: unmasking loss at timestep $k$}} +G
\end{align}
where $G=\mathbb{E}_{q(\bw^1\mid\bw^0)}\log \frac{p(\bw^{1})}{q(\bw^{1}\mid\bw^{0}) }$ is a constant and equal to $0$ if $p(\bw^1) = \mathbbone[\bw^1=\bm]$.

\begin{lemma}
    \label{lemma:reconstruction-joint}
    Using the assumptions of \cref{thm:joint_unmasking}, for any integer $T>1$, 
    \begin{equation}
        \mathcal{L}_{\textnormal{rec}, T} = \mathbb{E}_{q(\bw^{1/T}\mid\bw^{0})} [ -\log p_\btheta(\tilde{\bw}^0=\bw^0\mid\bw^{1/T}) ]
    \end{equation}
\end{lemma}
\begin{proof}
First note that, using \cref{eq:reverse_posterior_condition_m},
\begin{align*}
    q(\w^{0}_i\mid\w^{1/T}_i=\m, \tilde{\w}^{0}_i) =& \frac{1-\alpha_{0}}{1-\alpha_{1/T}}\mathbbone[\w^{0}_i=\m] + \frac{\alpha_{0}-\alpha_{1/T}}{1-\alpha_{1/T}}\mathbbone[\tilde{\w}^{0}_i=\w^0_i]\\
    =& \frac{1-1}{1-\alpha_{1/T}}\cdot 0 + \frac{1-\alpha_{1/T}}{1-\alpha_{1/T}}\mathbbone[\w^{0}_i=\w^0_i]= \mathbbone[\tilde{\w}^{0}_i=\w^0_i]
\end{align*}
since no elements of $\tilde{\bw}^{0}$ are masked and $\alpha_0=1$ by definition, and so combined with \cref{eq:reverse_posterior_condition_u}, we get $q(\w^{0}_i\mid\w^{1/T}_i, \tilde{\w}^{0}_i) = \mathbbone[\tilde{\w}^{0}_i=\w^0_i]$. Therefore, 
\begin{align*}
\mathbb{E}_{q(\bw^{1/T}\mid\bw^{0})} [-\log p_\btheta(\bw^{0}\mid\bw^{1/T})] 
&= \mathbb{E}_{q(\bw^{1/T}\mid\bw^{0})} \left[ -\log \sum_{\tilde{\bw}_0} p_\btheta(\tilde{\bw}^0\mid\bw^{1/T}) \prod_{i=1}^\wdim q(\w^{0}_i\mid\w^{1/T}_i, \tilde{\w}^{0}_i) \right] \\
&= \mathbb{E}_{q(\bw^{1/T}\mid\bw^{0})} \left[ -\log \sum_{\tilde{\bw}_0} p_\btheta(\tilde{\bw}^0\mid\bw^{1/T}) \prod_{i=1}^\wdim \mathbbone[\tilde{\w}^{0}_i=\w^0_i] \right]\\
&= \mathbb{E}_{q(\bw^{1/T}\mid\bw^{0})} [ -\log p_\btheta(\tilde{\bw}^0=\bw^0\mid\bw^{1/T}) ]
\end{align*}
Where we use that the only nonzero term in the sum is when $\tilde{\bw}^0=\bw^0$.
\end{proof}

Next, we focus on $\mathcal{L}_{\textnormal{unm}, T, k}$ in \cref{eq:joint-initial-deriv}. 
The standard derivation of the MDM NELBO in \cite{sahoosimple} computes the dimension-wise KL-divergence between the forward and reverse process, and then sums. 
This is not possible in our setting because we assume arbitrary joints for the unmasking model, and so the KL-divergence does not decompose trivially. 

\begin{lemma}
    \label{lemma:unmasking-joint}
    Using the assumptions of \cref{thm:joint_unmasking}, for any integer $T>1$ and $k\in \{2, \dots, T\}$, 
    \begin{equation}
    \begin{aligned}
        \mathcal{L}_{\textnormal{unm}, T, k} &= \mathbb{E}_{q(\bw^{t}\mid\bw^0)} \textnormal{KL}[q(\bw^{s}\mid\bw^{t}, \bw^{0}) \| p_\btheta(\bw^{s}\mid\bw^{t})] \\
        &= \mathbb{E}_{q(\bw^{s},\bw^{t}\mid \bw^{0})} -\log p_\btheta(\tilde{\bw}^0_{\unmasked_{\bw^{s}}\setminus \unmasked_{\bw^{t}}}=\bw^0_{\unmasked_{\bw^{s}}\setminus \unmasked_{\bw^{t}}}\mid \bw^{t})
    \end{aligned}
    \end{equation}
\end{lemma}
\begin{proof}
We first consider what terms in the KL-divergence in $\mathcal{L}_{\text{unm}, T, k}$ are nonzero. 
First, note that $\bw^{t}$ needs to extend $\bw^{s}$ (i.e., $\bw^{s} \succeq \bw^{t}$) as otherwise $q(\bw^s\mid\bw^t, \bw^0)=0$ by \cref{eq:reverse_posterior_condition_u}. 
Next, the unmasked dimensions in $\bw^s$ need to be consistent with $\bw^0$ by \cref{eq:reverse_posterior_condition_m}, in other words, $\bw^0\succeq \bw^s$. 
Then, the $|\masked_{\bw^s}|$ dimensions that stay unmasked get a factor of $\frac{1-\alpha_{s}}{1-\alpha_{t}}$, while the $|\masked_{\bw^t}|-|\masked_{\bw^s}|$ dimensions that become unmasked get a factor of $\frac{\alpha_{s}-\alpha_{t}}{1-\alpha_{t}}$. 
Assuming $\bw^0\succeq^C \bw^t$, we have
\begin{equation}
    \label{eq:reverse-posterior-multidim}
    q(\bw^s\mid\bw^t, \bw^0)=\begin{cases} \left(\frac{1-\alpha_{s}}{1-\alpha_{t}}\right)^{|M_{\bw^{s}}|}  \left(\frac{\alpha_{s}-\alpha_{t}}{1-\alpha_{t}}\right)^{|M_{\bw^{t}}|-|M_{\bw^{s}}|} \quad &\text{if } \bw^0\succeq \bw^s\succeq \bw^t, \\
        0\quad &\text{otherwise.}
    \end{cases}
\end{equation}
Filling this into the KL of $\mathcal{L}_{\text{unm}, T, k}$,
\begin{equation}
\begin{aligned}
    &\mathbb{E}_{q(\bw^t\mid\bw^0)}\text{KL}[q(\bw^{s}\mid\bw^{t}, \bw^{0})\| p_\btheta(\bw^{s}\mid\bw^{t}, \bw^{0})] \\
    =&\mathbb{E}_{q(\bw^t\mid\bw^0)}\sum_{\bw^0\succeq\bw^{s}\succeq \bw^{t} } q(\bw^{s}\mid\bw^{t}, \bw^{0})\log \frac{q(\bw^{s}\mid\bw^{t}, \bw^{0})}{\sum_{\tilde{\bw}^{0}} p_\btheta(\tilde{\bw}^{0}\mid\bw^{t}) q(\bw^{s}\mid \bw^{t}, \tilde{\bw}^{0})} \\
\end{aligned}
\end{equation}
Now because of the \emph{carry-over unmasking} assumption, we know that the only $\tilde{\bw}^{0}$'s getting positive probabilities are those that extend $\bw^{t}$. 
Focusing just on the log-ratio above and using \cref{eq:reverse-posterior-multidim} and \cref{eq:reverse_posterior_condition_m} we have 
\begin{align*}
    \label{eq:kl-w-ratio}
    & \log \frac{ \left(\frac{1-\alpha_{s}}{1-\alpha_{t}}\right)^{|M_{\bw^{s}}|}  \left(\frac{\alpha_{s}-\alpha_{t}}{1-\alpha_{t}}\right)^{|M_{\bw^{t}}|-|M_{\bw^{s}}|} }{\sum_{\tilde{\bw}^{0}\succeq \bw^{t}} p_\btheta(\tilde{\bw}^{0}\mid \bw^{t}) \left(\frac{1-\alpha_{s}}{1-\alpha_{t}}\right)^{|M_{\bw^{s}}|}  \left(\frac{\alpha_{s}-\alpha_{t}}{1-\alpha_{t}}\right)^{|M_{\bw^{t}}|-|M_{\bw^{s}}|} \prod_{i\in \unmasked_{\bw^{s}}\setminus \unmasked_{\bw^{t}}} \mathbbone[\tilde{\w}^{0}_i=\w^{0}_i]} \\
    =& -\log \sum_{\tilde{\bw}^{0}\succeq \bw^{t}} p_\btheta(\tilde{\bw}^{0}\mid \bw^{t}) \prod_{i\in \unmasked_{\bw^{s}}\setminus \unmasked_{\bw^{t}}} \mathbbone[\tilde{\w}^{0}_i=\w^{0}_i] 
    = -\log \sum_{\tilde{\bw}^{0}\succeq \bw^{s}} p_\btheta(\tilde{\bw}^{0}\mid \bw^{t}),
\end{align*}
since the ratio's involving $\alpha_t$ and $\alpha_s$ are independent of $\tilde{\bw}^{0}$ and can be moved out of the sum, dividing away. Then note that $\prod_{i\in \unmasked_{\bw^{s}}\setminus \unmasked_{\bw^{t}}} \mathbbone[\tilde{\w}^{0}_i=\w^{0}_i]$ also requires that $\tilde{\bw}^{0}$ extends $\bw^{s}$.

Giving the denoising loss:
\begin{align}
    \mathcal{L}_{\text{unm}, T, k}&=\mathbb{E}_{q(\bw^t\mid \bw^0)}\sum_{\bw^0\succeq\bw^{s}\succeq \bw^{t}}  -q(\bw^{s}\mid \bw^{t}, \bw^{0}) \log \sum_{\tilde{\bw}^{0}\succeq \bw^{s}} p_\btheta(\tilde{\bw}^{0}\mid \bw^{t}) \\
    &= \mathbb{E}_{q(\bw^{s},\bw^{t}\mid  \bw^{0})} -\log \sum_{\tilde{\bw}^{0}\succeq \bw^{s}} p_\btheta(\tilde{\bw}^{0}\mid \bw^{t})=\mathbb{E}_{q(\bw^{s},\bw^{t}\mid  \bw^{0})} -\log p_\btheta(\tilde{\bw}^0_{\unmasked_{\bw^s}}=\bw^t_{\unmasked_{\bw^s}}\mid \bw^t) \\
    &=\mathbb{E}_{q(\bw^{s},\bw^{t}\mid  \bw^{0})} -\log p_\btheta(\tilde{\bw}^0_{\unmasked_{\bw^t}}=\bw^t_{\unmasked_{\bw^t}}, \tilde{\bw}^0_{\unmasked_{\bw^s}\setminus \unmasked_{\bw^t}}=\bw^s_{\unmasked_{\bw^s}\setminus \unmasked_{\bw^t}}\mid \bw^t) \\
    \label{eq:denoising-joint-independence}
    &=\mathbb{E}_{q(\bw^{s},\bw^{t}\mid  \bw^{0})} -\log p_\btheta(\tilde{\bw}^0_{\unmasked_{\bw^t}}=\bw^t_{\unmasked_{\bw^t}}\mid \bw^t) p_\btheta(\tilde{\bw}^0_{\unmasked_{\bw^s}\setminus \unmasked_{\bw^t}}=\bw^s_{\unmasked_{\bw^s}\setminus \unmasked_{\bw^t}}\mid \bw^t) \\
    \label{eq:denoising-joint}
    &= \mathbb{E}_{q(\bw^{s},\bw^{t}\mid  \bw^{0})} -\log p_\btheta(\tilde{\bw}^0_{\unmasked_{\bw^s}\setminus \unmasked_{\bw^t}}=\bw^s_{\unmasked_{\bw^s}\setminus \unmasked_{\bw^t}}\mid \bw^t),
\end{align}
where we use the carry-over unmasking assumption twice. In \cref{eq:denoising-joint}, we use that $p_\btheta(\tilde{\bw}^0_{\unmasked_{\bw^t}}=\bw^t_{\unmasked_{\bw^t}}\mid \bw^t)=1$ because  $p_\btheta(\tilde{\w}^0_i=\w^t_i\mid \bw^t)=1$ for all $i\in \unmasked_{\bw^t}$, and so the joint over the variables $\tilde{\bw}^0_{\unmasked_{\bw^t}}$ must also be deterministic and return $\bw^t_{\unmasked_{\bw^t}}$. 
Similarly, in \cref{eq:denoising-joint-independence}, we use that $\tilde{\bw}^0_{\unmasked_{\bw^t}}$ is conditionally independent of $\tilde{\bw}^0_{\unmasked_{\bw^s}\setminus \unmasked_{\bw^t}}$ given $\bw^t$ since the support of $\tilde{\bw}^0_{\unmasked_{\bw^t}}$ has only one element. 
\end{proof}

Combining \cref{eq:joint-initial-deriv,lemma:reconstruction-joint,lemma:unmasking-joint}, we get the discrete-time loss:
\begin{equation}
    \label{eq:loss-discrete}
\begin{aligned}
    \mathcal{L}^{\text{MDM}}_T =&\mathbb{E}_{q(\bw^{\frac{1}{T}}\mid \bw^{0})}[ -\log p(\tilde{\bw}^0=\bw^{0}\mid \bw^{\frac{1}{T}})] +\\
    & \sum_{k=2}^T \mathbb{E}_{q(\bw^{s}, \bw^{t}\mid  \bw^{0})} \left[-\log p_\btheta(\tilde{\bw}^0_{\unmasked_{\bw^s}\setminus \unmasked_{\bw^t}}=\bw^s_{\unmasked_{\bw^s}\setminus \unmasked_{\bw^t}}\mid \bw^t)\right].
\end{aligned}
\end{equation}
$p_\btheta(\tilde{\bw}^0_{\unmasked_{\bw^{s}}\setminus \unmasked_{\bw^{t}}}=\bw^0_{\unmasked_{\bw^{s}}\setminus \unmasked_{\bw^{t}}}\mid  \bw^{t})$ is the marginal probability of the newly unmasked dimensions in $\bw^s$: $\unmasked_{\bw^{s}}\setminus \unmasked_{\bw^{t}}$. 
Therefore, computing the discrete-time loss requires being able to be compute conditional marginal distributions over multiple variables. 
Of course, this is tractable for fully factorised distributions, in which case it's just a product of individual marginals \citep{sahoosimple}. 
This loss can be estimated by sampling pairs $\bw^s$ and $\bw^t$, and can be further simplified depending on the form of $p_\btheta$.  

Next, we consider $\mathcal{L}_T$ as $T\rightarrow \infty$. We will show that this allows us to marginalise out $\bw^s$, reducing the variance. 
We will do this by considering the two loss terms individually, and letting $T\rightarrow \infty$. 

\begin{lemma}
    \label{lemma:reconstruction-joint-continuous}
    Using the assumptions of \cref{thm:joint_unmasking}, 
    \begin{equation}
        \lim_{T\rightarrow \infty}\mathcal{L}_{\textnormal{rec}, T} = 0
    \end{equation}
\end{lemma}
\begin{proof}
Recall that in discrete time this is equal to (see \cref{lemma:reconstruction-joint})
\begin{align}
    \mathcal{L}_{\text{rec}, T}=\mathbb{E}_{q(\bw^{\frac{1}{T}}\mid \bw^{0})} -\log p(\tilde{\bw}^0=\bw^{0}\mid \bw^{\frac{1}{T}})  
\end{align}
Note that $q(\w^{1/T}_i=\m\mid \w^{0}_i) = 1-\alpha_{1/T}$. Then, $\lim_{T\rightarrow \infty} \alpha_{1/T} = \lim_{t\rightarrow 0}\alpha_t = 1$ by continuity and monotonicity of $\alpha_t$, giving $\lim_{T\rightarrow \infty} q(\w^{1/T}_i=\m\mid \w^{0}_i) = 0$. Therefore, asymptotically, for all $\bw^{1/T}\neq \bw^0$, we are left with a term that tends to 0 and a constant term independent of $T$, meaning the only relevant element of the sum is $\bw^{1/T}=\bw^0$: 
\begin{align}
    &\lim_{T\rightarrow \infty}\sum_{\bw^{1/T}} -q(\bw^{1/T}\mid \bw^{0}) \log\sum_{\tilde{\bw}^{0}} p_\btheta(\tilde{\bw}^{0}\mid \bw^{1/T})  
    =\lim_{T\rightarrow \infty}-\log\sum_{\tilde{\bw}^{0}} p_\btheta(\tilde{\bw}^{0}\mid \bw^{0})  \\
    =&-\log\sum_{\tilde{\bw}^{0}} p_\btheta(\tilde{\bw}^{0}\mid \bw^{0}) = \log 1 = 0
\end{align}
where we use the carry-over unmasking assumption to get the last equality.
\end{proof}

\begin{lemma}
    \label{lemma:unmasking-joint-continuous}
    Using the assumptions of \cref{thm:joint_unmasking}, 
    \begin{equation}
        \lim_{T\rightarrow \infty}\sum_{k=2}^\infty\mathcal{L}_{\textnormal{unm}, T, k} = \mathbb{E}_{t\sim (0, 1]q(\bw^{t}\mid \bw^{0})}[\frac{\alpha_t'}{1-\alpha_t}\sum_{i\in M_{\bw^{t}}} \log p_\btheta(\tilde{\w}^{0}_i=\w^{0}_i\mid \bw^{t})]
    \end{equation}
\end{lemma}
\begin{proof}
Instead of having a sum over $T-1$ timesteps, each computing a KL, we will now sample some $t\sim \{\frac{2}{T}, ..., 1\}$, redefining $s:=t -\frac{1}{T}$. Then, we will weight the result by $T-1$. Using \cref{lemma:unmasking-joint},
\begin{align}
    &\lim_{T\rightarrow \infty} \mathcal{L}_{\text{unm}, T} \nonumber\\
     = & \lim_{T\rightarrow \infty}  \mathbb{E}_{t\sim \{\frac{2}{T}, ..., 1\}}\mathbb{E}_{q(\bw^{t}\mid \bw^{0})}[(T-1)\text{KL}[ q(\bw^{s}\mid \bw^{t}, \bw^{0})\|p_\btheta(\bw^{s}\mid \bw^{t})]] \\
    =&   \mathbb{E}_{t\sim (0, 1]}\mathbb{E}_{q(\bw^{t}\mid \bw^{0})}[-\sum_{\bw^0\succeq\bw^{s}\succeq \bw^{t}}\lim_{T\rightarrow \infty}(T-1)  q(\bw^{s}\mid \bw^{t}, \bw^{0})\log p_\btheta(\tilde{\bw}^0_{\unmasked_{\bw^s}\setminus \unmasked_{\bw^t}}=\bw^s_{\unmasked_{\bw^s}\setminus \unmasked_{\bw^t}}\mid \bw^t)].
\end{align}
Assuming that $\bw^0\succeq \bw^{s}\succeq \bw^{t}$, recall $q(\bw^{s}\mid \bw^{t}, \bw^{0}) = \left(\frac{1-\alpha_{s}}{1-\alpha_t}\right)^{|M_{\bw^{s}}|}  \left(\frac{\alpha_{s}-\alpha_t}{1-\alpha_t}\right)^{|M_{\bw^{t}}|-|M_{\bw^{s}}|}$, and assume at least one dimension becomes unmasked: $|M_{\bw^{t}}|- |M_{\bw^{s}}|> 0$. Then, using that $\lim_{T\rightarrow \infty} \frac{1-\alpha_s}{1-\alpha_t} = 1$, we get 
\begin{align}
    &\lim_{T\rightarrow \infty} (T-1) q(\bw^{s}\mid \bw^{t}, \bw^{0}) 
    = \lim_{T\rightarrow \infty} T \left(\frac{1-\alpha_{s}}{1-\alpha_t}\right)^{|M_{\bw^{s}}|}  \left(\frac{\alpha_{s}-\alpha_t}{1-\alpha_t}\right)^{|M_{\bw^{t}}|-|M_{\bw^{s}}|} \\
    =& \lim_{T\rightarrow \infty}  \frac{T(\alpha_{s}-\alpha_t)}{1-\alpha_t}\left(\frac{\alpha_{s}-\alpha_t}{1-\alpha_t}\right)^{|M_{\bw^{t}}|-|M_{\bw^{s}}|-1} 
\end{align}
Next, using that $\alpha_t$ is differentiable, consider the first-order Taylor expansion of $\alpha$ around $t$ to evaluate $\alpha_s$: $\alpha_{s} = \alpha_t - \frac{1}{T}\alpha_t' + O(\frac{1}{T^2})$. Then $T(\alpha_{s} -\alpha_{t}) = T(\alpha_t - \frac{1}{T}\alpha_t' + O(\frac{1}{T^2}) - \alpha_t) = - \alpha_t' + O(\frac{1}{T})$. And so $\lim_{T\rightarrow \infty} T(\alpha_{s} -\alpha_{t}) = - \alpha_t'$. 

Now if $|M_{\bw^{t}}|- |M_{\bw^{s}}|\geq 2$, then the following term appears: $T(\alpha_{s} -\alpha_{t})^2 = T(-\frac{1}{T}\alpha_t' + O(\frac{1}{T^2}))^2 = T(\frac{\alpha_t'^2}{T^2} + O(\frac{1}{T^3})) = \frac{\alpha_t'^2}{T} + O(\frac{1}{T^2})$. And so $\lim_{T\rightarrow \infty} T(\alpha_{s} -\alpha_{t})^2 = \lim_{T\rightarrow \infty} \frac{\alpha_t'^2}{T} +  O(\frac{1}{T^2}) = 0$. 

Therefore, the only $\bw^s$ in the sum with a non-zero contribution are where $|M_{\bw^{t}}|- |M_{\bw^{s}}| \leq 1$, that is, when $\bw^{s}$ unmasks at most one dimension of $\bw^{t}$. When no dimensions are unmasked, $q(\bw^s\mid \bw^t, \bw^0)=1$, and if $\bw^s$ unmasks one dimension, we have $q(\bw^s\mid \bw^t, \bw^0)=-\alpha_t'$. 

If $\bw^s$ does not unmask any dimensions, then there are no variables in $\tilde{\bw}^0_{\unmasked_{\bw^s}\setminus \unmasked_{\bw^t}}$ to compute the probability over in $-\log p_\btheta(\tilde{\bw}^0_{\unmasked_{\bw^s}\setminus \unmasked_{\bw^t}}=\bw^s_{\unmasked_{\bw^s}\setminus \unmasked_{\bw^t}}\mid \bw^t)$, giving probability $1$ and a term equal to $0$. 
Next, if $\bw^{s}$ unmasks only dimension $i\in \masked_{\bw^t}$ such that $\w^{s}_i=\w^0_i$, then $p_\btheta(\tilde{\bw}^0_{\unmasked_{\bw^s}\setminus \unmasked_{\bw^t}}=\bw^s_{\unmasked_{\bw^s}\setminus \unmasked_{\bw^t}}\mid \bw^t)=p_\btheta(\tilde{\w}^0_i=\w^0_i\mid \bw^t)$. 

Therefore, 
\begin{align}
    \lim_{T\rightarrow \infty}\sum_{k=2}^\infty\mathcal{L}_{\textnormal{unm}, T, k} =\mathcal{L}^{\text{MDM}}=&\mathbb{E}_{t\sim (0, 1]q(\bw^{t}\mid \bw^{0})}[\frac{\alpha_t'}{1-\alpha_t}\sum_{i\in M_{\bw^{t}}} \log p_\btheta(\tilde{\w}^{0}_i=\w^{0}_i\mid \bw^{t})]
\end{align}
\end{proof}
Summing \cref{lemma:reconstruction-joint-continuous,lemma:unmasking-joint-continuous} completes the proof of \cref{thm:joint_unmasking}. 
\end{proof}

\section{\Methodnames: formal definition and NELBO derivation}
\begin{figure}
    \centering
    \includegraphics[width=0.8\linewidth]{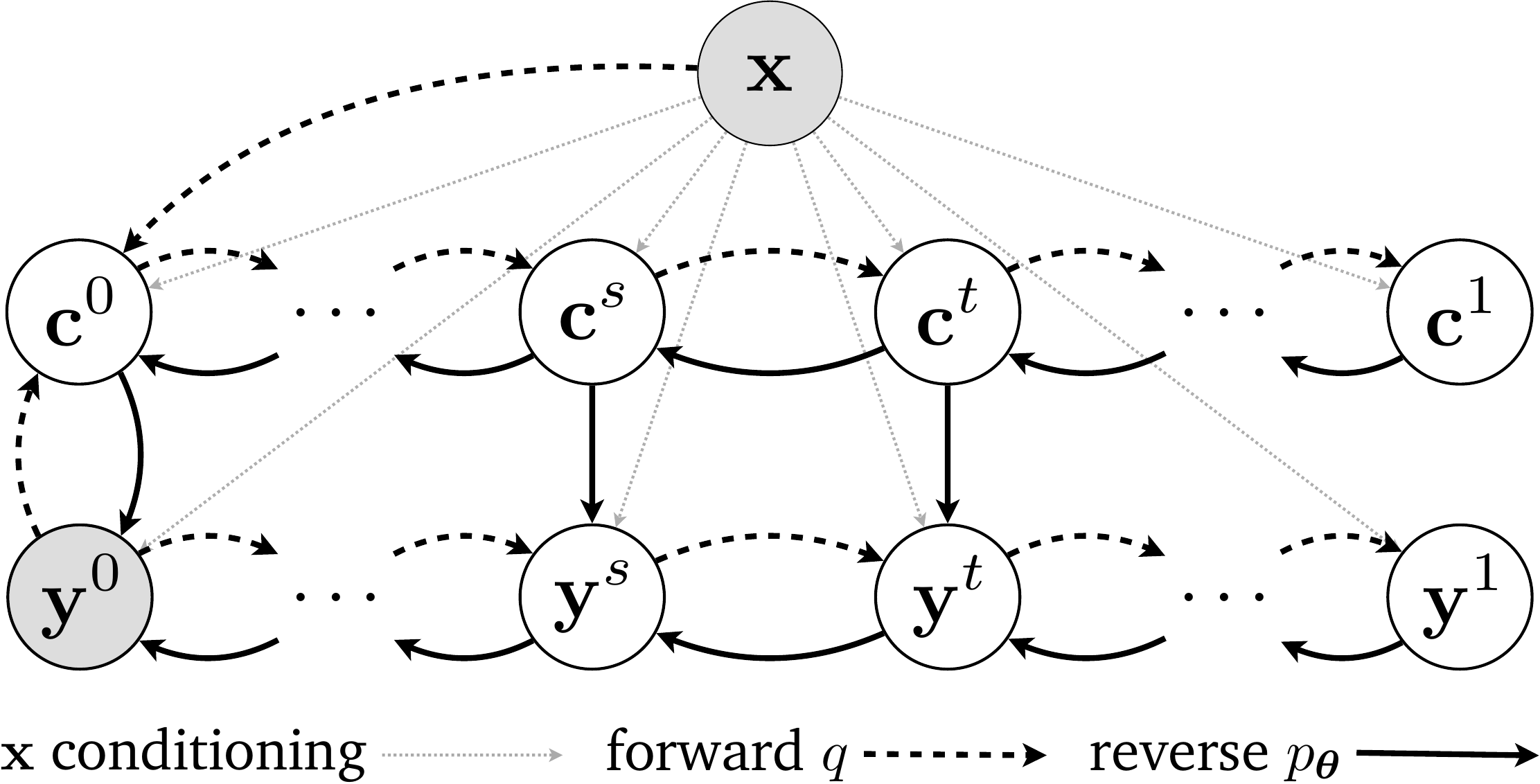}
    \caption{\textbf{Probabilistic graphical model for \methodname.} The forward process $q$, indicated by striped arrows, masks both concepts $\bw$ and outputs $\by$. Since only $\by^0$ is observed, a variational distribution $q_\btheta$ has to predict $\bw^0$ from $\by^0$ and $\bx$. The reverse process, with regular arrows, unmasks both concepts $\bw$ and outputs $\by$, transforming concepts into outputs at every time step.}
    \label{fig:pgmsimple}
\end{figure}
In this section, we will formally define derive the NELBO for \methodname. 
Throughout this section, we assume the same notation as in \cref{appendix:background}. We refer to the graphical model in \cref{fig:pgmsimple} to set up the model and the notation. 

\subsection{Formal model definition}
\label{appendix:formal-setup}
First, we will define the discrete-time data log-likelihood. 
We let $\bW$ be the trajectory of partially masked concepts over timesteps $\bw^{0}, \bw^{\frac{1}{T}}, \bw^{\frac{2}{T}},\dots, \bw^{\frac{T-1}{T}},\bw^{1}$, and similarly $\bY_{\setminus \{0\}}$ is the trajectory $\by^{\frac{1}{T}}, \by^{\frac{2}{T}},\dots, \by^{\frac{T-1}{T}},\by^{1}$
Marginalising out all latent variables according to the graphical model in the bottom of \cref{fig:pgmsimple}, we define the data log-likelihood $p_\btheta^{\methodshort}(\by^0\mid\bx)$ of outputs $\by$ given inputs $\bx$ as:
\begin{align}
\label{eq:nesydm-data-log-likelihood}
    p^{\methodshort}_\btheta(\by^0 \mid \bx) := \sum_{\bY_{\setminus \{0\}}}\sum_{\bW}q(\bw^1, \by^1)\prod_{k=1}^{T} p_\btheta(\bw^{s(k)} \mid \bw^{t(k)}, \bx) p_\btheta(\by^{s(k)} \mid \bw^{s(k)}, \by^{t(k)}, \bx).
\end{align}
Here, $p_\btheta(\bw^s\mid \bw^t, \bx)$ and $p_\btheta(\by^s\mid \bw^s, \by^t, \bx)$ are defined as in \cref{sec:base-setup}. 

First, we define the conditional program $\varphi_{\by^t}$ as the program $\varphi$ that maps concepts to outputs, but always returns $y^t_i$ if dimension $i$ is unmasked in $\by^t$. To be precise, 
\begin{equation}
    \label{eq:varphi_byt}
    \varphi_{\by^t}(\tilde{\bw}^{0})_i = \begin{cases} \varphi(\tilde{\bw}^{0})_i & \text{if } \y^t_i = \m \\ \y^{t}_i & \text{if } \y^t_i \neq \m \end{cases}.
\end{equation}
We need this definition in \cref{eq:output_reverse_process} to ensure the output unmasking model satisfies the carry-over unmasking assumption from \cref{thm:joint_unmasking}. 

Next, we define 
\begin{equation}
    \label{eq:output_unmasking_model}
    p_\btheta(\tilde{\by}^0\mid \bw^s, \by^t, \bx) := \sum_{\tilde{\bw}^0} p_\btheta(\tilde{\bw}^0\mid \bw^s, \bx) \mathbbone[\varphi_{\by^t}(\tilde{\bw}^0)=\tilde{\by}^0]
\end{equation} 
as the \emph{output unmasking model} such that the MDM defined as $\sum_{\tilde{\by}^0} p_\btheta(\tilde{\by}^0\mid \bw^s, \bx) q(\by^s\mid \by^t, \tilde{\by}^0)$ is equal to $p_\btheta(\by^s\mid \bw^s, \by^t, \bx)$ as defined in \cref{eq:output_reverse_process}: 
\begin{align}
    \sum_{\tilde{\by}^0} p_\btheta(\tilde{\by}^0\mid \bw^s, \bx) q(\by^s\mid \by^t, \tilde{\by}^0) &= \sum_{\tilde{\by}^0} \sum_{\tilde{\bw}^0} p_\btheta(\tilde{\bw}^0\mid \bw^s, \bx) \mathbbone[\varphi_{\by^t}(\tilde{\bw}^0)=\tilde{\by}^0] q(\by^s\mid \by^t, \tilde{\by}^0) \\
    &= \sum_{\tilde{\bw}^0} p_\btheta(\tilde{\bw}^0\mid \bw^s, \bx) q(\by^s\mid \by^t, \varphi_{\by^t}(\tilde{\bw}^0)) = p_\btheta(\by^s\mid \bw^s, \by^t, \bx).
\end{align}
Note that $p_\btheta(\by^s\mid \bw^s, \by^t, \bx)$ does not decompose into a product of marginals, requiring the new results in \cref{appendix:joint_unmasking} rather than the standard MDM NELBO derivation. 

To be able to use \cref{thm:joint_unmasking} and the other lemmas in \cref{appendix:joint_unmasking}, we need to ensure that the output unmasking model satisfies the assumptions of \cref{thm:joint_unmasking}. 
Since $\varphi_{\by^t}$ maps completely unmasked concepts to completely unmasked outputs, it satisfies zero masking probabilities. 
Further, the carry-over unmasking assumption is satisfied, since for any unmasked dimension $i\in \unmasked_{\by^t}$ and any concept $\tilde{\bw}^0$, $\varphi_{\by^t}(\tilde{\bw}^0)_i = \by^t_i$ by \cref{eq:varphi_byt} and hence $p_\btheta(\tilde{\y}^0_i=\y^t_i\mid \bw^s, \by^t, \bx) = 1$. 
Importantly, the carry-over unmasking assumption would \emph{not} hold if we used $\varphi(\tilde{\bw}^0)$ instead of $\varphi_{\by^t}(\tilde{\bw}^0)$ in \cref{eq:output_unmasking_model}, and we would not have been able to use the results in \cref{appendix:joint_unmasking}. 

\subsection{NELBO derivation}
\label{appendix:nelbo}
\begin{theorem}
	Let $p_\btheta(\tilde{\bw}^{0}\mid \bw^{t}, \bx)$ be a concept unmasking model with zero masking probabilities and carry-over unmasking as defined in \cref{thm:joint_unmasking}, $\varphi: [\vecdimC]^{\wdim} \to [\vecdimY]^{\ydim}$ be a given program, $q_\btheta(\bw^0\mid \by^0, \bx)$ be a variational distribution, and $\alpha_t$ be a noising schedule. Then we have that the data log-likelihood as $T\rightarrow \infty$ is bounded as $\lim_{T\rightarrow \infty} -\log p^{\methodshort}_\btheta(\by^0\mid \bx) \leq \mathcal{L}_{\methodshort}$, where
	\begin{equation}
		\begin{aligned}
		\mathcal{L}_{\methodshort}=& \mathbb{E}_{t\sim [0, 1], q_\btheta(\bw^0, \bw^t\mid \bx, \by^0)}\Bigg[ \underbrace{\frac{\alpha_t'}{1-\alpha_t} \sum_{i\in \masked_{\bw^t}}\log p_\btheta(\tilde{\w}^0_i=\w^0_i\mid \bw^t, \bx)}_{\textnormal{$\mathcal{L}_{\bw}$: {\bfseries\color{orange}Concept unmasking loss}}}  \\
		&+\underbrace{\alpha'_t\sum_{i=1}^\ydim \log \sum_{\tilde{\bw}^{0}} p_\btheta(\tilde{\bw}^{0}\mid \bw^{t}, \bx) \mathbbone[ \varphi(\tilde{\bw}^{0})_i= \y^{0}_i]}_{\textnormal{$\mathcal{L}_{\by}$: {\bfseries\color{teal}Output unmasking loss}}} \Bigg] - \underbrace{\textnormal{H}[q_\btheta(\bw^{0}\mid \by^{0}, \bx)]}_{\textnormal{$\mathcal{L}_{\textnormal{H}[q]}$: {\bfseries\color{purple}Variational entropy}}}
		\end{aligned}
	\end{equation}
\end{theorem}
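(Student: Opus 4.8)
The plan is to follow the standard variational-diffusion route applied to the coupled concept/output process of \cref{eq:nesydm-data-log-likelihood}, invoking \cref{thm:joint_unmasking} precisely where the output reverse process fails to factorise. First I would introduce a variational distribution over all latent variables that factorises as $q_\btheta(\bW, \bY_{\setminus\{0\}}\mid\by^0,\bx)=q_\btheta(\bw^0\mid\by^0,\bx)\,q(\bW_{\setminus\{0\}}\mid\bw^0)\,q(\bY_{\setminus\{0\}}\mid\by^0)$, i.e.\ the learned posterior over the clean concept together with the two fixed forward masking processes. Applying Jensen's inequality to $-\log p^{\methodshort}_\btheta(\by^0\mid\bx)$ with this $q_\btheta$ gives the NELBO $\mathbb{E}_{q_\btheta}[\log q_\btheta - \log p_\btheta]$, and since both the model joint and $q_\btheta$ split into a concept factor and an output factor, the log-ratio separates into three contributions: a concept term, an output term, and the $-\log q_\btheta(\bw^0\mid\by^0,\bx)$ coming from the variational denominator.

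The concept term is a genuine MDM over $\bw$ conditioned on $\bx$: the concept reverse process $p_\btheta(\bw^s\mid\bw^t,\bx)$ factorises, so I would reuse exactly the telescoping of the forward process in \cref{eq:q-transformation-elbo} together with the continuous-time limit of \cref{thm:joint_unmasking} in its factorised special case (\cref{eq:unmasking_loss}) to obtain $\mathcal{L}_\bw$, with all $q(\bW_{\setminus\{0\}}\mid\bw^0)$ factors cancelling against the reverse posteriors. The variational denominator contributes $\mathbb{E}_{q_\btheta}[\log q_\btheta(\bw^0\mid\by^0,\bx)] = -\text{H}[q_\btheta(\bw^0\mid\by^0,\bx)]$, which is exactly $\mathcal{L}_{\text{H}[q]}$ entering the NELBO with a minus sign. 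All expectations over $\bw^0$ and $\bw^t$ are taken under $q_\btheta(\bw^0\mid\bx,\by^0)\,q(\bw^t\mid\bw^0)$, matching the statement.

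The crux is the output term. Here the output reverse process $p_\btheta(\by^s\mid\bw^s,\by^t,\bx)$ does \emph{not} factorise over the $\ydim$ output coordinates (as noted below \cref{eq:output_unmasking_model}), so the dimension-wise MDM derivation is unavailable and I would instead apply \cref{thm:joint_unmasking} to the output unmasking model of \cref{eq:output_unmasking_model}. Its hypotheses hold: zero masking probability and carry-over unmasking are guaranteed by the $\varphi_{\by^t}$ construction in \cref{eq:varphi_byt}, as already verified in the formal setup. The theorem yields an intermediate loss $\mathbb{E}_{t,q(\by^t\mid\by^0)}[\frac{\alpha_t'}{1-\alpha_t}\sum_{i\in\masked_{\by^t}}\log p_\btheta(\tilde{\y}^0_i=\y^0_i\mid\bw^t,\by^t,\bx)]$, with $\bw^s\to\bw^t$ in the continuous limit. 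The final simplification — the step I expect to be the main obstacle to state cleanly — is carrying out the expectation over the output masking pattern $\by^t$: for a masked coordinate $i$ the marginal $p_\btheta(\tilde{\y}^0_i=\y^0_i\mid\bw^t,\by^t,\bx)$ is independent of $\by^t$ and equals $\sum_{\tilde{\bw}^0}p_\btheta(\tilde{\bw}^0\mid\bw^t,\bx)\mathbbone[\varphi(\tilde{\bw}^0)_i=\y^0_i]$, while each coordinate is masked with probability $1-\alpha_t$. Hence the factor $1-\alpha_t$ cancels the $\tfrac{1}{1-\alpha_t}$ weight and the masked-dimension sum becomes a full sum $\sum_{i=1}^{\ydim}$ weighted by $\alpha_t'$, delivering exactly $\mathcal{L}_\by$.

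Summing the three contributions and passing to the $T\to\infty$ limit, using the vanishing of the reconstruction term and the continuous-time unmasking limit already established in \cref{lemma:reconstruction-joint-continuous,lemma:unmasking-joint-continuous}, completes the bound. The two places demanding genuine care are the non-factorised output reverse process, which is precisely what \cref{thm:joint_unmasking} was built to handle, and the $\by^t$-expectation cancellation that converts the masked-dimension weighting into the $\alpha_t'\sum_{i=1}^{\ydim}$ form; everything else is the standard variational-diffusion bookkeeping.
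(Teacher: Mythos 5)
Your proposal is correct and follows essentially the same route as the paper's proof: the same variational family (learned posterior over $\bw^0$ composed with the two fixed forward masking processes), the same Jensen step and three-way decomposition into concept, output, and entropy terms, and the same invocation of the non-factorised machinery of \cref{thm:joint_unmasking} (via \cref{lemma:reconstruction-joint-continuous,lemma:unmasking-joint-continuous}) for the output reverse process, with carry-over unmasking guaranteed by the $\varphi_{\by^t}$ construction. The one place you diverge is the final simplification of $\mathcal{L}_\by$: the paper eliminates the expectation over $\by^t$ by an induction on $\ydim$, whereas you argue directly that for $i\in\masked_{\by^t}$ the summand $\log p_\btheta(\tilde{\y}^0_i=\y^0_i\mid\bw^t,\by^t,\bx)$ reduces (since $\varphi_{\by^t}(\cdot)_i=\varphi(\cdot)_i$ on masked coordinates) to a quantity independent of $\by^t$, so that linearity of expectation over the independent per-dimension masking events turns $\frac{\alpha_t'}{1-\alpha_t}\sum_{i\in\masked_{\by^t}}$ into $\alpha_t'\sum_{i=1}^{\ydim}$. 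This direct Rao-Blackwellisation is valid — the required conditional independence of $\by^t$ and $\bw^t$ given $\by^0$ holds by construction of your variational family — and it is arguably cleaner than the paper's inductive bookkeeping, which establishes the same cancellation case by case.
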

\begin{proof}
    \begin{equation}
        \begin{aligned}
        &\phantom{=} -\log p^{\methodshort}_\btheta(\by^{0}\mid \bx) \\
        &=-\log\sum_{\bY_{\setminus 0}}\sum_{\bW} p_\btheta(\bW, \bY\mid \bx) \leq -\mathbb{E}_{q_\btheta(\bW, \bY_{\setminus 0}\mid \by^{0}, \bx)}\left[\log \frac{p_\btheta(\bW, \bY\mid  \bx)}{q_\bpsi(\bW, \bY_{\setminus 0}\mid \by^{0}, \bx)}\right]
    \end{aligned}
    \end{equation}
    Next, we again use the trick from \cref{eq:q-transformation-elbo}, both for $\bW$ and $\bY$, and we expand the model $p_\btheta$ following the graphical model in \cref{fig:pgmsimple}:
    \begin{align*}
        &= \mathbb{E}_{q_\btheta(\bW, \bY_{\setminus 0}\mid \by^{0}, \bx)}\Bigg[\log \frac{p_\btheta(\bw^{0}, \by^{0}\mid \bw^{\frac{1}{T}}, \by^{\frac{1}{T}}, \bx)p(\bw^{1}, \by^{1})}{q_{\btheta}(\bw^{1}, \bw^{0}\mid \bx, \by^{0}) q(\by^{1}\mid \by^{0}) } \frac{\prod_{k=2}^T p_\btheta(\bw^{s}, \by^{s}\mid \bw^{t}, \by^{t}, \bx)}{\prod_{t=2}^T q(\bw^{s}\mid \bw^{t}, \bw^{0}) q(\by^{s}\mid \by^{t}, \by^{0}) } \Bigg]\\ 
        \label{eq:base-elbo}
        &\begin{aligned}
        &= \mathbb{E}_{q_\btheta(\bw^{0}\mid \bx, \by^{0})}\Bigg[ \underbrace{\mathbb{E}_{q(\by^{\frac{1}{T}}\mid \by^{0}) }\left[-\log p_\btheta(\by^{0}\mid \bw^{0}, \by^{\frac{1}{T}}, \bx)\right]}_{\text{$\mathcal{L}_{\text{rec}, \by, T}$: $\by^0$ reconstruction}} - \underbrace{\mathbb{E}_{q(\bw^{\frac{1}{T}}\mid \bw^{0})} \left[\log \frac{p_\btheta(\bw^{0}\mid \bw^{\frac{1}{T}}, \bx)}{q_\btheta(\bw^{0}\mid \bx, \by^{0})}\right]}_{\text{$\mathcal{L}_{\text{rec}, \bw, T}$: $\bw^0$ reconstruction}}  + \\
        &\phantom{=} \sum_{k=2}^T \underbrace{\mathbb{E}_{q(\bw^{t}\mid \bw^{0})}\Big[\text{KL}[q(\bw^{s}\mid \bw^{t}, \bw^{0})\| p_\btheta(\bw^{s}\mid \bw^{t}, \bx)]}_{\text{$\mathcal{L}_{\text{unm}, \bw, T, k}$: $\bw$ unmasking }} \Big] + \\
        &\phantom{=} \underbrace{\mathbb{E}_{q(\bw^{s}, \by^{t}\mid \bw^{0}, \by^{0})} \Big[\text{KL}[q(\by^{s}\mid \by^{t}, \by^{0})\|p_\btheta(\by^{s}\mid \by^{t}, \bw^{s}, \bx)]}_{\text{$\mathcal{L}_{\text{unm}, \by, T, k}$: $\by$ unmasking }}\Big]\Bigg] + C
        \end{aligned}
    \end{align*}
    where $C=\mathbb{E}_{q(\bw^1, \by^1\mid \by^0, \bx)} \log \frac{p(\bw^1, \by^1)}{q_\btheta(\bw^1, \by^1\mid \bx, \by^0)}$ is a constant and equal to $0$ if $p(\bw^1, \by^1)=\mathbbone[\bw^1=\bm \wedge \by^1=\bm]$, which is true by assumption.

    \shortparagraph{Discrete-time NELBO.} Next, we use \cref{lemma:reconstruction-joint} to rewrite $\mathcal{L}_{\text{rec}, \by, T}$ and $\mathcal{L}_{\text{rec}, \bw, T}$. 
    Then, the first two terms are the reconstruction losses for $\by^0$ and $\bw^0$ respectively, and the third term is the entropy of the variational distribution. 
    \begin{equation}
        \label{eq:discrete-reconstruction-loss}
        \begin{aligned}
        \mathcal{L}_{\text{rec}, \by, T}+\mathcal{L}_{\text{rec}, \bw, T} =& \mathbb{E}_{q_\btheta(\bw^0\mid \bx, \by^{0})}\Bigg[ \mathbb{E}_{q(\by^{\frac{1}{T}}\mid \by^{0}) }\left[-\log p_\btheta(\tilde{\by}^{0}=\by^{0}\mid \bw^{0}, \by^{\frac{1}{T}}, \bx)\right] +\\
        & \mathbb{E}_{q(\bw^{\frac{1}{T}}\mid \bw^{0})} \left[-\log p_\btheta(\tilde{\bw}^{0}=\bw^0\mid \bw^{\frac{1}{T}}, \bx)\right] + \log q_\btheta(\bw^{0}\mid \by^0, \bx)\Bigg],
        \end{aligned}
    \end{equation}

    Similarly, using \cref{lemma:unmasking-joint}, we have the {\bfseries\color{orange} concept unmasking loss} as
    \begin{equation}
        \label{eq:discrete-concept-denoising-loss}
        \begin{aligned}
        \sum_{k=2}^T \mathcal{L}_{\text{unm}, \bw, T, k} =& \sum_{k=2}^T\mathbb{E}_{q_\btheta(\bw^0, \bw^s, \bw^t\mid \bx, \by^{0})}\Big[ -\log p_\btheta(\tilde{\bw}^0_{\unmasked_{\bw^s}\setminus\unmasked_{\bw^t}}=\bw^s_{\unmasked_{\bw^s}\setminus\unmasked_{\bw^t}}\mid \bw^t, \bx) \Big] 
        \end{aligned}
    \end{equation}
    For the {\bfseries\color{teal}output unmasking loss} $\mathcal{L}_{\text{unm}, \by, T}$, again using \cref{lemma:unmasking-joint}, we have 
    \begin{equation}
        \label{eq:discrete-output-denoising-loss}
        \begin{aligned}
        \sum_{k=2}^T \mathcal{L}_{\text{unm}, \by, T, k} =& \sum_{k=2}^T\mathbb{E}_{q_\btheta(\bw^s, \by^s, \by^t\mid\bx, \by^{0})}\Big[ -\log p_\btheta(\tilde{\by}^0_{\unmasked_{\by^s}\setminus\unmasked_{\by^t}}=\by^s_{\unmasked_{\bw^s}\setminus\unmasked_{\bw^t}}\mid\bw^s, \by^t, \bx)  \Big].
        \end{aligned}
    \end{equation}
    Summing \cref{eq:discrete-reconstruction-loss,eq:discrete-concept-denoising-loss,eq:discrete-output-denoising-loss}, we get the discrete-time NELBO. 

    \paragraph{Continuous-time NELBO.} Using \cref{lemma:reconstruction-joint-continuous} twice and adding the {\bfseries\color{purple} entropy of the variational distribution} in \cref{eq:discrete-reconstruction-loss}, and then using \cref{lemma:unmasking-joint-continuous} twice, we get the continuous-time NELBO:
    \begin{equation}
        \label{eq:working-nesydiff-proof}
        \begin{aligned}
        \mathcal{L}^{\text{\methodshort}'} =&  \mathbb{E}_{t\sim [0, 1], q_\btheta(\bw^0, \bw^t\mid\bx, \by^0)}\Bigg[ \underbrace{\frac{\alpha_t'}{1-\alpha_t} \Bigg(\sum_{i\in \masked_{\bw^t}}\log p_\btheta(\tilde{\w}^0_i=\w^0_i\mid\bw^t, \bx)}_{\textnormal{$\mathcal{L}_{\bw}$: Concept denoising loss}} +  \\
		&\underbrace{\mathbb{E}_{q(\by^t\mid\by^0)}\sum_{i\in \masked_{\by^t}} \log p_\btheta(\tilde{\y}^0_i=\y^0_i\mid\bw^t, \by^t, \bx)}_{\textnormal{$\mathcal{L}_\by$: Output denoising loss}} \Bigg) \Bigg] - \underbrace{\textnormal{H}[q_\btheta(\bw^{0}\mid\by^{0}, \bx)]}_{\textnormal{$\mathcal{L}_{\textnormal{H}[q]}$: Variational entropy}}.
        \end{aligned}
    \end{equation}
    Next, we will further simplify the {\bfseries\color{teal} output unmasking loss} $\mathcal{L}_\by$ with a Rao-Blackwellisation to get the form given in \cref{thm:nelbo}. Using \cref{eq:output_unmasking_model},
    \begin{align*}
        \mathcal{L}_\by =& \mathbb{E}_{t\sim [0, 1], q_\btheta(\by^t, \bw^t\mid\bx, \by^0)}\Bigg[\frac{\alpha_t'}{1-\alpha_t}\sum_{i\in \masked_{\by^t}} \log p_\btheta(\tilde{\y}^0_i=\y^0_i\mid\bw^t, \by^t, \bx)\Bigg] \\
        =& \mathbb{E}_{t\sim [0, 1], q_\btheta(\by^t, \bw^t\mid\bx, \by^0)}\Bigg[\frac{\alpha_t'}{1-\alpha_t}\sum_{i\in \masked_{\by^t}} \log \sum_{\tilde{\by}^0, \tilde{\y}^0_i=\y^0_i}\sum_{\tilde{\bw}^{0}} p_\btheta(\tilde{\bw}^{0}\mid\bw^{t}, \bx) \mathbbone[ \varphi_{\by^t}(\tilde{\bw}^{0})= \tilde{\by}^{0}]\Bigg] \\
        =& \mathbb{E}_{t\sim [0, 1], q_\btheta(\by^t, \bw^t\mid\bx, \by^0)}\Bigg[\frac{\alpha_t'}{1-\alpha_t}\sum_{i\in \masked_{\by^t}} \log \sum_{\tilde{\bw}^{0}} p_\btheta(\tilde{\bw}^{0}\mid\bw^{t}, \bx) \sum_{\tilde{\by}^0, \tilde{\y}^0_i=\y^0_i}\mathbbone[ \varphi_{\by^t}(\tilde{\bw}^{0})= \tilde{\by}^{0}]\Bigg] \\
        =& \mathbb{E}_{t\sim [0, 1], q_\btheta(\bw^t\mid\bx, \by^0)q(\by^t\mid\by^0)}\Bigg[\frac{\alpha_t'}{1-\alpha_t}\sum_{i\in \masked_{\by^t}} \log \sum_{\tilde{\bw}^{0}} p_\btheta(\tilde{\bw}^{0}\mid\bw^{t}, \bx) \mathbbone[ \varphi_{\by^t}(\tilde{\bw}^{0})_i= \y^{0}_i]\Bigg],
    \end{align*}
    where in the last step we use that only a single $\tilde{\by}^0$ satisfies $\varphi_{\by^t}(\tilde{\bw}^{0})=\tilde{\by}^0$, and it appears in the sum only if exactly that $\tilde{\by}^0$ has $\tilde{\y}^0_i=\y^0_i$, and the conditional independence in \cref{fig:pgmsimple} of $\by^t$ and $\bw^t$ given $\by^0$. 

    Next, define the following inductive hypothesis based on the value of $\ydim$:
    \begin{align}
        \label{eq:output_denoising_loss_inductive}
        \mathcal{L}_\by=&\mathbb{E}_{t\sim (0, 1]} \mathbb{E}_{q_\btheta(\bw^{t}\mid\bx, \by^0)}\left[  \alpha'_t \sum_{i=1}^{\ydim} \log \sum_{\tilde{\bw}^{0}} p_\btheta(\tilde{\bw}^{0}\mid\bw^{t}, \bx) \mathbbone[ \varphi(\tilde{\bw}^{0})_i= \y^{0}_i]\right].
    \end{align}
    \paragraph{Base case $\ydim=1$:} The only elements in the support of $q(\by^t\mid\by^0)$ are $\y^t_1=\y^0_1$ and $\y^t_1=\m$. If it is $\y^0_1$ (probability $\alpha_t$), the set of unmasked values is empty, and so the loss is zero. If it is $\m$ (probability $1-\alpha_t$), the only masked dimension is $i=1$. 
    Furthermore, there are no unmasked dimensions in $\by^t$, hence $\varphi_{\by^t}=\varphi$ and so the loss is 
    \begin{align*}
        \mathcal{L}_\by=\mathbb{E}_{t\sim (0, 1]} \mathbb{E}_{q_\btheta(\bw^{t}\mid\bx)}\left[  \alpha'_t \log \sum_{\tilde{\bw}^{0}} p_\btheta(\tilde{\bw}^{0}\mid\bw^{t}, \bx) \mathbbone[ \varphi(\tilde{\bw}^{0})_1= \y^{0}_1]\right].
    \end{align*}
    \paragraph{Inductive step $\ydim>1$:} Assume the result holds for $\ydim-1$. Like in the base case, $q(\y^t_\ydim=\y^0_\ydim\mid\by^0)=\alpha_t$ and $q(\y^t_\ydim=\m\mid\by^0)=1-\alpha_t$. Then, let $\hat{\by}^t$ denote all variables in $\by^t$ except $\y^t_\ydim$, and we assume the inductive hypothesis holds for $\hat{\by}^t$. We again consider the two cases: Either $\y^t_\ydim=\y^0_\ydim$ with probability $\alpha_t$ or $\y^t_\ydim=\m$ with probability $1-\alpha_t$. 
    \begin{align*}
            \mathcal{L}_\by=&\mathbb{E}_{t\sim (0, 1]} \mathbb{E}_{q_\btheta(\bw^{t}, \by^t\mid\bx, \by^0)}\left[  \frac{\alpha'_t}{1-\alpha_t} \sum_{i\in M_{\by^{t}}} \log \sum_{\tilde{\bw}^{0}} p_\btheta(\tilde{\bw}^{0}\mid\bw^{t}, \bx) \mathbbone[\varphi_{\by^t}(\tilde{\bw}^{0})_i= \y^{0}_i]\right] \\
            =&\mathbb{E}_{t\sim (0, 1]} \mathbb{E}_{q_\btheta(\bw^{t}\mid\bx)}\Bigg[ \sum_{\hat{\by}^t} q(\hat{\by}^t\mid\by^0) \Bigg(\frac{\alpha_t \alpha'_t}{1-\alpha_t} \sum_{i\in M_{\hat{\by}^t}} \log \sum_{\tilde{\bw}^{0}} p_\btheta(\tilde{\bw}^{0}\mid\bw^{t}, \bx) \mathbbone[\varphi_{\hat{\by}^t}(\tilde{\bw}^{0})_i= \y^{0}_i]\\
            +& \frac{(1-\alpha_t) \alpha'_t}{1-\alpha_t} \sum_{i\in M_{\hat{\by}^t}\cup\{\ydim\}} \log \sum_{\tilde{\bw}^{0}} p_\btheta(\tilde{\bw}^{0}\mid\bw^{t}, \bx) \mathbbone[\varphi_{\hat{\by}^t}(\tilde{\bw}^{0})_i= \y^{0}_i]\Bigg) \Bigg] 
    \end{align*}
    Note now that the second term contains the same sum over the $i\in M_{\hat{\by}^t}$ as in the first term, but in addition it contains the dimension $\ydim$. We next move the other terms into the first term, leaving with a weight of $\frac{\alpha'_t}{1-\alpha_t}$ for the first term: 
    \begin{align*}
            \mathcal{L}_\by=&\mathbb{E}_{t\sim (0, 1]} \mathbb{E}_{q_\btheta(\bw^{t}\mid\bx)}\Bigg[ \sum_{\hat{\by}^t} q(\hat{\by}^t\mid\by^0) \Bigg(\frac{\alpha'_t}{1-\alpha_t} \sum_{i\in M_{\hat{\by}^t}} \log \sum_{\tilde{\bw}^{0}} p_\btheta(\tilde{\bw}^{0}\mid\bw^{t}, \bx) \mathbbone[\varphi_{\hat{\by}^t}(\tilde{\bw}^{0})_i= \y^{0}_i]\\
            &+ \alpha'_t \log \sum_{\tilde{\bw}^{0}} p_\btheta(\tilde{\bw}^{0}\mid\bw^{t}, \bx) \mathbbone[\varphi_{\hat{\by}^t}(\tilde{\bw}^{0})_\ydim= \y^{0}_\ydim]\Bigg) \Bigg]
    \end{align*}
        Next, we apply the inductive hypothesis to the first term. After, note that the second term is independent of the value of $\hat{\by}^t$ as the result of $\varphi_{\hat{\by}^t}(\tilde{\bw}^{0})_\ydim$ does not depend on $\hat{\by}^t$. 
    \begin{align*}
        \mathcal{L}_\by=&\mathbb{E}_{t\sim (0, 1]} \mathbb{E}_{q_\btheta(\bw^{t}\mid\bx)}\Bigg[\alpha'_t \sum_{i=1}^{\ydim-1} \log \sum_{\tilde{\bw}^{0}} p_\btheta(\tilde{\bw}^{0}\mid\bw^{t}, \bx) \mathbbone[ \varphi_{\hat{\by}^t}(\tilde{\bw}^{0})_i= \y^{0}_i ]\\
        +& \sum_{\hat{\by}^t}q(\hat{\by}^t\mid\by^0)\alpha'_t \log \sum_{\tilde{\bw}^{0}} p_\btheta(\tilde{\bw}^{0}\mid\bw^{t}, \bx) \mathbbone[ \varphi_{\hat{\by}^t}(\tilde{\bw}^{0})_\ydim= \y^{0}_\ydim ] \Bigg]\\
        =&\mathbb{E}_{t\sim (0, 1]} \mathbb{E}_{q_\btheta(\bw^{t}\mid\bx)}\Bigg[\alpha'_t \sum_{i=1}^{\ydim} \log \sum_{\tilde{\bw}^{0}} p_\btheta(\tilde{\bw}^{0}\mid\bw^{t}, \bx) \mathbbone[\varphi(\tilde{\bw}^{0})_i= \y^{0}_i ]\Bigg],
    \end{align*}
    completing the inductive proof.

    Finally, replacing \cref{eq:output_denoising_loss_inductive} for $\mathcal{L}_\by$ in \cref{eq:working-nesydiff-proof} completes the proof.
\end{proof}

\section{Gradient estimation details}
\label{appendix:rloo}
In this section, we provide additional details and formalisation on our gradient estimation procedure, extending the discussion in \cref{sec:loss_optimisation}. 

Given some input-output pair $(\bx, \by^0)\sim \mathcal{D}$, the gradient of the loss is given by \citep{schulman2015gradient,krieken2021storchastic}
\begin{equation*}
    \begin{aligned}
        &\nabla_\btheta \mathcal{L}^{\methodshort}= \underbrace{\nabla_\btheta\textnormal{H}[q_\btheta(\bw^0\mid\bx, \by^0)]}_{\text{Gradient of {\bfseries\color{purple}variational entropy} $\nabla_\btheta \mathcal{L}_{\textnormal{H}[q]}$}} +\\
        &  \mathbb{E}_{t\sim [0, 1], q_\btheta(\bw^0, \bw^t\mid\bx, \by^0)}\Bigg[ \underbrace{ \frac{\alpha_t'}{1-\alpha_t} \sum_{i\in \masked_{\bw^t}}\nabla_\btheta\log p_\btheta(\tilde{\w}^0_i=\w^0_i\mid\bw^t, \bx)}_{\text{Gradient of {\bfseries\color{orange}concept unmasking loss} $\nabla_\btheta \mathcal{L}_\bw$}} + \\
        & \underbrace{\alpha'_t\sum_{i=1}^\ydim \log \sum_{\tilde{\bw}^{0}} \nabla_\btheta p_\btheta(\tilde{\bw}^{0}\mid\bw^{t}, \bx)\mathbbone[ \varphi(\tilde{\bw}^{0})_i= \y^{0}_i]}_{\text{Gradient of {\bfseries\color{teal}output unmasking loss} $\nabla_\btheta \mathcal{L}_\by$}} + \\
        & \underbrace{ \Bigg(\frac{\alpha_t'}{1-\alpha_t} \sum_{i\in \masked_{\bw^t}}\log p_\btheta(\tilde{\w}^0_i=\w^0_i\mid\bw^t, \bx) + \alpha_t'\sum_{i=1}^\ydim \log \frac{\sum_{j=1}^\losssamples \mathbbone[\varphi(\tilde{\bw}^0_j)_i=\y^0_i]}{\losssamples} \Bigg)\nabla_\btheta \log q_\btheta(\bw^0\mid\bx, \by^0)}_{\text{Indirect gradient from sampling from variational distribution (ignored)}}\Bigg]
    \end{aligned}
\end{equation*}

\shortparagraph{Monte carlo approximation.} 
We will use a monte carlo approximation to estimate this gradient.
We first sample a single $\bw^0\sim q_\btheta(\bw^0\mid\bx, \by^0)$, $t\sim [0, 1]$, and then a single $\bw^t\sim q(\bw^t\mid\bw^0)$ using \cref{eq:jump-forward}. Finally, we sample $\losssamples$ samples $\tilde{\bw}^0_1, \dots, \tilde{\bw}^0_\losssamples\sim p_\btheta(\tilde{\bw}^0\mid\bw^t, \bx)$ to approximate the gradient of the {\bfseries\color{teal} output unmasking loss} $\mathcal{L}_\by$ with the RLOO estimator \citep{kool2019buy}. 
Alternatively, one could use probabilistic circuits to compute this gradient exactly \citep{xuSemanticLossFunction2018,maene2025klay}.

\shortparagraph{Indirect gradient.} The indirect gradient arises from the expectation over the variational distribution which depends on the parameter $\btheta$. 
This term has high variance in a monte-carlo estimator. 
Firstly, the vanilla score function estimator is known to have high variance, especially without additional variance reduction techniques \citep{mohamedMonteCarloGradient2020}. 
However, the reward, which is given between the large braces, is \emph{doubly-stochastic}: it depends on sampling $t$, $\bw^t$, and $\tilde{\bw}^0, \dots, \tilde{\bw}^\losssamples$, making it an inherently noisy process. 
Furthermore, when using the variational distribution as defined in \cref{sec:variational_posterior}, the score term $\nabla_\btheta \log q_\btheta(\bw^0\mid\bx, \by^0)$ is itself a \methodshort for which computing log-likelihoods is intractable, and thus we would require additional approximations to estimate it. 
Because of the variance, intractability, and to keep the algorithm simple, we ignore the term altogether. 

Therefore, our gradient estimate $\bg$ is given by 
\begin{equation}
    \label{eq:gradient_estimate}
    \begin{aligned}
        \bg = & \frac{\gamma_{\bw}}{\wdim}\underbrace{ \frac{\alpha_t'}{1-\alpha_t} \sum_{i\in \masked_{\bw^t}}\nabla_\btheta\log p_\btheta(\tilde{\w}^0_i=\w^0_i\mid\bw^t, \bx)}_{\text{\textbf{\color{orange} Unbiased estimate of} $\nabla_\btheta \mathcal{L}_\bw$}} +\frac{\gamma_{\text{H}}}{\wdim}\underbrace{\nabla_\btheta\textnormal{H}[q_\btheta(\bw^0\mid\bx, \by^0)]}_{\text{\textbf{\color{purple} Choose estimate of} $\nabla_\btheta \mathcal{L}_{\textnormal{H}[q]}$}} + \\
        & \frac{\gamma_{\by}}{\ydim}\underbrace{\sum_{i=1}^\ydim \frac{1}{(\losssamples-1)\mu_i}\sum_{j=1}^\losssamples\left(\mathbbone[\varphi(\tilde{\bw}^0_j)_i=\y^0_i]-\mu_{i}\right) \nabla_\btheta \log p_\btheta(\tilde{\bw}^0_j\mid\bw^t, \bx)}_{\text{\color{teal} \textbf{Consistent estimate of} $\nabla_\btheta \mathcal{L}_\by$}}, 
    \end{aligned}
\end{equation}
where $\mu_i = \frac{1}{\losssamples}\sum_{j=1}^\losssamples \mathbbone[\varphi(\tilde{\bw}^0_j)_i=\y^0_i]$ is the empirical mean of the constraints, and $\gamma_{\bw}, \gamma_{\text{H}}$ and $\gamma_{\by}$ are weighting coefficients. We keep $\gamma_\by=1$, and tune the other two. 
Additionally, inspired by the local step approach of \citep{pmlr-v162-javaloy22a}, we average over dimensions rather than summing to stabilise hyperparameter tuning among different problems. This is especially useful in experiments with variable dimension size such as MNISTAdd and Warcraft Path Planning. 
We discuss how we estimate the gradient of the entropy of the variational distribution in \cref{sec:loss_optimisation}.

\paragraph{\color{teal}Estimate of $\nabla_\btheta \mathcal{L}_{\by}$} Next, we derive the consistent gradient estimator for $\nabla_\btheta \mathcal{L}_{\by}$ using the RLOO estimator \citep{kool2019buy}. Assuming we have some $\bx$, $\by^0$, $t$ and $\bw^t$, and using the score-function estimator, the gradient of the loss is given by
\begin{align}
    \nabla_\btheta \mathcal{L}_{\by} =& \alpha_t'\sum_{i=1}^\ydim \nabla_\btheta\log \sum_{\tilde{\bw}^0} p_\btheta(\tilde{\bw}^0\mid\bw^t, \bx) \mathbbone[\varphi(\tilde{\bw}^0)_ i=\y^0_i] \nonumber\\
    =& \alpha_t'\sum_{i=1}^\ydim \frac{ \sum_{\tilde{\bw}^0} \nabla_\btheta p_\btheta(\tilde{\bw}^0\mid\bw^t, \bx) \mathbbone[\varphi(\tilde{\bw}^0)_ i=\y^0_i]}{\sum_{\tilde{\bw}^0} p_\btheta(\tilde{\bw}^0\mid\bw^t, \bx) \mathbbone[\varphi(\tilde{\bw}^0)_ i=\y^0_i]} \nonumber\\
    =& \alpha_t'\sum_{i=1}^\ydim \frac{ \sum_{\tilde{\bw}^0}  p_\btheta(\tilde{\bw}^0\mid\bw^t, \bx) \mathbbone[\varphi(\tilde{\bw}^0)_ i=\y^0_i] \nabla_\btheta \log p_\btheta(\tilde{\bw}^0\mid\bw^t, \bx)}{\mathbb{E}_{p_\btheta(\tilde{\bw}^0\mid\bw^t, \bx)}[\mathbbone[\varphi(\tilde{\bw}^0)_ i=\y^0_i]]} \nonumber\\
    \label{eq:deriv_Ly}
    =& \alpha_t'\sum_{i=1}^\ydim \frac{ \mathbb{E}_{p_\btheta(\tilde{\bw}^0\mid\bw^t, \bx)}[\mathbbone[\varphi(\tilde{\bw}^0)_ i=\y^0_i] \nabla_\btheta \log p_\btheta(\tilde{\bw}^0\mid\bw^t, \bx)]}{\mathbb{E}_{p_\btheta(\tilde{\bw}^0\mid\bw^t, \bx)}[\mathbbone[\varphi(\tilde{\bw}^0)_ i=\y^0_i]]} 
\end{align}
Both the numerator and denominator are expectations under $p_\btheta(\tilde{\bw}^0\mid\bw^t, \bx)$ of the constraints. 
A consistent (but not unbiased) estimator is given by sampling $\losssamples$ samples $\tilde{\bw}^0_1, \dots, \tilde{\bw}^0_\losssamples\sim p_\btheta(\tilde{\bw}^0\mid\bw^t, \bx)$ and taking averages at each of these $2\ydim$ expectations separately. 
Then, we will use RLOO as a \emph{baseline} to reduce the variance of the numerators. 
A baseline is a constant $b$, where we use that for any distribution $p_\btheta(x)$, $\mathbb{E}_{p_\btheta(x)}[b\nabla_\btheta \log p_\btheta(x)] = 0$, and so by linearity of expectation,  $\mathbb{E}_{p_\btheta(x)}[(f(x) - b)\nabla_\btheta \log p_\btheta(x)] = \mathbb{E}_{p_\btheta(x)}[f(x)\nabla_\btheta \log p_\btheta(x)]$. 
Since we are using $\losssamples$ samples, we choose for each sample $\tilde{\bw}^0_j$ and dimension $i$ the baseline $b_{ij}$ to be the empirical mean \emph{over the other samples}, leaving one sample out: $b_{ij}=\frac{1}{\losssamples-1}\sum_{l\neq j}\mathbbone[\varphi(\tilde{\bw}^0_l)_ i=\y^0_i]$. Then, $(\mathbbone[\varphi(\tilde{\bw}^0_j)_ i=\y^0_i] - b_{ij})\nabla_\btheta \log p_\btheta(\tilde{\bw}^0_j\mid\bw^t, \bx)$ is an unbiased estimator of the numerator. Finally, we average over the $\losssamples$ different estimators obtained this way to derive the RLOO gradient estimator as:
\begin{align}
    &\phantom{=}\mathbb{E}_{p_\btheta(\tilde{\bw}^0\mid\bw^t, \bx)}[\mathbbone[\varphi(\tilde{\bw}^0)_ i=\y^0_i] \nabla_\btheta \log p_\btheta(\tilde{\bw}^0\mid\bw^t, \bx)]\nonumber \\
    &\approx \frac{1}{S}\sum_{j=1}^\losssamples (\mathbbone[\varphi(\tilde{\bw}^0_j)_ i=\y^0_i] - b_{ij}) \nabla_\btheta \log p_\btheta(\tilde{\bw}^0_j\mid\bw^t, \bx) \nonumber\\
    &= \sum_{j=1}^\losssamples \left(\frac{(S-1)\mathbbone[\varphi(\tilde{\bw}^0_j)_ i=\y^0_i] - \sum_{l\neq j}\mathbbone[\varphi(\tilde{\bw}^0_l)_ i=\y^0_i]}{\losssamples(\losssamples-1)}\right) \nabla_\btheta \log p_\btheta(\tilde{\bw}^0_j\mid\bw^t, \bx) \nonumber\\
    &= \sum_{j=1}^\losssamples \left(\frac{S\mathbbone[\varphi(\tilde{\bw}^0_j)_ i=\y^0_i] - \sum_{l=1}^\losssamples\mathbbone[\varphi(\tilde{\bw}^0_l)_ i=\y^0_i]}{\losssamples(\losssamples-1)}\right) \nabla_\btheta \log p_\btheta(\tilde{\bw}^0_j\mid\bw^t, \bx) \nonumber\\
    \label{eq:rloo_numerator}
    &= \frac{1}{(\losssamples-1)} \sum_{j=1}^\losssamples \left(\mathbbone[\varphi(\tilde{\bw}^0_j)_ i=\y^0_i] - \mu_i\right) \nabla_\btheta \log p_\btheta(\tilde{\bw}^0_j\mid\bw^t, \bx) 
\end{align}
Combining \cref{eq:deriv_Ly,eq:rloo_numerator} gives the gradient estimator:
\begin{equation}
    \label{eq:rloo-y-estimator}
    \nabla_\btheta \mathcal{L}_\by \approx g_{\by^0}(\tilde{\bw}^0_1, \dots \tilde{\bw}^0_\losssamples) := \alpha_t'\sum_{i=1}^\ydim \frac{1}{\mu_i(\losssamples-1)} \sum_{j=1}^\losssamples \left(\mathbbone[\varphi(\tilde{\bw}^0_j)_ i=\y^0_i] - \mu_i\right) \nabla_\btheta \log p_\btheta(\tilde{\bw}^0_j\mid\bw^t, \bx) 
\end{equation}

\shortparagraph{Full gradient estimation algorithm.} In \cref{alg:training}, we provide the full algorithm for estimating gradients to train \methodshort. 
The algorithm proceeds by sampling $\bw^0$ from the variational distribution, and then sampling a partially masked value $\bw^t$. 
We then compute the gradients of the three individual losses using \cref{eq:gradient_estimate}. This requires sampling $\losssamples$ samples from the unmasking model, which is done in line \ref{line:sample_unmasking}. 
Finally, we weight the gradients appropriately and sum them up.

\section{Sampling details}
\label{appendix:sampling}
We use the first-hitting sampler \citep{zheng2024masked} if the configured number of discretisation steps $\testdiscretization$ is larger or equal to the dimension of the concept space $\wdim$. 
Otherwise, we use a $\testdiscretization$-step time-discretisation of the reverse process \citep{sahoosimple}. 

The first-hitting sampler in \cref{alg:fhssampling} randomly samples the next timestep to unmask at. 
There, it randomly selects an index to unmask using the concept unmasking model. 
Note that $\alpha^{-1}$ is the inverse of the noising schedule. 
Since we do not provide the temperature to our neural networks, this sampler is, in practice, a concept-by-concept decoding process similar to masked models like BERT \citep{zheng2024masked,devlin2019bert}.

\begin{algorithm}[h]
    \caption{First-hitting sampler for $p_\btheta(\bw^0\mid\bx)$}
    \label{alg:fhssampling}
    \begin{algorithmic}[1]
        \State \textbf{Input:} $\bx$, unmasking model $p_\btheta(\tilde{\bw}^0\mid\bx, \bw^t)$
        \State $t\gets 1$
        \State $\bw^{1} = \bm$
        \For{$k\gets \wdim$ \textbf{ to } $1$}
            \State $s\gets \alpha^{-1}(1 - \sqrt[k]{u}(1-\alpha_t))$, where $u\sim \mathcal{U}(0, 1)$ \Comment{Select next timestep to unmask at}
            \State $i\sim \text{Uniform}(M_{\bw^k})$ \Comment{Select a random dimension to unmask}
            \State $\bw^s \gets \bw^t$
            \State $\w^s_i \sim p_\btheta(\tilde{\w}^0_i\mid\bx, \bw^t)$ \Comment{Sample the unmasked dimension} \label{line:sample_unmasking_fh}
            \State $t\gets s$
        \EndFor
        \State \textbf{Return} $\bw^0$
    \end{algorithmic}
\end{algorithm}

Instead, the time-discretised sampler in \cref{alg:tdsampling} samples a completely unmasked sample $\tilde{\bw}^0$ from the unmasking model at each timestep, then samples $\bw^s$ from the reverse process in \cref{eq:reverse_posterior} to obtain the next timestep. 
When sampling from the reverse process, the algorithm remasks some of the newly unmasked dimensions in $\tilde{\bw}^0$, while keeping the unmasked dimensions in $\bw^t$ fixed. 

\begin{algorithm}[h]
    \caption{Time discretised sampler for $p(\bw^0\mid\bx)$}
    \label{alg:tdsampling}
    \begin{algorithmic}[1]
        \State \textbf{Input:} $\bx$, unmasking model $p_\btheta(\tilde{\bw}^0\mid\bx, \bw^t)$,  number of discretisation steps $\testdiscretization$
        \State $\bw^{1} = \bm$
        \For{$k\gets \testdiscretization$ \textbf{ to } $1$}
            \State $\tilde{\bw}^0 \sim p_\btheta(\tilde{\bw}^0\mid\bx, \bw^t)$ \Comment{Sample from unmasking model} \label{line:sample_unmasking_td}
            \State $\bw^{s} \sim q(\bw^{s}\mid\bw^t, \bw^0=\tilde{\bw}^0)$ \Comment{Sample from reverse process in \cref{eq:reverse_posterior}}
        \EndFor
        \State \textbf{Return} $\bw^0$
    \end{algorithmic}
\end{algorithm}

\subsection{Sampling from the variational distribution}
\label{appendix:sampling_variational}

We adapted the two samplers above to sample from our model conditioned on the output $\by^0$. 
We use a simple resampling approach as described in \cref{sec:variational_posterior}, which we elaborate on here. 
First, we recall the relaxed constraint for $\beta>0$ as
\begin{equation}
    \label{eq:relaxed_constraint}
    r_\beta(\tilde{\bw}^0\mid\by^0)=\exp(-\beta\sum_{i=1}^\ydim \mathbbone[\varphi(\tilde{\bw}^0)_i\neq \y^0_i]).
\end{equation}
Then, we define the distribution to sample from as 
\begin{equation}
    \label{eq:variational_distribution}
    q_\btheta(\tilde{\bw}^0\mid\bx, \bw^t, \by^0)\propto p_\btheta(\tilde{\bw}^0\mid\bx, \bw^t)r_\beta(\tilde{\bw}^0\mid\by^0).
\end{equation}
Since we cannot tractably sample from this distribution, we use self-normalised importance sampling \citep{branchini2024generalizingselfnormalizedimportancesampling}. 
In other words, we sample $\varsamples$ samples from the unmasking model, compute the relaxed constraint for each sample, and then normalise these values. Finally, we sample from the renormalised distribution. 
We provide the full algorithm in \cref{alg:snis}.

\begin{algorithm}[t]
    \caption{Self-normalised importance sampling for $q_\btheta(\tilde{\bw}^0\mid\bx, \bw^t, \by^0)$}
    \label{alg:snis}
    \begin{algorithmic}[1]
        \State \textbf{Input:} $\bx$, $\by^t$, unmasking model $p_\btheta(\tilde{\bw}^0\mid\bx, \bw^t)$, number of samples $\varsamples$
        \State $\tilde{\bw}^0_1, \dots, \tilde{\bw}^0_\varsamples \sim p_\btheta(\tilde{\bw}^0\mid\bx, \bw^t)$ \Comment{Sample $\varsamples$ samples from unmasking model} \label{line:snissamples}
        \State $w_i = r_\beta(\tilde{\bw}^0_i\mid\by^0)$, for all $i\in [\varsamples]$ \Comment{Compute importance weights} \label{line:relaxed_constraint}
        \State $Z = \sum_{i=1}^\varsamples w_i$ \Comment{Normalisation constant}
        \State $i \sim \text{Categorical}(\frac{w_i}{Z})$ \Comment{Sample from renormalised distribution} \label{line:sample_snis}
        \State \textbf{Return} $\tilde{\bw}^0_i$
    \end{algorithmic}
\end{algorithm}

We note that the distribution $p_\btheta(\tilde{\bw}^0\mid\bx, \bw^t)$ does not appear in the importance weights. This holds because we are sampling from it, thus it divides away in the computation of the importance weights. 

We implement this algorithm in the two samplers as follows. For the time-discretised sampler, we replace \cref{line:sample_unmasking_td} of \cref{alg:tdsampling} with \cref{alg:snis}. 
Together, this is the algorithm used in \cite{guo2024plugandplaycontrollablegenerationdiscrete}. 
For the first-hitting sampler, we replace \cref{line:sample_unmasking_fh} of \cref{alg:fhssampling} by first calling \cref{alg:snis} to obtain some $\tilde{\bw}^0$, and then returning the $i$-th dimension $\tilde{\w}_i^0$. 

\shortparagraph{Relation to Markov Logic Networks.} The distribution in \cref{eq:variational_distribution} is similar to a Markov Logic Network (MLN) \citep{richardson2006markov}. 
Particularly, the formulas of the MLN are (1): the different constraints $\mathbbone[\varphi(\tilde{\bw}^0)_i\neq \y^0_i]$, each weighted by $-\beta$\footnote{Unlike MLNs, we sum \emph{unsatisfied} constraints rather than satisfied ones to ensure $r_\beta(\tilde{\bw}^0\mid\by^0)\in [0, 1]$.}, and (2): the unmasking model $p_\btheta(\tilde{\bw}^0\mid\bx, \bw^t)$, defined with formulas $\mathbbone[\tilde{\w}^0_i=\w^0_i]$ and weight $\log p_\btheta(\tilde{\w}^0_i=\w^0_i\mid\bx, \bw^t)$. 
In particular, this means that $\tilde{\bw}^0$'s that violate constraints still have positive energy. 
However, the energy exponentially shrinks by a factor of $\frac{1}{\exp(\beta)}$ for each violated constraint. 
Since we use rather high values of $\beta>10$, the resampling step in \cref{alg:snis} is \emph{extremely} likely to pick the sample that violates the least number of constraints. 

\shortparagraph{Numerically stable implementation.} 
In practice, computing \cref{eq:relaxed_constraint} in \cref{line:relaxed_constraint} is numerically highly unstable if multiple constraints are violated. Then, the reward is equal to $\frac{1}{\exp(l\beta)}$, where $l$ is the number of violated constraints. 
PyTorch floats are roughly between $\exp(-104)$ and $\exp(88)$, meaning that for $\beta>10$, the reward underflows at $l=8$. 
First, we note that when sampling from the reweighted distribution in \cref{line:sample_snis}, probabilities are computed as the relative proportion of the rewards. 
Therefore, we can simply rescale all rewards by a constant factor to ensure they do not underflow or overflow. Particularly, we redefine the reward as 
\begin{align*}
    \label{eq:relaxed_constraint_numerical}
    r_{\beta, L, U}(\tilde{\bw}^0\mid\by^0)=\exp\left(-\max\left(\beta\sum_{i=1}^\ydim \mathbbone[\varphi(\tilde{\bw}^0)_i\neq \y^0_i]-L, U\right)\right).
\end{align*}
Here, $L>0$ acts as a scaling on the reward as $r_{\beta, L, U}(\tilde{\bw}^0\mid\by^0)=r_\beta(\tilde{\bw}^0\mid\by^0)\exp(L)$ if the $\max$ is not active. 
$U>0$ acts as a floor on the reward such that samples that violate many constraints still have non-zero probability, even if it is extraordinarily unlikely. 
We fix $U=100$ to remain in the floating point range. 
Note that this is ever so slightly biased as it will over-estimate the probability of the samples that violate the least number of constraints. 
However, this bias is very small as the probability of choosing these samples is extraordinarily low. 

We want to choose $L$ to maximise the range of the reward among the samples without overflowing. Within this range, the differences between the samples that violate the least number of constraints is most important: these are the samples we are most likely to choose.  
Our intuition is as follows: we set $L$ to the average number of violated constraints among the samples in \cref{line:snissamples}. However, if this would overflow the best sample, we instead set $L$ such that the best sample has a reward of $\exp(M)$, where $M=70$ to prevent overflow. Therefore, we choose
\begin{align}
    L = \min\left(\frac{\beta_t}{S}\sum_{k=1}^S \sum_{i=1}^\ydim \mathbbone[\varphi(\tilde{\bw}^0_k)_i\neq \y^0_i], M + \min_{k=1}^S \beta_t \sum_{i=1}^\ydim \mathbbone[\varphi(\tilde{\bw}^0_k)_i\neq \y^0_i]\right).
\end{align}

\section{Experimental details}
\label{appendix:experiments}
\methodshort is implemented in PyTorch. 
We used RAdam \citep{Liu2020On} for all experiments except for MNIST Addition, where we used Adam \citep{kingmaAdamMethodStochastic2017}. We did not compare these optimisers in detail, but we do not expect this choice to significantly affect the results. Furthermore, we used default momentum parameters for both optimisers. 
For all neural networks used to implement the unmasking model $p_\btheta(\tilde{\bw}^0\mid\bx, \bw^t)$, we did \emph{not} pass the current time step $t$ to the network, as previous work found minimal impact on performance for doing so (Appendix E.5 of \citep{sahoosimple}). 

For all experiments, we used GPU computing nodes, each with a single lower-end GPU. In particular, we used NVIDIA GeForce GTX 1080 Ti and GTX 2080 Ti GPUs. All our experiments were run with 12 CPU cores, although this was not the bottleneck in most experiments. 
On the GTX 1080 Ti, our experiments took between 1 and 17 hours, depending on the complexity of the task and number of epochs. 
The project required extra compute when testing different variations of the model, and by performing hyperparameter tuning. 
For repeating our runs and hyperparameter tuning, we further expect around 600 total GPU hours are needed. 

\subsection{Hyperparameter tuning}
\label{appendix:hyperparameters}

We list all hyperparameters in \cref{tab:hyperparameters}. 
We perform random search over the hyperparameters on the validation set of the benchmark tasks. 
For the random search, we used fixed ranges for each parameter, from which we sample log-uniformly. 
For the parameter $\beta$ we sampled uniformly instead. 
We used a budget of 30 random samples for each problem, although for some problems we needed more when we found the ranges chosen were poor. 

Several hyperparameters, namely the minibatch size, $\losssamples$, $\varsamples$, $\testdiscretization$ and $\testsamples$, are compute dependent, and we keep these fixed when tuning depending on the compute budget and the problem size. 
The hyperparameters we do tune are the learning rate, $\gamma_{\bw}$, $\gamma_{\text{H}}$, and $\beta$. 
We found that low values of $\gamma_\bw$ were usually fine, and that for large enough $\beta$ above 10, its value did not matter much. 
Therefore, the most important hyperparameters to tune are $\gamma_{\text{H}}$ and the learning rate, for which the optimal values varied between problems significantly. 
For an ablation study on the influence of the value of the loss weighting hyperparameters, see \cref{appendix:loss_weighting}.
\begin{table}[h]
    \centering
    \caption{All hyperparameters used in the experiments, and rough recommendations for some of their values. We recommend at least tuning learning rate, and $\gamma_{\text{H}}$ and $\gamma_{\bw}$ to some extent (leaving $\gamma_{\by}$ at $1$). Some hyperparameters are compute dependent, and higher is always better for reducing gradient estimation variance $(\losssamples, \varsamples, \testdiscretization)$ and majority voting quality $(\testsamples, \testdiscretization)$.}
    \label{tab:hyperparameters}
    \begin{tabular}{c|rlll}
        \toprule
        \textbf{Variable} & \textbf{Recommendation} & \textbf{Description} & \textbf{Range} & \textbf{Definition}  \\
        \midrule
        --- &  $(0.0001,0.0005)$ & Overall learning rate & $\mathbb{R}_{>0}$ & ---\\
        --- &  --- & Minibatch size & $\mathbb{N}$ & ---\\
        --- &  --- & Epochs & $\mathbb{N}$ & ---\\
        $\gamma_{\by}$ & 1 & Weight of {\color{orange}concept unmasking loss} & $\mathbb{R}_{\geq 0}$ & \cref{eq:gradient_estimate} \\
        $\gamma_{\bw}$ & $10^{-5}$ & Weight of {\color{teal} output unmasking loss} & $\mathbb{R}_{\geq 0}$ & \cref{eq:gradient_estimate} \\
        $\gamma_{\text{H}}$ & $(0.002, 2)$ & Weight of {\color{purple}variational entropy} & $\mathbb{R}_{\geq 0}$ & \cref{eq:gradient_estimate} \\
        $\beta$ & 10 & Penalty in soft constraint & $\mathbb{R}_{> 0}$ & \cref{sec:variational_posterior} \\
        $\losssamples$ & $\geq 4$ & Number of RLOO samples & $\mathbb{N}$ & \cref{eq:rloo_estimator}\\
        $\varsamples$ & $\geq 2$ & Number of SNIS samples for $q_\btheta$ & $\mathbb{N}$ & \cref{appendix:sampling_variational}\\
        $\testdiscretization$ & $\geq\sqrt{\wdim}$ & MDM discretisation steps & $\mathbb{N}$ & \cref{sec:inference} \\
        $\testsamples$ & $\geq 8$ & Number of majority voting samples & $\mathbb{N}$ & \cref{sec:inference}\\
        \bottomrule
    \end{tabular}
    
\end{table}

\subsection{MNIST Addition}
We use the LeNet architecture \citep{lecun1998gradient} for the neural network architecture as is standard in the NeSy literature \citep{manhaeveNeuralProbabilisticLogic2021}. 
As there are no dependencies between the digits in the data generation process, making the neural network conditional on partially unmasked outputs is not useful: merely predicting marginals is sufficient. Therefore, we ignore the conditioning on $\bw^t$ when computing $p_\btheta(\tilde{\bw}^0\mid\bw^t, \bx)$ in \cref{eq:output_unmasking_model}.

Since there is no standard dataset for multidigit MNIST addition, we use a generator defined as follows: for some dataset of MNIST images, we permute it randomly, then split it into $2N$ parts and stack them to obtain the different datapoints. This ensures we use each datapoint in the dataset exactly once, ending up in $\lfloor \frac{60000}{2N}\rfloor$ training datapoints. 

We tuned hyperparameters in accordance with \cref{appendix:hyperparameters}. 
Since MNIST has no separate validation dataset, we split the training dataset in a training dataset of 50.000 samples and a validation dataset 10.000 samples before creating the addition dataset. 
We tune with this split, then again train 10 times with the optimised parameters on the full training dataset of 60.000 samples for the reported test accuracy. 
We tune on $N=15$, and reuse the same hyperparameters for $N=2$ and $N=4$. 
For the number of epochs, we use 100 for $N=2$ and $N=4$ as an epoch is more expensive for smaller $N$ and because $N=15$ requires moving beyond a cold-start phase. 
We found all 10 runs moved past this phase within 100 epochs, but needed more time to converge after. 
\begin{table}[h]
    \centering
    \caption{Hyperparameters for MNIST Addition and Warcraft Path Planning.}
    \label{tab:mnist_addition_hyperparameters}
    \begin{tabular}{c|rr}
        \toprule
        \textbf{Variable} & \textbf{MNIST Addition} & \textbf{Path Planning}\\
        \midrule
        learning rate & $0.0003$ & $0.0005$ \\
        minibatch size & $16$ & $50$ \\
        epochs & $N=4$: $100$, $N=15$: $1000$ & $40$ \\
        $\gamma_{\bw}$ & $2\cdot 10^{-5}$ & $10^{-5}$ \\
        $\gamma_{\text{H}}$ & $0.01$ & $0.002$ \\
        $\gamma_{\by}$ & $1$ & $1$ \\
        $\beta$ & $20$ & $12$ \\
        $\losssamples$ & $1024$ & $12\times 12$: $16$, $30\times 30$: $4$ \\
        $\varsamples$ & $1024$ & $12\times 12$: $4$, $30\times 30$: $2$ \\
        $\testdiscretization$ & $8$ & $20$ \\
        $\testsamples$ & $8$ & $8$  \\
        \bottomrule
    \end{tabular}
\end{table}

\paragraph{Baselines.} For all methods, we take the numbers reported in the papers where possible. We obtained numbers for Scallop from the PLIA paper.  
For A-NeSI, we pick the best-scoring variant as reported, which is Predict for $N=4$ and Explain for $N=15$. 
For DeepSoftLog, A-NeSI and PLIA, we obtained performance on 10 individual runs from the authors to compute the Mann-Whitney U test.

\subsection{Visual Path Planning}
Following \cite{van2023nesi}, we use categorical costs for the Visual Path Planning task. We use $\vecdimC=5$, which corresponds to the possible cost values in the data, $\textsf{costs}=[0.8, 1.2, 5.3, 7.7, 9.2]$. 
Then, $\bw^0$ corresponds to an index of the cost of each grid cell. That is, $\w^0_{Ni+ j}\in \{1, ..., 5\}$ corresponds to the cost value $\textsf{costs}[\w^0_{Ni+j}]$ at grid cell $i, j$. 

We adapted the ResNet18-based architecture from \cite{niepert2021implicit} for the unmasking model $p(\tilde{\bw}^0\mid\bx, \bw^t)$ over grid costs. 
This architecture consists of a single convolutional layer to start encoding the image, with batch normalisation and adaptive max-pooling to a grid of size $N\times N$. 
After this, we have 64-dimensional embeddings for each grid cell. 
To condition on the currently unmasked values, we add embeddings of $\bw^t\in \{1, \dots, 5, \m\}^ {N^2}$ for each cell: we use six 64-dimensional embeddings $\be_1^C, \dots, \be_5^C, \be_\m^C$ for the different costs plus the mask value. 
Then we add these embeddings to the image embeddings cell-wise. That is, if $\be^I_{i, j}$ is the image embedding at cell $i, j$, then the new embedding is $\be^I_{i, j} + \be^C_{\w^t_{Ni + j}}$. 
After this, a ResNet layer containing two more convolutional layers follows. 
Finally, we use an output layer that takes the grid cell embeddings and predicts a distribution over the 5 possible costs. 

We performed hyperparameter tuning on the validation set of the $12\times 12$ grid size problem, then reused the same hyperparameters for the $30\times 30$ grid size problem. 
We only reduced the number of RLOO samples $\losssamples$ and the number of samples for the SNIS algorithm in \cref{appendix:sampling_variational} for the $30\times 30$ grid size problem to reduce the overhead of many calls to Dijkstra's algorithm. 
This algorithm quickly becomes the main compute bottleneck on large grids. 

For $12\times 12$, we evaluated test accuracy at 40 epochs, and for $30\times 30$ we evaluated validation accuracy every 5 epochs within the 40 epoch timeframe, choosing the best performing model for the test accuracy. 
We found that on $30\times 30$ the model was sometimes unstable, suddenly dropping in accuracy and recovering after a while. 
As is common in this task and our baselines, we consider a path prediction correct if the predicted path has the same cost as the gold-truth shortest path given. 
This is because shortest paths may not be unique. 

\paragraph{Baselines.} We take the numbers for EXAL as reported in the paper \cite{verreet2024explain}. 
For A-NeSI, I-MLE and A-NeSI + RLOO, we obtained performance on 10 individual runs from the authors to compute the Mann-Whitney U test. 

\subsection{RSBench}
For all experiments, we adapt the implementation of the benchmark in the RSBench repository \citep{bortolottineuro}.  
We use the conditional 1-step entropy discussed in \cref{sec:loss_optimisation}. 
For the MNIST experiments, we brute-force the conditional entropy computation, while for BDD-OIA, we adapt the inference procedure in \citep{bortolottineuro} to obtain the conditional entropy. 

\subsubsection{Metrics}
\label{appendix:rsbench_metrics}
For all tasks, we compute the Expected Calibration Error (ECE) over \emph{marginal} concept probabilities \citep{naeini2015obtaining} as a metric for calibration. 
Since \methodshort is not tractable, we have to estimate these marginals. 
Therefore, we use simple maximum-likelihood estimation to obtain approximate marginal probabilities for $p_\btheta(w_i\mid\bx)$ by sampling $\testsamples$ samples from the model and taking the empirical mean. 
We used $\testsamples=1000$ throughout to improve the accuracy of the ECE estimate. 

For the MNIST tasks, we report both the output accuracy $Acc_{\by}$ and concept accuracy $Acc_{\bw}$. 
In particular, for output accuracy, we compute exact match accuracy over the output predictions. 
For concept accuracy, we use \emph{micro-averaged} accuracy over the concept predictions (that is, the two digits). 
This requires \methodshort to output predictions for the digits separately. 
We tried two different majority voting strategies using the $\testsamples$ samples.  
1) Take the dimension-wise mode among the samples, or 2) take the most common complete concept vector $\bw$ and use the individual dimensions of $\bw$ as the predictions. 
We used the second strategy in \cref{tab:rsbench} to ensure the predictions can capture dependencies between digits, and compare the two methods in \cref{appendix:majority_voting} and \cref{tab:majority_voting_results_concept}. 

For BDD-OIA, we report macro-averaged F1 scores for both the output and concept prediction. For example, for the concept F1 score, we compute the F1 score for each concept separately, then take the unweighted mean of these F1 scores. 
Similarly, we computed macro-averaged ECE scores for concept prediction. 
For \methodshort, we computed marginal probabilities for concept and output predictions, that is, per dimension. 
Furthermore, we recomputed all metrics for the baselines reported in \cref{tab:rsbench}, as we found bugs in the code for both metrics in the RSBench and BEARS codebases. 
Note that $\text{PNP}^{\condind}$ was called DPL in the BEARS paper. We changed the name as 1) the baseline code did not actually use the DeepProbLog language, and 2) there are many different NeSy predictors that could be implemented in DeepProbLog, so it is not clear which one to compare to. 

\subsubsection{Why Expected Calibration Error for reasoning shortcut awareness?}
\label{appendix:rsbench_why_ece}
In this section, we motivate the use of concept calibration, in particular using the Expected Calibration Error (ECE), to empirically measure reasoning shortcut (RS) awareness. 
BEARS \citep{marconatoBEARSMakeNeuroSymbolic2024} introduced RS-awareness as attaining high accuracy on concepts unaffected by RSs, while being calibrated on concepts affected by RSs. 
In the latter case, perfect concept accuracy is unattainable, and we should aim for high calibration. 
A model that predicts concepts in such a way by mixing over RSs that cannot be disambiguated from data alone maximises data likelihood~\citep{pmlr-v284-krieken25a}. 

For example, consider the XOR problem from \cref{example:xor}, which has 1 RS that maps MNIST digits of 1s to 0s and MNIST digits of 0s to 1s. 
This RS cannot be distinguished from the ground-truth mapping. 
The ideal model under this ambiguity would assign 50\% confidence to the ground-truth mapping and 50\% to the RS.
We can achieve this with a neural network that given two distinct MNIST digits outputs a uniform distribution over (0, 1) and (1, 0), and given two equivalent MNIST digits, outputs a uniform distribution over (0, 0) and (1, 1). 
In the first case, the XOR function will return 1 with probability 1, and in the second case, it will return 0 with probability 1.

This attains maximum data likelihood as the model returns the correct output label. 
Furthermore, it always assigns 0.5 probability to 1 and 0. 
Its concept accuracy will also be around 0.5 in our synthetic setup. Therefore, the ECE will also be around 0, highlighting its calibration, and in practical experiments low ECE values are attainable \citep{pmlr-v284-krieken25a}. 
Instead, a non RS-aware method finding the RS will attain an ECE of around 1: it always predicts exactly the opposite of the correct concept. 
Since the non RS-aware method randomly finds the RS or the correct solution, the ECE will be around 0.5 on average \citep{pmlr-v284-krieken25a}.
For concepts that are not affected by RSs, the accuracy will be 1 and maximum likelihood will also attain high confidence, resulting also in a low ECE value.

That said, ECE is a proxy for measuring concept calibration, but is not necessarily the only or perfect metric for uncertainty quantification. 
A further theoretical study evaluating how to best measure RS-awareness could be of significant value.

\subsubsection{Hyperparameters}
\begin{table}[h]
    \centering
    \caption{Hyperparameters for RSBench.}
    \label{tab:rsbench_hyperparameters}
    \begin{tabular}{c|rr}
        \toprule
        \textbf{Variable} & \textbf{MNIST Half \& Even-Odd} & \textbf{BDD-OIA}\\
        \midrule
        learning rate & $0.00009$ & $0.0001$ \\
        minibatch size & $16$ & $256$ \\
        epochs & $500$ & $30$ \\
        $\gamma_{\bw}$ & $1.5\cdot 10^{-6}$ & $5\cdot 10^{-6}$ \\
        $\gamma_{\text{H}}$ & $1.6$ & $2.0$ \\
        $\gamma_{\by}$ & $1$ & $1$ \\
        $\beta$ & $10$ & $10$ \\
        $\losssamples$ & $1024$ & $1024$ \\
        $\varsamples$ & $1024$ & $1024$ \\
        $\testdiscretization$ & $8$ & $22$ \\
        $\testsamples$ & $1000$ & $1000$  \\
        \bottomrule
    \end{tabular}
\end{table}
For the datasets in RSBench, we tuned on the validation set for all parameters using the conditional entropy. 
Then, we ran an additional hyperparameter search on just the entropy weight to find the right trade-off between calibration and accuracy. 
We found the entropy weight can be sensitive, where high values significantly slow down training, while low values may result in uncalibrated models. 
See \cref{tab:entropy_weight} and \cref{appendix:loss_weighting} for an ablation study on the effect of the entropy weight.
For $\testsamples$, we use a much higher number of 1000 samples. This is to ensure the Expected Calibration Error is properly estimated (see \cref{appendix:rsbench_metrics} for details). 
For the runs with the unconditional entropy, we used the same hyperparameters and no additional hyperparameter search. 

\subsubsection{Experimental details and architectures}
\paragraph{MNIST Half and MNIST Even-Odd.} 
We adapted the original architecture from our baselines in \cite{marconatoBEARSMakeNeuroSymbolic2024}. 
For both experiments, we encode the two individual digits in $\bx$ with a convolutional neural network (ReLU activations) of 3 layers, with 32, 64 and 128 channels respectively. 
Then, we flatten the output, obtaining two embeddings $\be_1$ and $\be_2$. 
For predicting the unmasking distribution $p(\tilde{\w}^0_1\mid\bx, \bw^t)$ for the first digit, we concatenate one-hot encodings of $\w_1^t$, $\w_2^t$ and $\be_1$, while for predicting the distribution of the second digit $p(\tilde{\w}^0_2\mid\bx, \bw^t)$, we concatenate one-hot encodings of $\w_2^t$, $\w_1^t$ and $\be_2$. 
Note the order here: This is to ensure permutation equivariance, as the sum is a commutative operation. 
Therefore, like our baselines, we have a disentangled architecture that uses the same neural network to classify the two digits, while still incorporating the currently unmasked values. 
Finally, using the concatenated vector, we use a linear output layer and a softmax to obtain the distribution over the possible digits.

\paragraph{BDD-OIA.} 
We used early stopping by running for 30 epochs, testing on the validation set every epoch and picking the model with the highest validation accuracy. 
As in \cite{marconatoBEARSMakeNeuroSymbolic2024}, we used preprocessed embeddings of the dashcam images from a Faster-RCNN \citep{ren2015faster}. 
This Faster-RCNN was pre-trained on MS-COCO and fine-tuned on BDD-100k. 
These are provided in the RSBench dataset, and were also used for BEARS. 
For the unmasking model $p(\tilde{\bw}^0\mid\bx, \bw^t)$, we adapted the MLP from \cite{marconatoBEARSMakeNeuroSymbolic2024}, using a single hidden layer with a dimensionality of 512, by simply concatenating a one-hot encoding of $\bw^t \in \{0, 1, \m \}^{21}$ to the input embedding of $\bx$. 
Note that, since the concepts are binary, this one-hot encoding is a 3-dimensional vector, as it can be 0, 1, or the mask value $\m$. 

\textbf{Baselines.} We obtained results of the 5 individual runs used for each method in \cite{marconatoBEARSMakeNeuroSymbolic2024} and re-evaluated them to obtain 4 digits of precision for all reported results, as \cite{marconatoBEARSMakeNeuroSymbolic2024} only reported 2 digits of precision. 
Furthermore, we used these results to compute statistical significance tests. 
We have different results than reported in \cite{marconatoBEARSMakeNeuroSymbolic2024} for BDD-OIA as we found bugs in the code for the metrics. 

\section{Further ablation experiments}
\subsection{Other majority voting strategies}
\label{appendix:majority_voting}

As stated in the main text, computing the exact mode  $\argmax_{\by^0} p_\btheta^{\methodshort}(\by^0\mid\bx)$ is intractable in general, also for representations supporting tractable marginals \citep{vergari2019tractable,ahmed2025semantic}.
Throughout this paper, we used the majority voting strategy described in \cref{sec:inference}. 
However, when observing the results on MNIST-Even-Odd, 
one might be
puzzled by the relatively high performance on concept accuracy while the output accuracy is
low.
We hypothesised that this was due to our chosen majority voting strategy, and repeated the evaluation of the models using different strategies, which we describe here. All assume access to a set of samples $\bw^0_1, \dots, \bw^0_\testsamples\sim p_\btheta(\bw^0\mid\bx, \bw^1=\bm)$. 
\begin{itemize}
    \item \emph{Program-then-true-mode (PTM)}: The strategy described in \cref{eq:true-majority-voting}, and the main one used in this paper. We emphasise that this is the \say{correct} strategy according to the generative process of NeSy predictors. 
    \item \emph{Program-then-marginal-mode (PMM)}: Similar to above, we feed all sampled concepts into the program, but rather than taking the most likely output, we choose the most likely output dimension-wise:
    \begin{equation}
        \label{eq:program-then-marginal-mode}
        \hat{\y}_i = \argmax_{\y_i} \sum_{l=1}^\testsamples \mathbbone[\varphi(\bw^0_l)_i=\y_i]
    \end{equation}
    \item \emph{True-mode-then-program (TMP)}: Find the mode of the sampled concepts, then feed that into the program: 
    \begin{equation}
        \label{eq:true-mode-then-program}
        \hat{\by} = \varphi \left( \argmax_{\bw} \sum_{l=1}^\testsamples \mathbbone[\bw^0_{l}=\bw] \right)
    \end{equation}
    \item \emph{Marginal-mode-then-program (MMP)}: Compute the dimension-wise mode of the concepts $\hat{\w}_i$, combine them into a single concept $\hat{\bw}$, and feed that into the program: 
    \begin{equation}
        \label{eq:marginal-mode-then-program}
        \hat{\w}_i = \argmax_{\w_i} \sum_{l=1}^\testsamples \mathbbone[\w^0_{l, i}=\w_i], \quad \hat{\by} = \varphi(\hat{\bw})
    \end{equation}
\end{itemize}
\begin{table}[h]
    \centering
    \caption{Output accuracy, both in- and out-of-distribution, for different majority voting strategies on the MNIST-Half and MNIST-Even-Odd datasets.}
    \label{tab:majority_voting_results}
    \begin{tabular}{c|rrrr}
        \toprule
        \textbf{Strategy} & \textbf{Half, ID} & \textbf{Half, OOD} & \textbf{Even-Odd, ID} & \textbf{Even-Odd, OOD}\\
        \midrule    
        \multicolumn{5}{c}{\textbf{\methodshort, Conditional entropy}}\\
        \midrule 
        PTM & 99.12\small{$\pm$ 0.18} & 28.45\small{$\pm$ 0.90} & \bfseries 98.65\small{$\pm$ 0.31} & 0.02\small{$\pm$ 0.04}\\
        PMM & 99.12\small{$\pm$ 0.18} & 28.44\small{$\pm$ 0.91} & \bfseries 98.65\small{$\pm$ 0.31} & 0.02\small{$\pm$ 0.04} \\
        TMP & 98.87\small{$\pm$ 0.23} & 28.46\small{$\pm$ 0.91} & 97.94\small{$\pm$ 0.49} & 0.18\small{$\pm$ 0.14} \\
        MMP & 60.16\small{$\pm$ 4.77} &33.15\small{$\pm$ 1.40} & 25.14\small{$\pm$ 2.81} & \bfseries 5.39\small{$\pm$ 0.45} \\
        \midrule 
        \multicolumn{5}{c}{\textbf{\methodshort, Unconditional entropy}}\\
        \midrule 
        PTM & 99.12\small{$\pm$ 0.10} & 10.95\small{$\pm$ 0.05} & 97.52\small{$\pm$ 0.44} & 0.00\small{$\pm$ 0.00}\\
        PMM &  99.12\small{$\pm$ 0.10} & 10.95\small{$\pm$ 0.05} & 97.52\small{$\pm$ 0.44} & 0.00\small{$\pm$ 0.00} \\
        TMP & \bfseries 99.26\small{$\pm$ 0.26} & 15.71\small{$\pm$ 0.49} & 98.10\small{$\pm$ 0.37} & 0.02\small{$\pm$ 0.02} \\
        MMP & 79.42\small{$\pm$ 3.14} &\bfseries 44.11\small{$\pm$ 4.87} & 87.64\small{$\pm$ 0.37} & \bfseries 5.27\small{$\pm$ 0.52} \\
        \bottomrule
    \end{tabular}
\end{table}

We evaluated these strategies on the validation set of all benchmarks, and found that they all performed similar, or at most marginally worse than the PTM strategy used in this paper. 
However, we found exceptions in MNIST-Half and MNIST-Even-Odd, where MMP significantly outperforms the other strategies in the OOD setting, while significantly \emph{under}performing in the ID setting, as highlighted in \cref{tab:majority_voting_results}. 
This result holds for both \methodshort with the conditional entropy and the unconditional entropy.

ID performance of MMP takes a rather significant hit because there are strong statistical dependencies between the concepts in the construction of the ID datasets.
Especially the Even-Odd OOD dataset is rather adversarially constructed, as highlighted by the extremely low OOD performance of all methods. 
However, because \methodshort has relatively high concept accuracy OOD, using MMP still results in some correct outputs. 

We performed a similar analysis for the two strategies for predicting concepts in \cref{tab:majority_voting_results_concept}. 
Here we find that, overall, the true mode strategy usually performs better, except that we find a significant difference between TM and MM on the OOD dataset of MNIST-Half. 

\begin{table}[h]
    \centering
    \caption{Concept accuracy, both in- and out-of-distribution, for different majority voting strategies on the MNIST-Half and MNIST-Even-Odd datasets.}
    \label{tab:majority_voting_results_concept}
    \begin{tabular}{c|rrrr}
        \toprule
        \textbf{Strategy} & \textbf{Half, ID} & \textbf{Half, OOD} & \textbf{Even-Odd, ID} & \textbf{Even-Odd, OOD}\\
        \midrule    
        \multicolumn{5}{c}{\textbf{\methodshort, Conditional entropy}}\\
        \midrule 
        TM & 71.16\small{$\pm$ 1.77} & 61.84\small{$\pm$ 0.89} & \bfseries 20.33\small{$\pm$ 1.33} & \bfseries 15.60\small{$\pm$ 0.99}\\
        MM & 66.78\small{$\pm$ 2.84} & 62.76\small{$\pm$ 0.77} & \bfseries 19.65\small{$\pm$ 2.37} & 14.56\small{$\pm$ 0.86} \\
        \midrule 
        \multicolumn{5}{c}{\textbf{\methodshort, Unconditional entropy}}\\
        \midrule 
        TM & \bfseries 79.41\small{$\pm$ 6.58} & 57.22\small{$\pm$ 0.49} & 0.36\small{$\pm$ 0.39} & 4.65\small{$\pm$ 0.49}\\
        MM &  \bfseries 80.56\small{$\pm$ 5.12} & \bfseries 70.40\small{$\pm$ 2.71} & 0.39\small{$\pm$ 0.44} & 1.16\small{$\pm$ 0.44} \\
        \bottomrule
    \end{tabular}
\end{table}

\subsection{Effect of loss weighting hyperparameters}
\label{appendix:loss_weighting}
In this appendix, we investigate the effect of the loss weighting hyperparameters $\gamma_\bw$, $\gamma_{\text{H}}$ and $\gamma_{\by}$ on the performance of \methodshort. 
We experimented with the conditional entropy version of \methodshort on the MNIST-Half dataset with three repeated runs. 
Here, we kept the output unmasking weight $\gamma_{\by}=1$ and tuned the other two hyperparameters. 

\begin{table}
    \centering
    \caption{Effect of concept unmasking weight on label and concept accuracies (in- and out-of-distribution) and calibration (ECE) performance on MNIST-Half. Bold values indicate best results per column. We used 1e-06 in the experiments. }
    \label{tab:concept_unmasking_weight}
    \begin{tabular}{c|rrrrrr}
        \toprule
        $\gamma_\bc$ & $\text{Acc}_\by\uparrow$ & $\text{Acc}_\bw\uparrow$ & $\text{Acc}_{\by, \text{OOD}}\uparrow$ & $\text{Acc}_{\bw, \text{OOD}}\uparrow$ & $\text{ECE}_{\bw, \text{ID}}\downarrow$ & $\text{ECE}_{\bw, \text{OOD}}\downarrow$ \\
        \midrule
        1e-08 & 99.38\small{$\pm$ 0.27} & 70.45\small{$\pm$ 2.32} & 33.92\small{$\pm$ 5.10} & 65.13\small{$\pm$ 2.72} & 8.54\small{$\pm$ 4.46} & 12.23\small{$\pm$ 1.18} \\
        1e-07 & \bfseries 99.61\small{$\pm$ 0.13} & 71.41\small{$\pm$ 0.72} & \bfseries 37.16\small{$\pm$ 1.22} & \bfseries 67.74\small{$\pm$ 0.56} & 7.77\small{$\pm$ 1.09} & 10.67\small{$\pm$ 0.98} \\
        \bfseries 1e-06 & 99.54\small{$\pm$ 0.80} & \bfseries 72.88\small{$\pm$ 0.85} & 33.21\small{$\pm$ 7.06} & 64.86\small{$\pm$ 3.92} & 5.31\small{$\pm$ 1.19} & 11.00\small{$\pm$ 0.87} \\
        1.5e-06 & 99.12\small{$\pm$ 0.10} & 71.16\small{$\pm$ 1.77} & 28.44\small{$\pm$ 0.90} & 62.76\small{$\pm$ 0.89} & 4.18\small{$\pm$ 2.56} & 11.74\small{$\pm$ 1.18} \\
        1e-05 & 99.00\small{$\pm$ 0.53} & 69.79\small{$\pm$ 3.09} & 29.72\small{$\pm$ 3.00} & 63.10\small{$\pm$ 2.11} & 6.33\small{$\pm$ 3.87} & 11.39\small{$\pm$ 1.76} \\
        0.0001 & 99.23\small{$\pm$ 0.13} & 72.07\small{$\pm$ 0.77} & 36.46\small{$\pm$ 6.60} & 66.84\small{$\pm$ 4.04} & 4.90\small{$\pm$ 0.69} & \bfseries 10.22\small{$\pm$ 1.96} \\
        0.001 & \bfseries 99.61\small{$\pm$ 0.13} & 72.03\small{$\pm$ 0.84} & 32.75\small{$\pm$ 6.51} & 64.60\small{$\pm$ 4.44} & 4.98\small{$\pm$ 3.15} & 10.46\small{$\pm$ 2.86} \\
        0.01 & 99.15\small{$\pm$ 0.48} & 71.60\small{$\pm$ 0.35} & 31.99\small{$\pm$ 7.80} & 63.57\small{$\pm$ 5.41} & \bfseries 3.80\small{$\pm$ 1.52} & 10.81\small{$\pm$ 1.07} \\
        0.1 & 99.46\small{$\pm$ 0.35} & 71.88\small{$\pm$ 0.81} & 36.76\small{$\pm$ 8.27} & 66.85\small{$\pm$ 5.05} & 6.96\small{$\pm$ 2.35} & 12.35\small{$\pm$ 4.01} \\
        1.0 & 98.84\small{$\pm$ 0.61} & 41.55\small{$\pm$ 0.12} & 5.70\small{$\pm$ 0.24} & 38.65\small{$\pm$ 0.14} & 56.87\small{$\pm$ 0.25} & 61.09\small{$\pm$ 0.11} \\
        10.0 & 99.38\small{$\pm$ 0.13} & 41.59\small{$\pm$ 0.13} & 5.64\small{$\pm$ 0.19} & 38.59\small{$\pm$ 0.19} & 57.00\small{$\pm$ 0.23} & 60.97\small{$\pm$ 0.08} \\
        \bottomrule
    \end{tabular}
\end{table}

For the concept unmasking weight $\gamma_{\bc}$ in \cref{tab:concept_unmasking_weight}, we find that all tested values achieve high label accuracy. 
However, its value significantly influences the concept accuracy both in and out of distribution, and the OOD label accuracy. 
We observe values below 1 are effective. 
We suspect the concept unmasking loss can provide the model information that it is useful, but it should not dominate the loss, as this results in significantly lower concept accuracy and OOD performance. 
Furthermore, it results in poor calibration, suggesting that the model converged onto a single reasoning shortcut.

\begin{table}
    \centering
    \caption{Effect of entropy weight on label and concept accuracies (in- and out-of-distribution) and calibration (ECE) performance on MNIST-Half. Bold values indicate best results per column. We used 1.6 in the experiments.}
    \label{tab:entropy_weight}
    \begin{tabular}{c|rrrrrr}
        \toprule
        $\gamma_{\text{H}}$ & $\text{Acc}_\by\uparrow$ & $\text{Acc}_\bw\uparrow$ & $\text{Acc}_{\by, \text{OOD}}\uparrow$ & $\text{Acc}_{\bw, \text{OOD}}\uparrow$ & $\text{ECE}_{\bw, \text{ID}}\downarrow$ & $\text{ECE}_{\bw, \text{OOD}}\downarrow$ \\
        \midrule
        0.001 & 99.15\small{$\pm$ 0.13} & 41.63\small{$\pm$ 0.07} & 5.61\small{$\pm$ 0.16} & 38.63\small{$\pm$ 0.03} & 57.05\small{$\pm$ 0.12} & 61.08\small{$\pm$ 0.16} \\
        0.01 & 99.00\small{$\pm$ 0.35} & 41.74\small{$\pm$ 0.18} & 5.79\small{$\pm$ 0.18} & 38.66\small{$\pm$ 0.03} & 56.80\small{$\pm$ 0.21} & 60.91\small{$\pm$ 0.11} \\
        0.1 & 98.77\small{$\pm$ 0.48} & 41.67\small{$\pm$ 0.12} & 5.61\small{$\pm$ 0.32} & 38.60\small{$\pm$ 0.23} & 56.98\small{$\pm$ 0.26} & 60.97\small{$\pm$ 0.09} \\
        0.5 & 99.31\small{$\pm$ 0.23} & 41.63\small{$\pm$ 0.07} & 5.58\small{$\pm$ 0.14} & 38.59\small{$\pm$ 0.15} & 56.94\small{$\pm$ 0.09} & 61.09\small{$\pm$ 0.01} \\
        1.0 & 99.23\small{$\pm$ 0.13} & 41.63\small{$\pm$ 0.13} & 5.85\small{$\pm$ 0.14} & 38.73\small{$\pm$ 0.12} & 56.90\small{$\pm$ 0.25} & 61.06\small{$\pm$ 0.09} \\
        1.3 & 99.77\small{$\pm$ 0.23} & 67.05\small{$\pm$ 1.99} & 35.51\small{$\pm$ 2.78} & 65.79\small{$\pm$ 1.79} & 9.33\small{$\pm$ 1.74} & 12.26\small{$\pm$ 1.31} \\
        \bfseries 1.6 & 99.12\small{$\pm$ 0.10} & \bfseries 71.16\small{$\pm$ 1.77} & 28.44\small{$\pm$ 0.90} & 62.76\small{$\pm$ 0.89} & \bfseries 4.18\small{$\pm$ 2.56} & 11.74\small{$\pm$ 1.18} \\
        2.0 & 99.61\small{$\pm$ 0.13} & 72.26\small{$\pm$ 0.41} & 29.35\small{$\pm$ 1.73} & 62.19\small{$\pm$ 1.03} & 3.79\small{$\pm$ 0.48} & 11.42\small{$\pm$ 0.61} \\
        3.0 & 89.81\small{$\pm$ 5.84} & 46.53\small{$\pm$ 2.50} & 17.03\small{$\pm$ 8.97} & 51.23\small{$\pm$ 7.03} & 12.70\small{$\pm$ 5.87} & 14.56\small{$\pm$ 2.71} \\
        5.0 & 85.73\small{$\pm$ 1.94} & 39.74\small{$\pm$ 1.33} & 11.24\small{$\pm$ 0.14} & 44.04\small{$\pm$ 1.39} & 12.56\small{$\pm$ 1.33} & 24.22\small{$\pm$ 2.12} \\
        10.0 & 85.73\small{$\pm$ 0.35} & 34.22\small{$\pm$ 1.27} & 10.81\small{$\pm$ 0.19} & 41.25\small{$\pm$ 1.03} & 15.40\small{$\pm$ 2.25} & 24.01\small{$\pm$ 1.56} \\
        \bottomrule
    \end{tabular}
\end{table}

We observe a significant influence of the entropy weight $\gamma_{\text{H}}$ in \cref{tab:entropy_weight}. 
All values below 1.3 seem to converge on a single reasoning shortcut, exhibiting poor calibration and worse concept accuracy. 
These runs do not balance the maximisation of entropy as the weight is too low, resulting in models with low entropy. 
By increasing the entropy weight beyond the value we used (1.6), we find that the entropy loss can also impact the performance when above 2.0, resulting in reduced label and concept accuracy and calibration. 
As the optimal range of values is quite tight, we recommend tuning the entropy weight when using \methodshort.

\end{document}